\documentclass{article}

\usepackage{microtype}
\usepackage{amssymb,amsmath, amsthm}
\usepackage[pdftex]{graphicx}
\usepackage{subfigure}
\usepackage{caption}
\usepackage{wrapfig, blindtext}
\usepackage{booktabs} 
\usepackage{amssymb}
\usepackage{framed}
\usepackage{mathrsfs}
\usepackage{multirow}
\usepackage{enumerate,pifont,bigdelim,bbding}
\usepackage{wrapfig}
\usepackage{color}
\usepackage{bbm}

\usepackage[hidelinks]{hyperref}
\usepackage[toc,page,header]{appendix}
\usepackage{minitoc}

\newcommand{\normm}[1]{{\vert\kern-0.25ex\vert\kern-0.25ex\vert #1 \vert\kern-0.25ex\vert\kern-0.25ex\vert}}



\usepackage{hyperref}
\usepackage{bm}
\usepackage{algorithm,algorithmic,multirow,hhline}
\usepackage{enumitem}

\input{mysymbol.sty}
\input{kz_style.sty}

\usepackage{mathtools}

\usepackage{jmlr2e} 


\makeatother

\begin{document}
\doparttoc 
\faketableofcontents 

	\jmlrheading{1}{2022}{pp}{mm/dd}{mm/dd}{Amrit Singh Bedi, Anjaly Parayil, Junyu Zhang,   Mengdi Wang, and Alec Koppel}
	
	
	\ShortHeadings{Heavy-tailed Policy Search in Continuous Control}{Bedi, Parayil, Zhang,  Wang, Koppel}
	\firstpageno{1}
	
	\title{On the Sample Complexity and Metastability of Heavy-tailed Policy Search in Continuous Control}

	\author{%
		\name Amrit Singh Bedi* \email amritbd@umd.edu\\
		\addr 	University of Maryland,\\
		College Park, MD, USA
		\AND
		\name Anjaly Parayil* \email panjaly05@gmail.com\\
		\addr Microsoft Research India 
		\AND
				\name 	Junyu Zhang  \email junyuz@nus.edu.sg  \\
		\addr 	Department of Industrial Systems Engineering and Management \\
		National University of Singapore\\
		Singapore, 119077
		\AND
		\name Mengdi Wang  \email mengdiw@princeton.edu  \\
	\addr 	Department of Electrical Engineering \\
		Center for Statistics and Machine Learning\\
		Princeton University/Deepmind, 
		Princeton, NJ 08544
		\AND	
			\name Alec Koppel$\dagger$ \email alec.koppel@jpmchase.com\\
		\addr JP Morgan AI Research\\ 383 Madison Ave, New York, NY 10017 
	}
	\editor{}
		\thanks{Equal contributions.}
	\thanks{This work is supported by Northrup Grumman Seed Grant. Alec Koppel completed this work in his previous position.}
	\maketitle

	
	

\begin{abstract}
	Reinforcement learning is a framework for interactive decision-making with incentives sequentially revealed across time without a system dynamics model. Due to its scaling to continuous spaces, we focus on policy search where one iteratively improves a parameterized policy with stochastic policy gradient (PG) updates. In tabular Markov Decision Problems (MDPs), under \emph{persistent exploration} and suitable parameterization, global optimality may be obtained. By contrast, in continuous space, the non-convexity poses a pathological challenge as evidenced by existing convergence results being mostly limited to stationarity or arbitrary local extrema.
	To close this gap, we step towards persistent exploration in continuous space through policy parameterizations defined by distributions of heavier tails defined by tail-index parameter $\alpha$, which increases the likelihood of jumping in state space. Doing so invalidates smoothness conditions of the score function common to PG. Thus, we establish how the convergence rate to stationarity depends on the policy's tail index $\alpha$, a H\"{o}lder continuity parameter, integrability conditions, and an exploration tolerance parameter introduced here for the first time.
	Further, we characterize the dependence of the set of local maxima on the tail index through an exit and transition time analysis of a suitably defined Markov chain, identifying that policies associated with L\'{e}vy Processes of a heavier tail converge to wider peaks. This phenomenon yields improved stability to perturbations in supervised learning, which we corroborate also manifests in improved performance of policy search, especially when myopic and farsighted incentives are misaligned.
\end{abstract}
%

\section{Introduction}
In reinforcement learning (RL), an autonomous agent sequentially interacts with its environment and observes rewards incrementally across time \citep{sutton2017reinforcement}. This framework has gained attention in recent years for its successes in continuous control \citep{schulman2015high,lillicrap2016continuous}, web services \citep{zou2019reinforcement}, personalized medicine \citep{kosorok2015adaptive}, among other contexts. Mathematically, it may be described by a Markov Decision Process (MDP) \citep{puterman2014markov}, in which an agent seeks to select actions so as to maximize the long-term accumulation of rewards, known as the value. The key distinguishing point of RL from classical optimal control is its ability to discern control policies without a system dynamics model.

Algorithms for RL either operate by Monte Carlo tree search \citep{guo2014deep}, approximately solve Bellman's equations \citep{bellman57a,watkins1992q}, or conduct direct policy search \citep{williams1992simple}. While the first two approaches may have lower variance and converge faster \citep{even2003learning,devraj2017zap}, they typically require representing a tree or $Q$-function for every state-action pair, which is intractable in continuous space. For this reason, we focus on PG. 

Policy search hinges upon the Policy Gradient Theorem \citep{sutton2000policy}, which expresses the gradient of the value function with respect to policy parameters as the expected value of the product of the score function of the policy and its associated $Q$ function. Its performance has historically been understood only asymptotically \citep{konda1999actor,konda2000actor,bhatnagar2009natural} via tools from dynamical systems \citep{kushner2003stochastic,borkar2008stochastic}. 
More recently, the non-asymptotic behavior of policy search has come to the fore. In continuous space, its finite-time performance has been linked to stochastic search over non-convex objectives \citep{bottou2018optimization}, whose $\mathcal{O}(1/\sqrt{k})$ convergence rate to stationarity is now clear \citep{bhatt2019policy,zhang2020global}. However, it is challenging to discern the quality of a given limit point under this paradigm. 

By contrast, for tabular MDPs, i.e., those with state and action spaces defined by finite discrete sets, stronger results have appeared \citep{bhandari2019global,zhang2020sample,agarwal2020optimality}: linear convergence to \emph{global} optimality for tabular or softmax parameterizations.  
A critical enabler of these recent innovations in finite MDPs is a persistent exploration condition: the initial distribution over the states is uniformly lower bounded away from null, under which the current policy may be shown to assign strictly positive likelihood to the optimal action over the entire state space \citep{mei2020global}[Lemma 9]. {This concept of exploration is categorically different from notions common to bandits, i.e., optimism in the face of uncertainty \citep{thompson1933likelihood,lai1985asymptotically,jaksch2010near,russo2018learning}, and instead echoes persistence of excitation in systems identification \citep{narendra1987persistent,narendra2012stable}.}
Under this condition, then, a version of gradient dominance \citep{luo1993error} (known also as Polyak-\L ojasiewicz inequality \citep{lojasiewicz1963topological,polyak1963gradient}) holds, as derived in \citep{agarwal2019theory,mei2020global,mei2020escaping}. This result enables such global improvement bounds.
 Unfortunately, translating this condition to continuous space is elusive, as many common distributions in continuous space may fail to be integrable if their likelihood is lower bounded away from null over the entire (not necessarily compact) state space. Thus, the following question is our focus:

\emph{Can one nearly satisfy persistent exploration in MDPs over continuous spaces through appropriate policy parameterizations, and in doing so, mitigate the pathologies of non-convexity?} 

In this work, we step towards an answer by studying policy parameterizations defined by heavy-tailed distributions \citep{bryson1974heavy,focardi2003fat}, which includes the family of L\'{e}vy Processes common to fractal geometry \citep{hutchinson1981fractals,mandelbrot1982fractal}, finance \citep{taleb2007black,taylor2009black}, pattern formation in nature \citep{avnir1998geometry}, and networked systems \citep{clauset2009power}.
By employing a heavy-tailed policy, the induced transition dynamics will be heavy-tailed, and hence at increased likelihood of jumping to non-adjacent states.  That policies \citep{chou2017improving,kobayashi2019student} or stochastic policy gradient estimates \citep{garg2021proximal} associated with heavy-tailed distributions exhibit improved coverage of continuous space is well-documented experimentally. Here we seek a more rigorous understanding of in what sense this impacts performance may be formalized through \emph{metastability}, the study of how a stochastic process transitions between its equilibria. This marks a step towards persistent exploration in continuous space, but satisfying it precisely remains beyond our grasp. 

Historically, heavy-tailed distributions have been recently employed in non-convex optimization to perturb stochastic gradient updates by $\alpha$-stable L\'{e}vy noise \citep{gurbuzbalaban2020heavy,simsekli2020fractional}, inspired by earlier approaches where instead Gaussian noise perturbations are used \citep{pemantle1990nonconvergence,gelfand1991recursive}. Doing so has notably been shown to yield improved 
 stability to perturbations in parameter space since SGD perturbed by heavy-tailed noise can converge to local extrema with more volume, which in supervised learning is experimentally associated with improved generalization \citep{neyshabur2017exploring,zhu2018anisotropic,advani2020high}, and has given rise to a nascent generalization theory based on the tail index of the parameter estimate's limiting distribution \citep{NEURIPS2020_37693cfc,simsekli2020fractional}. 
Rather than perturbing stochastic gradient updates, we directly parameterize policies as heavy-tailed distributions, which induces heavy-tailed gradient noise. Doing so invalidates several aspects of existing analyses of PG in continuous spaces \citep{bhatt2019policy,zhang2020global}.
 Thus, our main results are:
 
 \begin{itemize}
    
  \item We present a few heavy-tailed policy parameterizations that may be used in lieu of a Gaussian policy for continuous space, which can prioritize selecting actions far from the distribution's center (Sec. \ref{sec:prob}), and discuss how policy search manifests for this setting (Sec. \ref{section:algorithm}); 
  
\item   We establish the attenuation rate of the expected gradient norm of the value function when the score function is H\"{o}lder continuous, and may be unbounded but whose moment is integrable with respect to the policy (Theorem \ref{Thm1}). This statement generalizes previous results that break for non-compact spaces \citep{bhatt2019policy,zhang2020global}, and further requires introducing an exploration tolerance parameter (Definition \ref{def:exploration_tolerance}) to quantify the subset of the action space where the score function is absolutely bounded;
 
\item In sec. \ref{subsec:exit_time}, by rewriting the PG under a heavy-tailed policy as a discretization of a L\'{e}vy Process, we establish that the time required to exit a (possibly spurious) local extrema decreases polynomially with heavier tails (smaller $\alpha$), and the width of a peak's neighborhood (Theorem \ref{Theorem:Exit_time}). Further, the proportion of time required to transition from one local extrema to another depends polynomially on its width, which decreases for smaller tail index (Theorem \ref{theorem: transitiontym}). By contrast, lighter-tailed policies exhibit transition times depending exponentially on the volume of an extrema's neighborhood;
 
\item Experimentally, we observe that policies associated with heavy-tailed distributions converge more quickly in problems that are afflicted with multiple spurious stationary points, which are especially common when myopic and farsighted incentives are in conflict with one another (Sec. \ref{sec:simulations}). 

 \end{itemize}
 
 \section{Additional Context and Related Work} 
 Efforts to circumvent the necessity of persistent exploration and obtain rates to global optimality have been considered in both finite and continuous space. In tabular settings, one may incorporate proximal-style updates in order to leverage a performance-difference lemma \citep{kakade2002approximately}, which has given rise to recent analyses of natural policy gradient \citep{schulman2017proximal,tomar2020mirror,lan2021policy}. Translating these results to the continuum remains an open problem. 
 
 Alternatively, in continuous space, one may hypothesize the policy parameterization is a neural network whose size grows unbounded with the number of samples processed \citep{wang2019neural,liu2019neural}. Doing so belies the fact that typically a parameterization has fixed dimension during training. Alternatively, one may impose a ``transferred compatible function approximation error" condition that mandates the ability to sample from the occupancy measure of the optimal policy to ensure sufficient state space coverage \citep{agarwal2019theory,liu2020improved}, which is difficult to perform in practice. 
 
 Two additional lines of effort are pertinent to the objective of this work. The first is state aggregation \citep{michael1995reinforcement}, in which one hypothesizes a large but finite space admits a representation in terms of low-dimensional features, such as tile coding \citep{sutton2017reinforcement} or interpolators \citep{tsitsiklis1996feature}. A long history of works seeks to discern such state aggregations adaptively \citep{bertsekas1989adaptive,michael1995reinforcement,dean1997model,jiang2015abstraction,duan2019state,misra2020kinematic}.
 
 Such representations can be used in, e.g., policy search \citep{agarwal2020pc,russo2020approximation} or value iteration \citep{arumugam2020randomized} to obtain refined convergence behavior that depends only on the properties of the representation rather than the underlying state or action spaces. Finding this representation is itself not necessarily easier than solving the original MDP, however. See, for instance, \citep{agarwal2020flambe,modi2021model}, where a variety of structural assumptions and representations are discussed. In this work, we assume such a feature map is fixed at the outset of training as part of one's specification of a policy parameterization.
 
 The other research thrust broadly related to this work is information-theoretic exploration that seeks comprehensive state-space coverage. The simplest way to achieve this goal is to simply replace the cumulative return with an objective that prioritizes state-space coverage, such as the entropy of the occupancy measure induced by a policy \citep{savas2018entropy,hazan2019provably,zhang2020variational}. This goal does not necessarily result in good performance with respect to the cumulative return, however. Alternatively, exploration bonuses in the form of upper-confidence bound \citep{lai1985asymptotically,jaksch2010near}, Thompson sampling \citep{thompson1933likelihood}, information-directed sampling \citep{russo2018learning}, among other strategies (see \citep{russo2018tutorial} for a thorough review), have percolated into RL in various forms.
 
 For instance, incorporating randomized perturbations/exploration bonuses into value iteration \citep{osband2016deep}, Q-learning \citep{jin2018q}, or augmenting a policy's variance hyper-parameters in policy search in a manner reminiscent of line-search for step-size selection \citep{papini2020balancing}. Alternative approaches based on Thompson sampling \citep{gopalan2015thompson,osband2017posterior} and various Bayesian models of the value function \citep{bellemare2017distributional,azizzadenesheli2018efficient} have been considered, as well as approaches which subsume exploration goals into the choice of the aforementioned state aggregator  \citep{agarwal2020pc,modi2021model}. Our approach contrasts with approaches that inject suitably scaled randomness into an RL update, by searching over a policy class that is itself more inherently random.

\blue{\textit{Notations}: All the norms $\|\cdot\|$ are Euclidean norm unless otherwise stated. }


\section{Markov Decision Problems}\label{sec:prob}

In RL, an agent evolves through states $s\in \mathcal{S}$ selecting actions $a\in\mathcal{A}$, which causes transitions to another state $s'$ to occur according to a Markov transition density $\mathbb{P}(s'|s,a)$ and a reward $r(s,a)$ is revealed by the environment to inform its merit. Formally, an MDP \citep{puterman2014markov} consists of the tuple $(\mathcal{S}, \, \mathcal{A}, \, \mathbb{P},\,r,\, \gamma )$, where continuous state $\mathcal{S} \subseteq \mathbb{R}^{q}$  and action spaces may be unbounded, i.e., Euclidean space in the appropriate dimension. We hypothesize that actions $a_t\sim \pi(\cdot |s_t)$ are selected according to a time-invariant distribution $\pi(a|s):= \mathbb{P}({a}_t=a| s_t=s )$ called a policy determining the probability of action $a$ when in state $s$.   Define the value as the average long-term accumulation of reward \citep{bertsekas2004stochastic}:
\begin{align}\label{eq:value_func}
V^{\pi}(s) = \mathbb{E} \bigg[ \sum_{t=0}^{\infty} \gamma^{t} r(s_t,a_t) \mid s_0=s, \pi  \bigg]. 
\end{align}
Moreover, $\gamma$ is a discount factor that trades off the future relative to the present, $s_0$ denotes the initial state along trajectory $\{s_u,a_u,r_u\}_{u=0}^\infty$, and we abbreviate the instantaneous reward as $r_t=r(s_t,a_t)$. 
In \eqref{eq:value_func}, the expectation is with respect to randomized policy $a_t \sim \pi(\cdot |s_t)$ and state transition dynamics $s_{t+1} \sim \mathbb{P}(.|s_t,a_t)$ over times $t\geq 0$. We further define the action-value (known also as $Q$) function $Q^{\pi}(s,a)$ as the value conditioning on an initially selected action:
$ Q^{\pi}(s,a)  =    \mathbb{E} \left[ \sum_{t=0}^{\infty} \gamma^{t}r_{t} \mid s_0=s, a_0=a, \pi  \right]$. 
We focus on policy search over policies $ \pi_{{\bbtheta}}(\cdot |s_t)$ parameterized by a vector ${\bbtheta} \in \mathbb{R}^d$, which we estimate via maximizing the expected cumulative returns \citep{sutton2017reinforcement}:
\begin{align}\label{eq:main_prob}
\max_{{\bbtheta}} J({\bbtheta}) := V^{{\pi}_{{\bbtheta}}}(s_0)
\end{align}
One difficulty in RL is that \eqref{eq:main_prob} is non-convex in parameters ${\bbtheta}$. Thus, finding a global optimizer is challenging even if the problem were deterministic. However, in the present context, the search procedure also interacts with the transition dynamics $\mathbb{P}(s'|s,a)$. Before delving into how one may iteratively and approximately solve \eqref{eq:main_prob}, we present a few representative policy  parameterizations.
\begin{example}[\textbf{Gaussian policy}]\normalfont \label{eg:gaussian_fixed_var}
	The Gaussian policy is written as 
	%
		%
		$\pi_{{\bbtheta}}(a|s) = \mathcal{N} (a|\phi(s)^\top {\bbx}, e^{y})\;$,
		%
	%
	where the parameters ${\bbtheta}=[\bbx,y]$ determine the mean (centering) of a Gaussian distribution at $\phi(s)^\top {\bbx}$, and $e^{y} \geq\delta_{0}$ is the variance parametrized by $y$ for some $\delta_0>0$. Here, $\phi(s)$ represents a feature map with $\|\phi(s)\|\leq S<\infty$ which maps continuous state $s$ to a higher-dimension, i.e., $\phi: \mathcal{S} \rightarrow \mathbb{R}^d$.
\end{example}

\begin{example}[\textbf{Moderate-tailed policy}]\label{eg:light_tailed}
A distribution whose tail decays at a slower rate than the Gaussian may better explore environment, which we define as 
$\pi_{{\bbtheta}}(a|s) =\frac{1}{\sigma A_\alpha}\exp\left(-{|a-\phi(s)^T\bbtheta|^\alpha}/{\sigma^\alpha}\right)$
%
with normalizing constant $A_\alpha:=\int\exp\left(-{|x|^\alpha}\right)dx <\infty$, tail index  $\alpha\in[1,2]$ determining the likelihood of tail events, and scale parameter $\sigma>0$.
\end{example}
We next introduce heavy-tailed policies, specifically,  L\'{e}vy processes called $\alpha$-stable distributions, which are historically associated with fractal geometry \citep{hutchinson1981fractals}, finance \citep{focardi2003fat}, and network science \citep{barabasi2003emergence}. 
\begin{example}[\textbf{L\'{e}vy Process Policy}]\normalfont \label{eg:alpha_stable}
	%
	Symmetric $\alpha$ stable, $\mathcal{S}\alpha\mathcal{S}$ distributions generalize Gaussians with $\alpha \in  (0, 2]$  as the tail index determining the decay rate of the distribution's tail  \citep{nguyen2019first}.  Denote random variable $\textbf{X} \sim \mathcal{S}\alpha\mathcal{S} (\sigma)$ with associated characteristic function $\mathbb{E}\left[ e^{i\omega \textbf{X}}\right]  = e^{-\sigma|\omega|^{\alpha}}$ and scale parameter $\sigma  \in (0, \infty) $. 	For non-integer (fractional) value of  $\alpha$, there is no closed form expression but the density decays at a rate $1/|a|^{1+\alpha}$,  and is referred to as fractal \citep{mandelbrot1982fractal}. In finance, $\mathcal{S}\alpha\mathcal{S}$ distributions have been associated with "black swan" events \citep{taleb2007black,taylor2009black}. For $\alpha=2$, it reduces to a Gaussian, and for  
	$\alpha=1$ it is a Cauchy whose parametric form is:
$		\pi_{{\bbtheta}}(a|s) = \frac{1}{e^y \pi (1+((a-\phi(s)^\top {\bbx})/e^y)^2) }$,
	where, ${\bbtheta}=[\bbx,y]$, $\phi(s)^\top {\bbx}$ is the mode of the distribution and $e^y$ is the scaling parameter. 
	%
	
\end{example}

With a few policy choices of detailed, we delve into their relative merits and drawbacks. Intuitively, policies that select actions far from a learned mean parameter over actions may better explore the space, which exhibits outsize importance when near and long-term incentives of the MDP are misaligned \citep{misra2020kinematic}. More formally, persistent exploration has been identified in \emph{tabular} MDPs as key to the ability to converge to the optimal policy using first-order methods \citep{agarwal2019theory,mei2020global,mei2020escaping} and avoid spurious behavior. Persistent exploration formally ensures that under any initial distribution over $s_0$ in \eqref{eq:value_func}, the {current policy} assigns strictly positive likelihood to the optimal action over the entire state space \citep{mei2020global}[Lemma 9], under which a version of gradient dominance (akin to strong convexity) holds (Lemma 8).  Interestingly, these results echo classical persistence of excitation in systems identification \citep{narendra1987persistent,narendra2012stable}. 
%
The stumbling block in translating these conditions from finite to continuous spaces is that many common distributions over {unbounded} continuous space may fail to be integrable if their likelihood is lower bounded away from null.
 As a step towards satisfying this condition, we seek to ensure that the induced transition dynamics under a policy are heavy-tailed, which increases the likelihood of jumping to cover more of the state space. Doing so may be accomplished by specifying a heavy-tailed policy (Example \ref{eg:light_tailed} - 
\ref{eg:alpha_stable}), whose likelihood approaches null slowly while still defining a valid distribution. 
That continuous space necessitates exploration to eventuate in suitable behavior may be illuminated through the Pathological Mountain Car (PMC) (cf. Fig. \ref{fig:env2}) introduced next,  where a car is between two mountains. 
%
  
{\bf \noindent Pathological Mountain Car.}
 The environment consists of two goal posts, a less-rewarding goal at $ s =2.667$ with a reward of $10$ and a bonanza at $s =-4.0$ of $500$ units of reward.\begin{figure}[t]
 	\centering
 	\includegraphics[scale=0.2]{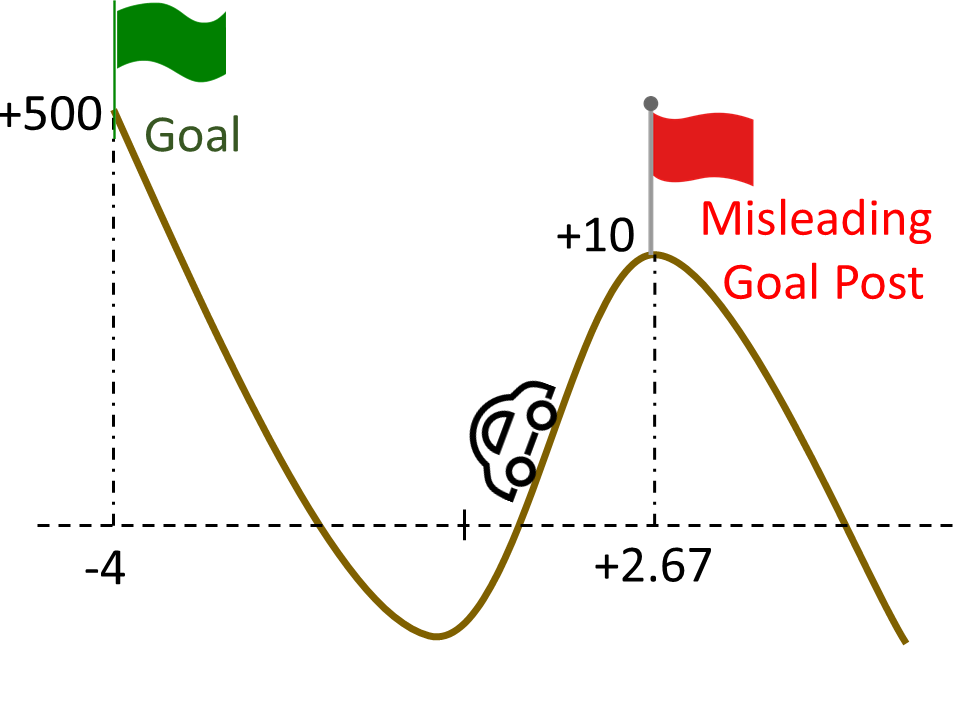}
 	\caption{ A continuous pathological Mountain Car environment with a low reward state and a bonanza atop a higher hill. Policies that do not incentivize exploration get stuck at the misleading goal.} \label{fig:env2}
 \end{figure} In Fig. \ref{fig:env2}, it is possible to get stuck at the lower peak and never reach the jackpot without sufficient exploration. Its potential pitfalls are illuminated experimentally in Sec. \ref{sec:simulations}. 

With the motivation clarified, we shift to illuminating that heavy-tailed policies, while encouraging actions far from the mean, may cause policy search directions to possibly be unbounded and non-smooth. These issues are the focus of Section \ref{section:algorithm}. 


\section{Policy Gradient Methods}\label{section:algorithm}

\blue{Policy gradient (PG)} is an RL algorithm in which policy parameters in $\mathbb{R}^d$ are iteratively updated as approximate gradient ascent with respect to the value function \eqref{eq:value_func}. Its starting point is the Policy Gradient Theorem \citep{sutton2017reinforcement}, which expresses search directions in parameter space:
\begin{align}\label{equ:policy_grad_main}
	\nabla_{\bbtheta} J(\bbtheta)
	=\frac{1}{1-\gamma}\int_{\cS\times\cA} Q_{\pi_{\bbtheta}}(s,a) \cdot
	 \nabla_{\bbtheta} \log\pi_{{\bbtheta}}(a\given s)\cdot \rho_{{\bbtheta}}(s,a)\cdot dsda
\end{align}
where $\rho_{{\bbtheta}}(s,a)=\rho_{\pi_{\bbtheta}}(s)\cdot \pi_{{\bbtheta}}(a\given s)$ is a distribution called the \emph{discounted state-action  occupancy measure} defined as the product of the discounted state occupancy measure $\rho_{\pi_{\bbtheta}}(s)=(1-\gamma)\sum_{t=0}^\infty\gamma^t {p}(s_k=s\given s_0,\pi_{\bbtheta})$ and policy $\pi_{{\bbtheta}}(a\given s)$. In \citep{sutton2000policy}, both $\rho_{\pi_{\bbtheta}}(s)$ and $\rho_{{\bbtheta}}(s,a)$ are established as valid distributions. Despite this fact, the integral in \eqref{equ:policy_grad_main} may not exist due to the heavy-tailed nature of policy $\pi_{{\bbtheta}}(a\given s)$. Therefore, we first present some preliminaries regarding \eqref{equ:policy_grad_main}

%
\begin{assumption} \label{Assum:Q0}
	The absolute value of the reward is uniformly bounded,
		$\sup_{(s,a) \in \mathcal{S} \times \mathcal{A}} |R(s,a)|  \leq  U_R < \infty$ \; .
	%
\end{assumption}
%
%
\begin{assumption} \label{Assum:integral}
	 For any $s$, ${\bbtheta}$,
	$\int_{\cA}  \|\nabla_{\bbtheta} \log\pi_{{\bbtheta}}(a\given s)\|^{2}\cdot\pi_{{\bbtheta}}(a\given s) da \leq B<\infty$, {where $B$ is a finite constant.}
	%
\end{assumption}
Assumption \ref{Assum:integral} is weaker than the standard almost-sure boundedness of the score function assumed in prior work \cite[Assumption 3.1]{zhang2020global}, which is restrictive, and not valid even for Gaussians (Example \ref{eg:gaussian_fixed_var}). To see this, write the norm of its score function as 
$\|\nabla_{{\bbtheta} } \log \pi_{{\bbtheta}} (s,a)\|\leq \mathcal{O}\left({\|a\|+\|a\|^2}\right)$, which grows unbounded when the support of the action space is infinite. The score function also is unbounded in Example \ref{eg:light_tailed}. These subtleties motivate the relaxed condition in Assumption \ref{Assum:integral} which is valid regardless of a policy's tail index (Examples \ref{eg:gaussian_fixed_var} - \ref{eg:alpha_stable}). 

Next, we shift towards detailing PG. Under Assumptions \ref{Assum:Q0} - \ref{Assum:integral}, we establish that the integral in \eqref{equ:policy_grad_main} is finite (Lemma \ref{lemma:integral_bounded} in Appendix \ref{integral_bounded_proof}).
%
%
%
%
Thus, we employ it to compute search directions, which requires unbiased estimates of \eqref{equ:policy_grad_main}.
To realize this estimate, conduct a Monte Carlo rollout of length  $T_k\sim\text{Geom}(1-\gamma^{1/2})$, collect trajectory data $\tau_k =(s_0,a_0, s_1,a_1, \ldots, s_{T_{k}},a_{T_{k}})$, and form the PG estimate (akin to \citep{baxter2001infinite,liu2020improved}, except with a randomized horizon):
\begin{align}\label{eq:policy_gradient_iteration}
\!\!\!\!\!\hat{\nabla}_{\bbtheta} J({\bbtheta}_k) =&\sum_{t=0}^{T_k}\gamma^{t/2}\cdot R(s_t,a_t)\cdot \bigg(\sum_{\tau=0}^{t}\nabla\log\pi_{{\bbtheta}_k}(a_{\tau}\given s_{\tau})\bigg)  \;, \quad
		{\bbtheta}_{k+1} = {\bbtheta}_k + \eta \hat{\nabla}_{\bbtheta} {J}(\bbtheta_k) \; .
 \end{align}
where the parameter update for $\bbtheta_k$ is defined according to stochastic gradient ascent with step-size  $\eta>0$. The procedure for policy search along a trajectory is summarized as Algorithm \ref{Algorithm2}, where in the pseudo-code, we permit mini-batching with batch-size $B_k$, but subsequently assume $B_k=1$.
%
\begin{algorithm}[t]
	\begin{algorithmic}[1]
		\STATE \textbf{Initialize} :  policy parameters  ${\bbtheta}_0$, discount $\gamma$, step-size $\eta$, gradient $\bbg_0 =\bb0$, starting point $(s_0,a_0)$  \\
		\textbf{Repeat for $k=1,\dots$}
		\STATE Starting from $(s_0,a_0)$ , generate $B_k$ trajectories $\tau_{k,i} =(s_0,a_0, s_1,a_1, \ldots s_{T_{k,i}} , a_{T_{k,i}})$ of length $T_{k,i}\sim\text{Geom}(1-\gamma^{1/2})$ with actions $a_u\sim\pi_{{\bbtheta_k}}(.|s_u)$
		\STATE Compute policy gradient estimate $\bbg_k$ and update parameters $\bbtheta_k$:\vspace{-2mm} \small$$\!\!\!\!\!\bbg_k \leftarrow \frac{1}{B_k}\sum_{i=0}^{B_k} \bigg[\sum_{t=0}^{T_{k,i}}\gamma^{t/2}\cdot R(s_t,a_t)\cdot \bigg(\sum_{\tau=0}^{t}\nabla\log\pi_{{\bbtheta}_k}(a_{\tau}\given s_{\tau})\bigg)\bigg]\; , \quad 	{\bbtheta}_{k+1} \leftarrow {\bbtheta}_k + \eta \bbg_k\; $$ 
		\normalsize\vspace{-2mm}
		\STATE $k \leftarrow k+1$
		\textbf{ Until Convergence}
		\STATE \textbf{Return: ${\bbtheta}_k$} \\ 
	\end{algorithmic}
	\caption{\textbf{H}eavy-tailed \textbf{P}olicy \textbf{G}radient (HPG)}
	\label{Algorithm2}
\end{algorithm}

Next, we establish that the  stochastic gradient $\hat{\nabla}_{\bbtheta} {J}(\bbtheta)$ is an unbiased estimate of the true gradient ${\nabla}_{\bbtheta} {J}(\bbtheta)$ for a given $\bbtheta$. As previously mentioned, almost sure boundedness \cite[Assumption 3.1]{zhang2020global} of the score function $ \nabla_{\bbtheta}\log\pi_{{\bbtheta}}(a \given s) $ does not even hold for the Gaussian (Example \ref{eg:gaussian_fixed_var}), which motivates the moment condition in Assumption \ref{Assum:integral}. This alternate condition is employed to establish unbiasedness of \eqref{eq:policy_gradient_iteration}  formalized next (see Appendix \ref{apx_lemma:J_proof} for proof).

%
%
\begin{lemma}\label{lemma:J}
	Under the Assumptions \ref{Assum:Q0}-\ref{Assum:integral}, it holds that
	%
	$	\EE[\hat{\nabla}_{\bbtheta} J({\bbtheta})\given {\bbtheta}]={\nabla}_{\bbtheta} J({\bbtheta})$.
\end{lemma}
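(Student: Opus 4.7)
The plan is to verify unbiasedness in three steps: average over the geometric horizon $T_k$, average over the trajectory, and then recognize the resulting sum as the right-hand side of the Policy Gradient Theorem \eqref{equ:policy_grad_main}. The delicate point is justifying the interchange of sums and integrals, which is where Assumptions \ref{Assum:Q0}--\ref{Assum:integral} enter, because the score function is not almost-surely bounded for the heavy-tailed parameterizations of Examples \ref{eg:gaussian_fixed_var}--\ref{eg:alpha_stable}.

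First, I condition on the trajectory $\cbr{(s_t,a_t)}_{t\geq 0}$ and average only over $T_k\sim\mathrm{Geom}(1-\gamma^{1/2})$, which satisfies $\mathbb{P}(T_k\geq t)=\gamma^{t/2}$. Writing the estimator as $\sum_{t=0}^{T_k}g_t$ and exchanging the order of summation gives
\begin{align*}
\mathbb{E}_{T_k}\!\sbr{\sum_{t=0}^{T_k}\gamma^{t/2}R(s_t,a_t)\sum_{\tau=0}^t\nabla\log\pi_{\bbtheta}(a_\tau\given s_\tau)} = \sum_{t=0}^\infty \gamma^{t}\,R(s_t,a_t)\sum_{\tau=0}^t\nabla\log\pi_{\bbtheta}(a_\tau\given s_\tau),
\end{align*}
since $\gamma^{t/2}\cdot\mathbb{P}(T_k\geq t)=\gamma^{t}$. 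The randomized horizon is thus precisely the device that converts the $\gamma^{t/2}$ weighting in the estimator into the standard $\gamma^{t}$ discount of the full infinite-horizon sum.

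Next, I reindex the double sum by $\tau$ and write the $\tau$-th tail as $\sum_{t\geq\tau}\gamma^{t-\tau}R(s_t,a_t)$. By the Markov property and \eqref{eq:q_func}, the conditional expectation of this tail given $(s_\tau,a_\tau)$ equals $Q^{\pi_{\bbtheta}}(s_\tau,a_\tau)$. Taking expectation over the whole trajectory and using $(1-\gamma)\sum_{\tau\geq 0}\gamma^\tau\,p(s_\tau=s\given s_0,\pi_{\bbtheta})=\rho_{\pi_{\bbtheta}}(s)$ together with $a_\tau\sim\pi_{\bbtheta}(\cdot\given s_\tau)$ collapses the sum to
\begin{align*}
\frac{1}{1-\gamma}\int_{\cS\times\cA} Q^{\pi_{\bbtheta}}(s,a)\,\nabla\log\pi_{\bbtheta}(a\given s)\,\rho_{\bbtheta}(s,a)\,dsda = \nabla_{\bbtheta}J(\bbtheta),
\end{align*}
which is exactly the Policy Gradient Theorem expression.

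The only real obstacle is legitimizing the rearrangements in the previous two paragraphs, since $\nabla\log\pi_{\bbtheta}(a\given s)$ may be unbounded under a heavy-tailed policy. Assumption \ref{Assum:Q0} gives $|R|\leq U_R$, and Cauchy--Schwarz combined with Assumption \ref{Assum:integral} yields $\mathbb{E}\nbr{R(s_t,a_t)\nabla\log\pi_{\bbtheta}(a_\tau\given s_\tau)}\leq U_R\sqrt{B}$ uniformly in $(t,\tau)$. Since $\sum_{t\geq 0}(t+1)\gamma^t<\infty$, the $(t,\tau)$ double sum is absolutely convergent, so Fubini applies and validates every exchange. The second-moment bound in Assumption \ref{Assum:integral} therefore plays precisely the role that the pointwise score bound plays in prior work such as \cite{zhang2020global}, but without requiring bounded action space or a lighter-tailed policy.
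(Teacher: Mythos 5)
Your proposal is correct and follows essentially the same route as the paper's proof: both exploit $\mathbb{P}(T_k\geq t)=\gamma^{t/2}$ to turn the $\gamma^{t/2}$ weights into the full $\gamma^{t}$ discount, justify the interchange of expectation and infinite summation via the uniform bound $U_R\sqrt{B}$ obtained from Assumptions \ref{Assum:Q0}--\ref{Assum:integral} (the paper invokes dominated convergence where you invoke Fubini, an equivalent justification here), and then reorder the double sum to recognize the $Q$-function and the occupancy measure. No gaps.
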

%
%
 An additional condition which is called into question of existing analyses of policy search is Lipschitz continuity \citep{zhang2020global,liu2020improved} of the score function.
%
%
In particular, heavy-tailed policies such as Examples \ref{eg:alpha_stable} -  \ref{eg:light_tailed} necessitate generalizing smoothness conditions to H\"{o}lder-continuity, as formalized next. 
\begin{assumption} \label{Assum:nablaF}
	The score function function $\nabla \log \pi_{{\bbtheta}}(\cdot)$ is H\"{o}lder continuous with constants $M>0$ and $ 0<\beta \leq 1 $, which implies that
	%
	 $\|\nabla \log \pi_{{\bbtheta}_1}(\cdot) -\nabla \log \pi_{{\bbtheta}_2}(\cdot) \| \leq M \|{\bbtheta}_1-{\bbtheta}_2\|^{\beta}$ for all  ${\bbtheta}_1, {\bbtheta}_2 \in \mathbb{R}^d$.
	%
\end{assumption}

%
%
%
%
Observe that the policy parameterization in Example \ref{eg:light_tailed} is not Lipschitz but H\"{o}lder continuous.
In the next section, we formalize the convergence of \eqref{eq:policy_gradient_iteration}, discerning the convergence rate to stationarity and metastability characteristics: the proportion of time the algorithm's limit points spent at wider versus narrower local extrema as a function of the tail index.


\section{Convergence Analysis}\label{convergence}

 We analyze the ability of PG [cf. \eqref{eq:policy_gradient_iteration}] to maximize the value function \eqref{eq:main_prob}. As $J(\bbtheta)$ is non-convex in the policy parameter $\bbtheta$, the best pathwise result one may hope for is convergence to stationarity unless additional structure is present. Thus, we first study sample complexity in terms of the rate of decrease of the expected gradient norm  $\mathbb{E}[\|\nabla J(\bbtheta_k) \|]$, which we pursue under Assumptions \ref{Assum:integral}-\ref{Assum:nablaF} regarding the integrability of the norm of the score function with respect to the policy and H\"{o}lder continuity.  This generality is necessitated by heavy-tailed policy parameterizations as previously mentioned, and has not been considered in prior works such as \citep{sutton2000policy,zhang2020global,zhang2020sample,liu2020improved}. 
 
 Assumptions \ref{Assum:integral}-\ref{Assum:nablaF} present unique confounders to the RL setting that do not manifest in vanilla stochastic programming under relaxed smoothness conditions  \citep{shapiro2009lectures,nemirovski2009robust}. Specifically, they cause integrability and smoothness complications with respect to the occupancy measure $\rho_{{\bbtheta}}(s,a)$ induced by the MDP, which upends conditions on the objective and policy gradient in existing analyses. These complications are overcome in Lemmas \ref{Lemma:PGL} - \ref{Lemma:PGL2}, which first require partitioning the action space into sets where the score function is and is not almost surely bounded according to an exploration tolerance parameter (Definition \ref{def:exploration_tolerance}), which is unique to this work. Next, we make precise this discussion, establishing the convergence rate to stationarity of \eqref{eq:policy_gradient_iteration}.
 Later in this section, we formalize that iterates escape narrow extrema, and tend to jump towards wider peaks.

\subsection{Attenuation Rate of the Expected Gradient Norm}\label{subsec:stationarity}

We first focus on convergence rates to stationarity. To do so, we begin by establishing that 
Assumption \ref{Assum:nablaF} regarding the  H\"{o}lder continuity of the score function implies approximate H\"{o}lder continuity on the overall policy gradient. First, we partition the action space according to when the score function is almost surely bounded and where it is integrable according via a constant $\lambda>0$ defined next.
\begin{definition}\label{def:exploration_tolerance}{(\bf Exploration Tolerance)} Define as $\mathcal{A}(\lambda)$ the set of subsets of action space such that
		\begin{align}\label{define_Set}
			\mathcal{A}(\lambda):=\bigg\{\mathcal{C}\subseteq\mathcal{A} ~:~ \int_{\mathcal{\mathcal{A}\backslash\mathcal{C}}}\|\nabla\log\pi_{{\bbtheta}}(a | s)\| \cdot   \pi_{\bbtheta}(a|s) da \leq \lambda, \forall s, \bbtheta\bigg\}.
		\end{align}
	Then, $\lambda$ is the exploration tolerance parameter of a policy in an MDP with unbounded score function. \blue{Intuitively, $\mathcal{A}(\lambda)$ is the collection of all region of action space $\mathcal{C}\subseteq\mathcal{A}$, such that the expectation of score function under policy $\pi_{\bbtheta}(a|s)$ over region $\mathcal{A}\backslash\mathcal{C}$ is upper bounded by $\lambda$. And for the region in $\mathcal{A}(\lambda)$ associated with $\lambda$, we define an upper bound for the score function as }
	%
\begin{align}\label{definition_last}
	B(\lambda)= \inf_{\mathcal{C}\in\mathcal{A}(\lambda)} \sup_{(s,a)\in\mathcal{S}\times\mathcal{C}} \sup_{\bbtheta}\|\nabla\log\pi_{{\bbtheta}}(a | s)\|.  
\end{align}
\end{definition}
Definition \ref{def:exploration_tolerance} induces a tradeoff between the restriction on the range of values an action may take by a subset $\mathcal{C}
\subset\mathcal{A}$ with the scale of $B(\lambda)$. Observe that for the Cauchy (Example \ref{eg:alpha_stable}), constant $B(0)$ exists and is finite. A broader characterization of $\lambda$ as a function of the policy is given in Appendix \ref{apx_delta}.

\begin{lemma}\label{Lemma:PGL}
	Under Assumptions \ref{Assum:Q0} -  \ref{Assum:nablaF}, with $\lambda$ as in Definition \ref{def:exploration_tolerance}, the policy gradient \eqref{eq:policy_gradient_iteration} satisfies
\begin{align}
		\| {\nabla} _{{\bbtheta}} J ({\bbtheta}_1) - {\nabla} _{{\bbtheta}} J ({\bbtheta}_2 )  \| & \leq M_J\left[\|   {\bbtheta}_1 -{\bbtheta}_2  \|^{\beta}+\|   {\bbtheta}_1 -{\bbtheta}_2  \|+\lambda\right],
\end{align}
	for all ${\bbtheta}_1, {\bbtheta}_2 \in \mathbb{R}^d$ with $$	M_J:=\max\Bigg\{\frac{2U_R M }{(1-\gamma)^2}, \frac{M_QB^{1/2}}{1-\gamma} + \frac{U_R B(\lambda) M_\rho}{(1-\gamma)^2},\frac{2U_R}{(1-\gamma)^2}\Bigg\},$$ and $0<\beta\leq 1$. \blue{Here, $U_R$ denotes the reward upper bound from Assumption \ref{Assum:Q0}, $M_\rho= \frac{\sqrt{B}}{1-\gamma}$, and $M_Q=\frac{ {\gamma U_R M_{\rho}}}{1-\gamma}$. }

\end{lemma}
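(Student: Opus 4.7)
The plan is to decompose $\nabla_{\bbtheta} J(\bbtheta_1) - \nabla_{\bbtheta} J(\bbtheta_2)$ via the integral representation \eqref{equ:policy_grad_main} into three additive contributions: (i) a $Q$-function variation term $\int (Q_{\pi_{\bbtheta_1}}-Q_{\pi_{\bbtheta_2}}) \nabla\log\pi_{\bbtheta_1}\rho_{\bbtheta_1}$, (ii) a score-function variation term $\int Q_{\pi_{\bbtheta_2}} (\nabla\log\pi_{\bbtheta_1}-\nabla\log\pi_{\bbtheta_2}) \rho_{\bbtheta_1}$, and (iii) an occupancy-measure variation term $\int Q_{\pi_{\bbtheta_2}} \nabla\log\pi_{\bbtheta_2}(\rho_{\bbtheta_1}-\rho_{\bbtheta_2})$. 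The three contributions to $M_J$ correspond to these three terms.

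For term (ii), Assumption \ref{Assum:nablaF} directly yields a uniform H\"older bound $M\|\bbtheta_1-\bbtheta_2\|^{\beta}$ on the score-function difference. Combining with the uniform bound $|Q_{\pi_{\bbtheta}}|\leq U_R/(1-\gamma)$ and the fact that $\rho_{\bbtheta_1}$ is a probability distribution over $\mathcal{S}\times\mathcal{A}$, term (ii) contributes the $\frac{2U_R M}{(1-\gamma)^2}\|\bbtheta_1-\bbtheta_2\|^{\beta}$ piece. For term (i), I would invoke the standard policy-parameter smoothness of the $Q$-function, $|Q_{\pi_{\bbtheta_1}}(s,a)-Q_{\pi_{\bbtheta_2}}(s,a)|\leq M_Q\|\bbtheta_1-\bbtheta_2\|$, established via a simulation-lemma argument using Assumption \ref{Assum:Q0}. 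The remaining factor $\int \|\nabla\log\pi_{\bbtheta_1}(a|s)\|\rho_{\bbtheta_1}(s,a)\,dsda$ is controlled by Cauchy--Schwarz against $\rho_{\bbtheta_1}$ combined with Assumption \ref{Assum:integral}, yielding the $B^{1/2}$ factor and hence the $\frac{M_Q B^{1/2}}{1-\gamma}\|\bbtheta_1-\bbtheta_2\|$ contribution.

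The main obstacle, and where the exploration tolerance $\lambda$ enters, is term (iii): because $\nabla\log\pi_{\bbtheta_2}(a|s)$ may be unbounded on $\mathcal{A}$ for heavy-tailed parameterizations, we cannot pull it out uniformly. Invoking Definition \ref{def:exploration_tolerance}, I split the action integral over $\mathcal{C}\in\mathcal{A}(\lambda)$ and its complement. On $\mathcal{C}$, the score function is bounded by $B(\lambda)$ almost surely, and I combine this with a standard Lipschitz-in-parameter estimate $\|\rho_{\bbtheta_1}-\rho_{\bbtheta_2}\|_{\mathrm{TV}}\leq M_\rho\|\bbtheta_1-\bbtheta_2\|$ on the occupancy measure, giving the $\frac{U_R B(\lambda) M_\rho}{(1-\gamma)^2}\|\bbtheta_1-\bbtheta_2\|$ piece. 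On $\mathcal{A}\setminus\mathcal{C}$, I use $|Q_{\pi_{\bbtheta_2}}|\leq U_R/(1-\gamma)$, factor $\rho_{\bbtheta_i}(s,a)=\rho_{\pi_{\bbtheta_i}}(s)\pi_{\bbtheta_i}(a|s)$, and apply the defining tail bound $\int_{\mathcal{A}\setminus\mathcal{C}}\|\nabla\log\pi_{\bbtheta}(a|s)\|\pi_{\bbtheta}(a|s)\,da\leq \lambda$ once for each of $\rho_{\bbtheta_1}$ and $\rho_{\bbtheta_2}$, producing the additive $\frac{2U_R}{(1-\gamma)^2}\lambda$ slack.

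Finally, I sum the three contributions, upper bound each coefficient by $M_J$, and recognize that the three coefficients in the definition of $M_J$ correspond exactly to the three pieces above. The subtle bookkeeping is ensuring that the decomposition of the action space used for term (iii) is uniform in $\bbtheta$, so that the bound on $B(\lambda)$ in Definition \ref{def:exploration_tolerance} applies simultaneously to both $\bbtheta_1$ and $\bbtheta_2$; taking the infimum over $\mathcal{C}\in\mathcal{A}(\lambda)$ in \eqref{definition_last} is what makes this uniformity possible.
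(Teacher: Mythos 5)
Your decomposition is the same one the paper uses (score-variation, $Q$-variation, occupancy-variation terms, with the Lipschitz lemmas for $Q_{\pi_{\bbtheta}}$ and $\rho_{\bbtheta}$ and the $\mathcal{C}$ vs.\ $\mathcal{A}\setminus\mathcal{C}$ split), so the architecture is right. But there is one step in your treatment of term (iii) that fails as written. On $\mathcal{A}\setminus\mathcal{C}$ you bound $|\rho_{\bbtheta_1}-\rho_{\bbtheta_2}|\le\rho_{\bbtheta_1}+\rho_{\bbtheta_2}$ and claim to ``apply the defining tail bound once for each of $\rho_{\bbtheta_1}$ and $\rho_{\bbtheta_2}$.'' The tail bound in Definition \ref{def:exploration_tolerance} controls $\int_{\mathcal{A}\setminus\mathcal{C}}\|\nabla\log\pi_{\bbtheta}(a|s)\|\,\pi_{\bbtheta}(a|s)\,da$ with the \emph{same} parameter in the score function and the density. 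Your term (iii) carries $\nabla\log\pi_{\bbtheta_2}$, so the product $\|\nabla\log\pi_{\bbtheta_2}(a|s)\|\,\pi_{\bbtheta_1}(a|s)$ arising from the $\rho_{\bbtheta_1}$ piece is mismatched and the definition does not apply to it directly. The paper repairs exactly this by adding and subtracting $\nabla\log\pi_{\bbtheta_1}$ inside the norm on that piece, invoking H\"older continuity (Assumption \ref{Assum:nablaF}) to pay an extra $M\|\bbtheta_1-\bbtheta_2\|^\beta$, and then applying the tail bound to the now-matched integrand. This is also where the factor of $2$ in the first entry of $M_J$ comes from: the $\|\bbtheta_1-\bbtheta_2\|^\beta$ coefficient is $U_RM/(1-\gamma)^2$ from term (ii) \emph{plus} another $U_RM/(1-\gamma)^2$ from this repair of term (iii), not $2U_RM/(1-\gamma)^2$ from term (ii) alone as you state. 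Your closing remark about the infimum over $\mathcal{C}$ addresses uniformity of the set across parameters (which the definition already guarantees, since it quantifies over all $\bbtheta$), but not this integrand mismatch, which is the actual subtlety. With that one correction your argument coincides with the paper's.
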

See Appendix \ref{lemma_1_proof} for proof. Lemma \ref{Lemma:PGL} generalizes a comparable statement regarding the  Lipschitz continuity of the score function typically imposed to establish a Lipschitz property of the policy gradient. Next, we provide an intermediate Lemma \ref{Lemma:PGL2} (see Appendix \ref{apx_lemma_2} for proof) crucial to establishing the main convergence rates to stationarity of Algorithm \ref{Algorithm2}.
\begin{lemma}\label{Lemma:PGL2}
Under Assumptions \ref{Assum:Q0} -  \ref{Assum:nablaF}, value function $J(\bbtheta)$ satisfies the smoothness condition
\begin{align}\label{eq:gradient_smoothness}
	\!\!\!\!\!\!\!\left| J(\bbtheta_1)-J(\bbtheta_2) \!-\!\left< \nabla J(\bbtheta_2),\bbtheta_1\!-\! \bbtheta_2\right> \right| \!\leq \!M_J \! \left[\|  {\bbtheta}_1 \!-\!{\bbtheta}_2 \|^{1+\beta}\!+\!\|  {\bbtheta}_1 -{\bbtheta}_2  \|^2 +\lambda\|  {\bbtheta}_1 -{\bbtheta}_2  \|\right]
\end{align}
for all $ \bbtheta_1, \, \bbtheta_2 \in \mathbb{R}^d$ and $M_J$ is as defined in Lemma \eqref{Lemma:PGL} with exploration tolerance $\lambda$ as in \eqref{define_Set}. 
\end{lemma}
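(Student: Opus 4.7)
The plan is to derive this descent-lemma style inequality directly from the H\"older-type continuity of $\nabla J$ proven in Lemma \ref{Lemma:PGL}, via the fundamental theorem of calculus along the line segment joining $\bbtheta_2$ and $\bbtheta_1$. The identity I would start with is
\[
J(\bbtheta_1) - J(\bbtheta_2) = \int_0^1 \left\langle \nabla J\big(\bbtheta_2 + t(\bbtheta_1-\bbtheta_2)\big),\, \bbtheta_1-\bbtheta_2 \right\rangle dt ,
\]
which requires $J$ to be differentiable along the segment; that regularity already follows from Lemma \ref{lemma:integral_bounded} (applied with $\bbtheta$ on the segment) together with Lemma \ref{Lemma:PGL}. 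Subtracting $\langle \nabla J(\bbtheta_2), \bbtheta_1-\bbtheta_2\rangle$ from both sides and moving it inside the integral gives
\[
J(\bbtheta_1)-J(\bbtheta_2) - \langle \nabla J(\bbtheta_2), \bbtheta_1-\bbtheta_2\rangle = \int_0^1 \left\langle \nabla J\big(\bbtheta_2 + t(\bbtheta_1-\bbtheta_2)\big) - \nabla J(\bbtheta_2),\, \bbtheta_1-\bbtheta_2 \right\rangle dt .
\]

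Next, I apply Cauchy-Schwarz inside the integral and invoke Lemma \ref{Lemma:PGL} on the norm difference of the gradients with the two arguments $\bbtheta_2 + t(\bbtheta_1-\bbtheta_2)$ and $\bbtheta_2$, whose distance is $t\,\|\bbtheta_1-\bbtheta_2\|$. This yields
\[
\left| J(\bbtheta_1)-J(\bbtheta_2) - \langle \nabla J(\bbtheta_2), \bbtheta_1-\bbtheta_2\rangle \right|
\le M_J\,\|\bbtheta_1-\bbtheta_2\|\int_0^1\!\left[t^\beta \|\bbtheta_1-\bbtheta_2\|^\beta + t\|\bbtheta_1-\bbtheta_2\| + \lambda \right] dt .
\]
Evaluating the elementary integrals gives coefficients $\tfrac{1}{1+\beta}$, $\tfrac{1}{2}$, and $1$ on the three terms, all of which are bounded by $1$ for $\beta\in(0,1]$, so the right-hand side is upper-bounded by $M_J\big[\|\bbtheta_1-\bbtheta_2\|^{1+\beta} + \|\bbtheta_1-\bbtheta_2\|^2 + \lambda\|\bbtheta_1-\bbtheta_2\|\big]$, which is exactly the claim in \eqref{eq:gradient_smoothness}.

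The only subtle point, rather than a genuine obstacle, is to justify that the integral representation is valid here even though the score function is unbounded under heavy-tailed policies. This reduces to continuity of $\nabla J$ along the segment, which follows from the modulus of continuity established in Lemma \ref{Lemma:PGL}: that modulus is controlled uniformly in $\bbtheta_1,\bbtheta_2$ up to the additive $\lambda$ term, so $\nabla J$ is continuous (in fact, it is integrable along any compact segment), and the fundamental theorem of calculus applies componentwise. After that, the remainder of the argument is a one-line integration, and no further assumptions beyond \ref{Assum:Q0}--\ref{Assum:nablaF} and the exploration-tolerance construction of Definition \ref{def:exploration_tolerance} are required.
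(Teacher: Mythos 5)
Your proposal is correct and follows essentially the same route as the paper's own proof: the fundamental theorem of calculus along the segment, Cauchy--Schwarz, and then Lemma \ref{Lemma:PGL} applied to $\nabla J(\bbtheta_2 + t(\bbtheta_1-\bbtheta_2)) - \nabla J(\bbtheta_2)$, followed by the elementary integration in $t$. Your explicit remark that the resulting coefficients $\tfrac{1}{1+\beta}$, $\tfrac12$, and $1$ are all at most one is a slightly more careful bookkeeping of the final bound than the paper provides, but the argument is the same.
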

Now, we formalize the convergence rate for Algorithm \ref{Algorithm2} as Theorem \ref{Thm1}. 
\begin{theorem} \label{Thm1}
Under Assumptions \ref{Assum:Q0}-\ref{Assum:nablaF}, with objective $J$  bounded above by $J^*$,  and H\"{o}lder continuity parameter $\beta$ bounded by the tail-index $\alpha$ as  $\beta \in (0, \alpha-1]$, under constant step-size selection $\eta = 1/ K^{\frac{\beta}{\beta+1}}$, the policy gradient updates  of  $\bbtheta_k$ in Algorithm \ref{Algorithm2} [cf. \eqref{eq:policy_gradient_iteration}] converges to stationarity:
%
%
	\begin{align}\label{eq:theorem_stationarity}
		\!\!\!\!\!\frac{1}{K} \!\!\sum_{k=0}^{K-1}\! \mathbb{E}\! \left[  \left\| {\nabla} J ({{\bbtheta}}_k)\right\|_2^2\right] \leq \frac{a_{\beta}}{K^{\frac{\beta}{1+\beta}}} +\mathcal{O}(\lambda), \; \quad a_{\beta} = \big( \left(L_J\right)^{1/(\beta+1)} \big) \left(J^* \!-\!J ({{\bbtheta}}_{1})\right)^{\beta/(\beta+1)}
	\end{align}
 with problem-dependent constant $L_J$ defined in \eqref{eq:theorem41_constant}, and exploration tolerance $\lambda$ as in Def. \ref{def:exploration_tolerance}.
\end{theorem}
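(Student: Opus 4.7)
The plan is to combine the approximate smoothness inequality from Lemma \ref{Lemma:PGL2} with the unbiasedness of the estimator from Lemma \ref{lemma:J}, using a standard non-convex SGD telescoping argument adapted to the H\"older exponent $\beta$ and the exploration-tolerance slack $\lambda$. First, I would apply Lemma \ref{Lemma:PGL2} to consecutive iterates $\bbtheta_{k+1}=\bbtheta_k+\eta\,\hat g_k$ (with $\hat g_k := \hat\nabla_{\bbtheta}J(\bbtheta_k)$) to get the one-step ascent bound
\[
J(\bbtheta_{k+1}) \;\geq\; J(\bbtheta_k) + \eta\,\langle \nabla J(\bbtheta_k),\hat g_k\rangle - M_J\bigl[\eta^{1+\beta}\|\hat g_k\|^{1+\beta} + \eta^{2}\|\hat g_k\|^{2} + \lambda\,\eta\,\|\hat g_k\|\bigr].
\]
Taking the conditional expectation given $\bbtheta_k$, the inner product becomes $\eta\,\|\nabla J(\bbtheta_k)\|^{2}$ by Lemma \ref{lemma:J}, while the three nonlinear terms reduce to controlling $\mathbb{E}\|\hat g_k\|^{p}$ for $p\in\{1,2,1+\beta\}$.

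The heart of the argument is the moment bound on the estimator in \eqref{eq:policy_gradient_iteration}. Each realization of $\hat g_k$ is a sum of at most $T_k{+}1$ score contributions weighted by $\gamma^{t/2} R(s_t,a_t)$. I would bound each score contribution in $L^{2}$ using Assumption \ref{Assum:integral} (which replaces the usual almost-sure bound on $\nabla\log\pi_{\bbtheta}$), and then bound the sum by Cauchy--Schwarz plus the moment generating properties of $T_k\sim\text{Geom}(1-\gamma^{1/2})$, giving $\sum_{t}\gamma^{t/2}\cdot t < \infty$ and analogously for $t^{2}$. Boundedness of rewards (Assumption \ref{Assum:Q0}) supplies the factor $U_R$. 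The outcome is a constant $L_p$, uniform in $k$ and $\bbtheta_k$, with $\mathbb{E}\|\hat g_k\|^{p}\leq L_p$ for $p\in\{1,2,1+\beta\}$; the restriction $\beta\le\alpha-1$ enters exactly here, since this is the range for which the $(1{+}\beta)$-th moment of a tail-index-$\alpha$ policy score remains integrable.

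With these moment bounds, rearranging the one-step inequality and telescoping from $k=0$ to $K-1$ gives
\[
\eta\sum_{k=0}^{K-1}\mathbb{E}\|\nabla J(\bbtheta_k)\|^{2} \;\leq\; J^{*}-J(\bbtheta_0) + M_J K\bigl[\eta^{1+\beta}L_{1+\beta}+\eta^{2}L_{2}+\lambda\eta L_{1}\bigr].
\]
Dividing by $\eta K$ yields
\[
\tfrac{1}{K}\sum_{k=0}^{K-1}\mathbb{E}\|\nabla J(\bbtheta_k)\|^{2} \;\leq\; \tfrac{J^{*}-J(\bbtheta_0)}{\eta K} + M_J\bigl[\eta^{\beta}L_{1+\beta}+\eta L_{2}+\lambda L_{1}\bigr].
\]
The dominant shrinking terms are $(\eta K)^{-1}$ and $\eta^{\beta}$; balancing them pins down the prescribed step size and produces the $K^{-\beta/(1+\beta)}$ rate, with the $\lambda$-term appearing as an irreducible additive $\mathcal{O}(\lambda)$ bias reflecting the exploration tolerance. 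Absorbing $M_J$, $L_{1+\beta}$, and $L_2$ into the single problem constant $L_J$ recovers the form $a_{\beta}/K^{\beta/(1+\beta)} + \mathcal{O}(\lambda)$.

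The main obstacle is the moment step in the middle paragraph: because the score function under heavy-tailed parameterizations (Examples \ref{eg:light_tailed}--\ref{eg:alpha_stable}) is \emph{unbounded}, one cannot invoke an $a.s.$ bound of the summand as in prior analyses. I would instead decompose the action space through Definition \ref{def:exploration_tolerance}, bounding $\|\nabla\log\pi_{\bbtheta}(a|s)\|$ by $B(\lambda)$ on the set $\mathcal{C}\in\mathcal{A}(\lambda)$ and absorbing the remainder into the $\lambda$ slack, which is precisely why $\lambda$ enters the final bound as a non-vanishing term and why $\beta\le\alpha-1$ is the sharp range for which the estimator has the requisite finite $(1{+}\beta)$-th moment.
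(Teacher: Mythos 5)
Your proposal follows essentially the same route as the paper's proof: the one-step ascent bound from Lemma \ref{Lemma:PGL2}, unbiasedness from Lemma \ref{lemma:J}, uniform moment bounds on $\mathbb{E}\|\hat g_k\|^{p}$ for $p\in\{1,1+\beta,2\}$ obtained from Assumption \ref{Assum:Q0}, Jensen's inequality, and the geometric horizon, followed by telescoping and balancing $1/(\eta K)$ against $\eta^{\beta}$. The only minor misattributions are that in the paper the $(1+\beta)$-th moment is finite directly from Assumption \ref{Assum:integral} via Jensen (since $1+\beta\le 2$), the $\lambda$ term enters solely through Lemmas \ref{Lemma:PGL}--\ref{Lemma:PGL2} rather than through the estimator's moments, and the constraint $\beta\le\alpha-1$ reflects where H\"older continuity of the score holds rather than moment integrability of $\hat g_k$.
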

Theorem \ref{Thm1} (proof in Appendix \ref{proof_theorem}) establishes that the iteration complexity of Algorithm \ref{Algorithm2} is $\mathcal{O}\big({1}/{\zeta^{1+({1}/{\beta})}}\big)$ {when $\lambda=\mathcal{O}(\zeta)$}, where $\zeta$ is the accuracy parameter. This result contrasts the standard rate of $\mathcal{O}\left({1}/{\zeta^{2}}\right)$ for non-convex optimization \citep{bottou2018optimization}, which restricts the policy parameterization to be Gaussian  \citep{bhatt2019policy,zhang2020global,liu2020improved}, i.e., $\alpha=2$. This means that heavy-tailed parameterizations result in slower convergence; however, we note that the rate of decrease of the expected gradient norm may not comprehensively encapsulate the non-convex landscape of value function. An additional subtlety is the effect of continuous action spaces, which are partitioned into sets where the score function is and is not bounded in accordance with the exploration tolerance parameter $\lambda$ (Def. \ref{def:exploration_tolerance}). In existing analyses of literature \citep{zhang2020global,paternain2020stochastic,liu2020improved}, the effect of $\lambda$ is assumed null { ($\lambda=0$)}, which overlooks the effect of action space coverage during policy search.
 Next, we establish that this perceived slower rate of heavy-tailed policies is overruled by their tendency towards local extrema with wider peaks, under a hypothesis that they admit a representation as a discretization of a L\'{e}vy Process.



\subsection{Metastability and Convergence to Wide Peaks}\label{subsec:exit_time}

{In the previous subsection, we established that the attenuation rate of the expected gradient norm for heavy-tailed policies is actually \emph{slower} than the rate associated smoother policies. This fact seemingly contradicts prior experimental results which demonstrate that they tend towards policies that achieve higher reward more quickly \citep{garg2021proximal}. The nature of this confounder has to do with the fact that expected gradient norm may only characterize how close a policy is to stationarity, but not how quickly a policy moves from one stationary point to another.}

To make sense of this quandary, we turn to characterizing (i) the time that Algorithm \ref{Algorithm2} takes to escape a (possibly spurious) local extremum, and (ii) how the proportion of time spent at a local maxima depends on its width and the policy's tail index. These results hinge upon introducing into  RL for the first time of \emph{metastability} of dynamical systems under the influence of weak random perturbations \citep{tzen2018local}. Similar results have been employed for SGD in the context of training neural networks in supervised learning \citep{nguyen2019first,gurbuzbalaban2020heavy}; however, it is unclear how one neural parameterization induces gradient noise whose distribution has a heavier from another. By contrast, here, this aspect is directly determined by the policy parameterization's tail index, which \emph{we choose} in Algorithm \ref{Algorithm2}. Moreover, in the aforementioned works, the analysis is only for the scalar-dimensional case, whereas here we consider dimension $d>1$.
%

We begin then by rewriting \eqref{eq:policy_gradient_iteration} in terms of the true policy gradient and the stochastic error $\hat{\nabla} J(\bbtheta_k)-\nabla {J}(\bbtheta_k)$, with the noise process hypothesized as an $\alpha$-tailed distribution, given by 
%
%
\begin{align} \label{eqn:SGD_L}
	\bbtheta_{k+1} = \bbtheta_k + \eta \nabla_{\bbtheta}  J(\bbtheta_k) + \eta^{1/\alpha}  \eta^{\frac{\alpha-1}{\alpha}}  S_k\;,
\end{align}
%
where, $S_k \in \mathbb{R}^d$  is   $ \mathcal{S}\alpha \mathcal{S} $ distributed random vector. Subsequently, we impose that the score function [cf. \eqref{equ:policy_grad_main}] is dissipative (Assumption \ref{Assump: disspitat}).

Hereafter, we rewrite discrete-time process $\bbtheta_k$ as $\bbtheta^k$ with superscript to disambiguate between continuous and discrete time. \eqref{eqn:SGD_L} holds under a hypothesis that the stochastic errors associated with policy gradient steps are heavy-tailed, which is observed experimentally in  \citep{garg2021proximal}. In Sec. \ref{sec:simulations}, we experimentally corroborate that policies induce gradient noise with a proportionate tail index  (Fig. \ref{figure_estimation_alpha}). 
\begin{table*}
\centering
		\vspace{-0mm}
		\begin{tabular}{|c|c|c|c|} 
			\hline
			Algorithm & Iter. complexity & Exit time (Def. \ref{def:exit_time0}) & Trans. time (Def. \ref{def:transition_time})\\
			\hline
			PG   & $\mathcal{O}\left({1}/{\zeta^{2}}\right)$  &  $\mathcal{O} \left(   e^{-2 J(a)/\epsilon^2} \right) $  
			&   $  \mathcal{O} \left( e^{2 (J(\bar{\bbtheta}_i)-J(\bar{\bbtheta}_i ))/\epsilon^{2}}\right) $  
			\\
			\hline
			HPG& $\mathcal{O}\left({1}/{\zeta^{1+\frac{1}{\beta}}}\right)$  & $\mathcal{O} \left(  \frac{\alpha}{2} \frac{a^{\alpha}}{\epsilon^{\alpha}} \right) $  & $ \mathcal{O} \left( 1/\epsilon^{\alpha}\right) $   \\
			\hline
		\end{tabular}
	\caption{ Summary of iteration complexity, exit time, and transition time results  for vanilla PG and heavy-tailed PG, with $\epsilon$ as the jump process coefficient, and $\zeta$ as accuracy parameter for $\mathbb{E}[\|\nabla J(\bbtheta_k) \|]\leq \zeta$. Employing a policy with a faster tail probability decay rate such as a Gaussian (larger $\alpha$) may take exponential time to escape a spurious local extrema, whereas a heavy-tailed policy escapes in polynomial time, as a function of the width $a$ of the set containing a local maxima \eqref{eqn:_Gi} and its escape direction \eqref{eqn:distm}.\vspace{-2mm}}
	\label{table_1}
\end{table*}
%
The continuous-time analogue of \eqref{eqn:SGD_L}, i.e., $(\bbtheta_{k+1}-\bbtheta_k)/\eta$ as $\eta\rightarrow 0$, defines Stochastic Differential Equation (SDE) driven by an $\alpha-$stable  L\'{e}vy process as \citep{tzen2018local}
%
\begin{align}  \label{eqn:SGD_L1}
	d\bbtheta_t^{\epsilon}  = \nabla_{\bbtheta} J(\bbtheta_t^{\epsilon}) dt +  \epsilon  d \bbL_t^{\alpha}\;,
\end{align}
where, $\epsilon: = \eta^{\frac{\alpha-1}{\alpha}}$  is a coefficient of the jump process (similar to diffusion coefficient in Brownian motion), and  $\bbL_t^\alpha$ denotes the multi-dimensional $\alpha$-stable L\'{e}vy motion in $\mathbb{R}^d$. 
%
%
%
%
With these details in place, we impose some additional structure (Assumption \ref{Assump:fun}) on the non-convex landscape of the objective $J(\bbtheta)$ in \eqref{eq:value_func}, namely, within the region of the objective's assumed finitely many local maxima, each one is separated by only a local minimum and no saddle points.
%
%
%
%
With the operating hypothesis that there are finitely many extrema of the objective, denote as   $\mathcal{G}_i \subset \mathbb{R}^d$  the neighborhood of the $i$-th local (arbitrary) maximizer $\bar{\bbtheta}_i$: 
\begin{align} \label{eqn:_Gi}
	\mathcal{G}_i:= \{ \mathbb{\bbtheta} \in \mathbb{R}^d: \| \bbtheta - \bar{\bbtheta}_i\| < a+ \xi \} \; , \qquad\partial \mathcal{G}_i:= \{ \mathbb{\bbtheta} \in \mathbb{R}^d: \| \bbtheta - \bar{\bbtheta}_i\| = a+ \xi \}
\end{align} 
where, $a, \, \xi >0$ are scalar radius parameters, and $\partial \mathcal{G}_i$ denotes the boundary of this neighborhood. \vspace{2mm}

{\bf \noindent Exit Time and Transition Time.} We next define the metastability quantities of exit and transition time in both continuous and discrete-time, assuming that  \eqref{eqn:SGD_L1}  and \eqref{eqn:SGD_L} are initialized at ${\bbtheta_0}\in\mathcal{G}_i$.

\begin{definition}\label{def:exit_time0}
	({\bf Exit time  from $\mathcal{G}_i$}) The time required  for the continuous-time process \eqref{eqn:SGD_L1} and discrete-time process \eqref{eqn:SGD_L}, respectively, to exit  $\mathcal{G}_i$ along standard basis vector $\bbr\in\mathbb{R}^d$  is defined by 
	%
	\begin{align}\label{exit_contins}
		%
		\hat{\tau}_{\xi,a}(\epsilon) \triangleq \inf \{ t \geq 0:  \bbtheta_t^{\epsilon} \in  \Omega_i^+(\bar{\delta})   \}, \qquad
		%
		%
		%
		%
		\bar{\tau}_{\xi,a}(\epsilon) \triangleq \inf \{ K \in \mathbb{N} :   \bbtheta^K  \in  \Omega_i^+(\bar{\delta})  \}.
	\end{align}
\end{definition}
Here, $a$ and $\xi$ denote scalar radius parameters (cf. \eqref{eqn:_Gi}). For all   $(\bbtheta - \bar{\bbtheta}_i)$ at a distance $\bar{\delta}$ from $\partial \mathcal{G}_i$, we define  its distance to  $\partial \mathcal{G}_i$ along standard basis vector $\bbr\in\mathbb{R}^d$, where $\bbr$ is as in \eqref{eqn:SGD_L1} with $\bbL_t^\alpha=\bbr L_t$ in terms of the lines in $\mathbb{R}^d$ as $g_{i_{\bbtheta}}(t)= {\bbtheta} -\bar{\bbtheta}_i+t \cdot \bbr $ for $t \in \mathbb{R}$. Then, for all $\|\bbtheta-\bar{\bbtheta}_i\|	 < \bar{\delta}, \, \bar{\delta} \in (0, a+\xi)$,  the distance function to the boundary along $\bbr$ is defined as   
\begin{align}\label{eqn:distm}
	d^+(\bbtheta):= \inf \{ t >0: g_{i_{\bbtheta}}(t) \in \partial \mathcal{G}_i \}\; ,\qquad 
	d^-(\bbtheta):= \sup \{ t <0: g_{i_{\bbtheta}}(t) \in \partial \mathcal{G}_i \},
\end{align}
where \eqref{eqn:distm}   define distance between   any point of interest and the  boundary of domain along  the  unit vector, $\bbr$, we have  $g_{i_{\bbtheta}}(t) \notin \mathcal{G}_i$ for $t \notin (d^-(\bbtheta), \, d^+(\bbtheta))$ for all $i$.  We say the point exits the domain $\mathcal{G}_i$ in the direction $\bbr$ when it enters the $\bar{\delta}$-tubes outside $\mathcal{G}_i$ defined by 
%
%
\begin{align} \label{eqn:omegatube}
	\Omega_i^+(\bar{\delta}):= \{\bbtheta \in \mathbb{R}^d: 	\|\left<(\bbtheta- \bar{\bbtheta}_i), \bbr \right>\bbr -(\bbtheta- \bar{\bbtheta}_i)\| 	 <\bar{\delta},\, \left<(\bbtheta- \bar{\bbtheta}_i), \bbr \right> > 0\} \cap \mathcal{G}_i^c.
\end{align}
We underscore that $\bar{\tau}(\cdot)$ represents the exit time of discrete-time process $\bbtheta^k$, whereas $\hat{\tau} (\cdot) $ denotes that of continuous-time stochastic process $\bbtheta_t^{\epsilon}$. 

\begin{definition}\label{def:transition_time}
	({\bf Transition time from $\mathcal{G}_i$ to $\mathcal{G}_j$})
	%
	Under the existence of a unit vector $\bbr$ along the direction connecting the domains $\mathcal{G}_i$ and $\mathcal{G}_{i+1}$ between two distinct local maxima, we define the transition time  from a neighborhood of one local maxima to another, i.e., from $\mathcal{G}_i$ to $\mathcal{G}_j, \, i\neq j$ in respective continuous-time [cf. \eqref{eqn:SGD_L1}] and discrete-time \eqref{eqn:SGD_L} 
	\begin{align}\label{eq:transition_time}
		\hat{T}_{i} (\epsilon) =  \inf \{   t>0: \bbtheta_{t}^{\epsilon}  \in \cup_{i \neq j} \mathcal{G}_j  \}\;,\qquad
		%
		%
		%
		%
		\bar{T}_{i} (\epsilon) = \inf\{   K \in \mathbb{N}: \bbtheta^{K}   \in \cup_{i \neq j} \mathcal{G}_j  \}\;. 
	\end{align}
\end{definition}

We begin by stating a technical assumptions which are required for the theorems presented in Section \ref{subsec:exit_time}. The first is regarding the non-convex landscape of $J$ and the later is regarding the L\'{e}vy jump process in \eqref{eqn:SGD_L1}.
\begin{assumption}\label{Assump:fun}
	Following statements holds for function, $J$:
	\begin{enumerate}
		%
		\item The set of local maxima of the value function $J$ consists of  $r$ distinct points
		$\{m_i\}=\{J(\bar{\bbtheta}_i)\}$ separated by $r-1$  local minima $\{s_i\}$.
		\item The function $J$ possesses the strict-saddle property,  i.e., all its local maxima satisfy $\nabla^2 J(\bbtheta) \prec 0$  and all its other stationary points satisfy $\lambda_{\min} (\nabla^2 J(\bbtheta))  > 0$. 
		\item The value function $J(\bbtheta)$ satisfies the growth condition; $J'(\bbtheta) > |\bbtheta|^{1+c}$ for $c>0$ and $|\bbtheta|$ sufficiently large, i.e. the function increases to infinity with infinite $\bbtheta$.
	\end{enumerate}
\end{assumption}

%
\begin{assumption}\label{Assump_Levy}
	\begin{enumerate}
		\item $L_0^\alpha$ = 0 almost surely.
		\item For $t_0 < t_1 < \ldots  < t_N$, the increments ($L_{t_i}^{\alpha}$
		) are independent ($i = 1, \ldots, N$).
		\item The difference ($L_t^{\alpha}-
		L_s^{\alpha}$) and $L_{t-s}^{\alpha}$ have the same distribution: $\mathcal{S}\alpha\mathcal{S}(t-s)^{1/\alpha}$ for $s<t$.
		\item $L_t^{\alpha}$
		is continuous in probability: for all $\delta> 0$ and $s \geq 0$, $\mathcal{P}(|L_t^\alpha -L_s^\alpha| > \delta   )  \to 0 \, \textrm{as} \,  t \to s $.
	\end{enumerate}
\end{assumption}

We also first present an additional condition we require on the score function.  
\begin{assumption}\label{Assump: disspitat}
	For some $m > 0$ and $b \geq 0$, $\nabla_{\bbtheta} \log \pi_{\bbtheta}(\cdot)$ is $(m,b,c)$-dissipative, which implies that
	$	c_{\alpha} \left< \bbtheta, \nabla_{\bbtheta} \log \pi_{\bbtheta}(\cdot) \right>  \geq    m \|\bbtheta\|^{1+c} -b$, for all $\bbtheta \in \mathbb{R}^d$.
\end{assumption}

We impose the following structural assumption on $\mathcal{G}_i$ [cf. \eqref{eqn:_Gi}] such that  desired properties for a domain perturbed by a L\'{e}vy noise  in multi-dimensional space   holds \cite{imkeller2010first}. 
\begin{assumption}\label{assmp:domaing}
	The following  assumptions hold for  $\mathcal{G}_i$:
	\begin{enumerate}
		\item We denote by $\Omega_i := \{ \bbtheta \in  \mathbb{R}^d: \bbtheta = t \cdot  \bbr_i \ \text{for a}\   t \in \mathbb{R}\}$ the straight
		line in the direction of $\bbr_i$. Let  $\nabla J(\cdot):
		\bar{\mathcal{G}} \to \mathbb{R}^d$ and  the set $\bar{\mathcal{G}} \cap \Omega$ is connected. There exists  numbers $a, b\,>0$  and a closed interval
		$I: = [-b,a]$ such that for all $t \in (-b, a)$ we have: $t\cdot \bbr \in \mathcal{G}_i$.  Since $\bar{\bbtheta}_i \in \mathcal{G}$, $\mathcal{G}_i$ is open, and $\mathcal{G}_i \cap \Omega \neq \emptyset $. 
		\item The boundary of $ \mathcal{G}_i$ defined by  $\partial \mathcal{G}_i$ is a $\mathcal{C}^1$-manifold so that the vector field $n$ of the outer normals on
		the boundary exists. We assume 
		%
			%
			$\left<   \nabla J(\bbtheta),  n(\bbtheta) \right>  \leq -\frac{1}{C}$,
			%
		%
		for all $\bbtheta \in \mathcal{G}_i$. This means that $\nabla J(\cdot)$  points into $\mathcal{G}_i$.
		\item Local extrema, $\bar{\bbtheta}_i$ is an attractor of the domain, i.e.   for every starting value $\bbtheta \in \mathcal{G}_i$, the
		deterministic solution  vanishes asymptotically.
		\item There exists atleast one set of  domains,  $\mathcal{G}_{i-1}$,  $\mathcal{G}_{i}$  and $\mathcal{G}_{i+1}$  such that $\mathcal{G}_{i-1}$,  $\mathcal{G}_{i}$  and $\mathcal{G}_{i+1}$ are connected,  $\partial \mathcal{G}_i \cap \partial \mathcal{G}_{j} \neq \emptyset, $ $j \in \{i, i-1\}$.    We  assume   existence of a  local minima  at the intersection of $\partial \mathcal{G}_i$ and $\partial \mathcal{G}_{j}$.
		\item There exists a discrete instant $K$ such that exit time $\hat{\tau}_{\xi,a}$ [c.f. \eqref{exit_contins}] greater than $K \eta,\, K>0$ and $\bbtheta_{\hat{\tau}}^{\epsilon}  \in \Omega^+$ for $\hat{\tau}_{\xi,a}(\epsilon) \geq K \eta$. 
	\end{enumerate}
\end{assumption}
%
\blue{Assumption \ref{Assump:fun} is regarding the level sets of the value function within the vicinity of stationary points versus local extrema. Assumption \ref{Assump:fun}.1 ensures that there is positive volume separating distinct extrema, which imposes that the value function, and hence reward, cannot be extremely similar for policies whose relative merits are different. Observe that the strict saddle property (Assumption \ref{Assump:fun}.2) has been studied before in the context of policy gradient method, as it is a sufficient condition for the correlated negative curvature condition \cite{zhang2020global}, which holds whenever the policy parameterization is associated with a positive definite Fischer information matrix, and the reward function is strictly positive or strictly negative \cite[Assumption 4.5]{zhang2020global}. Assumption \ref{Assump:fun}.3 is easy to satisfy for any policy that does not threshold large values of the derivative, such as the Gaussian or Cauchy -- direct calculation reveals that it holds for these cases, but it does not hold for a \emph{truncated} Gaussian.   

Assumption \ref{Assump_Levy} imposes conditions on the L\'{e}vy processes that drive the heavy-tailed noise. Theoretically they are difficult to verify, but we note that they are strictly more general than standard assumptions in the ODE analysis of stochastic approximation that underlies the stability analysis of reinforcement learning -- see \cite{borkar2000ode}. Moreover, we empirically verify that the noise satisfies the conditions required to be jump process with index $\alpha$ in Figure \ref{figure_estimation_alpha}, due to the fact that if the gradient is heavy tailed, then the noise associated with the stochastic errors is heavy-tailed.

Assumption \ref{Assump: disspitat} holds for any policy parameterization which is an increasing function of the norm. Observe that it holds for the policy in Example \ref{eg:light_tailed} directly when the policy parameter $\theta$ lies in compact space. Assumption \ref{assmp:domaing} imposes structure on the landscape of the value function. Assumption \ref{assmp:domaing}.2 imposes that the gradient is negatively correlated with the normal vector pointing away from a neighborhood of a stationary point, which usually holds. Assumption \ref{assmp:domaing}.3 ensures that the gradient is null near a local extrema, i.e., the policy gradient becomes null at a local extrema. Assumption \ref{assmp:domaing}.4 imposes that there is some intersection between the neighborhoods of extrema, which means that one locally optimal policy may have similar cumulative return to another of comparable quality. Assumption \ref{assmp:domaing}.5 imposes that the transition time between the neighborhoods of local extrema is governed by choice of learning rate up to a constant factor, which typically holds in practice.  } 

The following theorems present the first exit time and transition time probabilities of  the proposed heavy-tailed policy gradient setting, \eqref{eq:policy_gradient_iteration} when initialized within  $\mathcal{G}_i \subset \mathbb{R}^d$ such that \eqref{eqn:_Gi} holds.
%
%
\begin{theorem}(Exit Time Dependence on Tail Index) \label{Theorem:Exit_time}
	Suppose Assumptions \ref{Assum:Q0}- \ref{assmp:domaing} hold, the value function $J$  is initialized near local maxima $\bar{\bbtheta}_i$, and the policy gradient update in \eqref{eq:policy_gradient_iteration} is run under a heavy-tailed policy parameterization that induces tail index $\alpha$ in its stochastic error \eqref{eqn:SGD_L}. Then, the likelihood of its exit time from neighborhood $\mathcal{G}_i$ [cf. \eqref{eqn:_Gi}] of $\bar{\bbtheta}_i$ larger than $K$ is upper bounded as 
	%
	\begin{align}\label{eq:exit_time_theorem}
		\!\!\! \mathcal{P}^{\bbtheta_0} \!\left(  \bar{\tau}_{0,a} (\epsilon) >\!K\right) 
		&\leq ({2}/{\epsilon^{\rho \alpha}}) \epsilon^{\alpha} (d^+)^{-\alpha}   \!+\!   \mathcal{O} \big(  \big({d} K \eta^{2-(1/\alpha)}\big)^{\beta} \big) \!\!\! \nonumber 
		\\ & \quad+ \mathcal{O}  \big(1- \big(1-C_{\alpha}d^{1+({\alpha}/{2})}\eta \exp{(\alpha M_J\eta)}\epsilon^{\alpha}\left(({\xi}/{3})\right)^{-\alpha} \big)^{K} +\delta   \big)  
	\end{align}
	%
	%
	with initialization $\bbtheta_0$, $d^+$ [cf. \eqref{eqn:distm}]  denotes distance  between $\bbtheta_0$ and the boundary of $\mathcal{G}_i$,  $\rho\in (0,1)$ is a positive constant and $\bbtheta\in\mathbb{R}^d$. Moreover, the H\"{o}lder continuity constant satisfies $\beta \in (0,\alpha-1)$, $\delta >0$,  $\xi>0$,  $\eta$ is the step-size,  $k_1: = 1/\eta^{\alpha-1}$, and $\epsilon$ [cf. \eqref{eqn:SGD_L1}] is the jump process coefficient.
\end{theorem}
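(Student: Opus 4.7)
The plan is to bound the discrete-time exit probability $\mathcal{P}^{\bbtheta_0}(\bar{\tau}_{0,a}(\epsilon)>K)$ by a three-way decomposition: (i) the exit probability of the continuous-time surrogate SDE \eqref{eqn:SGD_L1}, (ii) a large-jump contribution quantifying how often a single $\alpha$-stable increment can kick the iterate out of $\mathcal{G}_i$, and (iii) an Euler-type discretization error between $\bbtheta^k$ and $\bbtheta_{k\eta}^\epsilon$ controlled through H\"older continuity (Lemma \ref{Lemma:PGL}). The key observation that makes a polynomial (rather than exponential) rate possible is the classical Imkeller--Pavlyukevich splitting of the $\alpha$-stable L\'evy motion $\bbL^\alpha$ into a small-jump component (jumps of magnitude $\leq \epsilon^{-\rho}$, behaving essentially as a rescaled Brownian noise) and a large-jump compound Poisson component whose intensity at scale $a$ is of order $\epsilon^\alpha a^{-\alpha}$.

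First, for the continuous process $\bbtheta_t^\epsilon$ started at $\bbtheta_0\in\mathcal{G}_i$, I would argue that between consecutive large jumps of $\epsilon\,d\bbL^\alpha$ the trajectory is pushed toward $\bar\bbtheta_i$ by the gradient flow because $\bbtheta_0$ lies in the basin of a local maximum (using the structural assumption on the landscape, i.e., finitely many extrema with no saddle in a basin). Hence the only way to leave $\mathcal{G}_i$ along the direction $\bbr$ with slack $d^+$ is by a large jump of size at least $d^+$; a standard computation with the L\'evy measure $\nu(dx)\propto |x|^{-1-\alpha}\,dx$ yields, after truncating at level $\epsilon^{-\rho}$, the bound of order $\epsilon^{\alpha}(d^+)^{-\alpha}$ with a $2/\epsilon^{\rho\alpha}$ prefactor, which gives the first term of \eqref{eq:exit_time_theorem}. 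This is essentially an adaptation of the one-dimensional result of Imkeller--Pavlyukevich to the multivariate setting, where the direction $\bbr$ picks out the relevant coordinate.

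Second, I would handle the large-jump contribution via a Bernoulli-trial argument across $K$ discrete steps. Conditional on being at iterate $\bbtheta^k\in\mathcal{G}_i$, the probability that $\eta^{1/\alpha}S_k$ produces a jump of magnitude at least $\xi/3$ in any coordinate is bounded by $C_\alpha d^{1+\alpha/2}\eta\epsilon^\alpha (\xi/3)^{-\alpha}$ after using $\epsilon=\eta^{(\alpha-1)/\alpha}$ and tail estimates for $d$-dimensional $\mathcal{S}\alpha\mathcal{S}$ variables; an $\exp(\alpha M_J\eta)$ prefactor absorbs the drift growth due to the H\"older-smooth policy gradient over one step. Taking the union over $K$ steps yields $1-(1-C_\alpha d^{1+\alpha/2}\eta e^{\alpha M_J\eta}\epsilon^\alpha(\xi/3)^{-\alpha})^K$, which is the third term. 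The additive $\delta$ here is the probability of the bad event that the small-jump part drives the iterate outside a $\xi/3$-tube around the deterministic flow; this is standard and tunable.

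Third, for the discretization error, I would compare $\bbtheta^k$ to $\bbtheta^\epsilon_{k\eta}$ step by step. Since $\nabla J$ is H\"older with exponent $\beta$ (Lemma \ref{Lemma:PGL} with $\lambda$-correction treated as a constant absorbed elsewhere) and the stochastic forcing uses the \emph{same} driving $\alpha$-stable noise in both the SDE and the recursion \eqref{eqn:SGD_L}, a Gr\"onwall-type argument on $\|\bbtheta^k-\bbtheta^\epsilon_{k\eta}\|^\beta$ produces an accumulated error of order $(dK\eta^{2-1/\alpha})^\beta$; the exponent $2-1/\alpha$ comes from the L\'evy increment scaling $\eta^{1/\alpha}$ convolved with one extra factor of $\eta$ from the drift term, and $\beta$ enters because Lemma \ref{Lemma:PGL} only gives H\"older (not Lipschitz) continuity. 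Summing over the three sources and applying the union bound produces \eqref{eq:exit_time_theorem}.

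The main obstacle will be the discretization step: classical Euler analyses for SDEs with $\alpha$-stable noise assume Lipschitz drift, but here I only have H\"older-$\beta$ with $\beta\in(0,\alpha-1)$ after invoking Lemma \ref{Lemma:PGL}. Closing the Gr\"onwall recursion in the H\"older regime requires iterating an inequality of the form $\|e_{k+1}\|\leq \|e_k\|+M_J\eta\|e_k\|^\beta+\eta\lambda$, which does not give a clean exponential envelope; I would instead track $\|e_k\|^\beta$ after a Jensen step, use that $\beta<\alpha-1<1$, and bound the accumulated noise contribution by the total $\alpha$-moment $\sum_k (\eta^{1/\alpha})^\beta\mathbb{E}\|S_k\|^\beta$ which is finite precisely because $\beta<\alpha$. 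This is the step where the constraint $\beta\in(0,\alpha-1]$ in the theorem statement is consumed, and where the multiplicative dimension factor $d$ inside $(dK\eta^{2-1/\alpha})^\beta$ enters through the $d$-dimensional moment of $S_k$.
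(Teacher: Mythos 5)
Your three-way decomposition has the right outline, and you correctly source the first term from the Imkeller--Pavlyukevich continuous-time exit result along the direction $\bbr$ (the paper simply cites this as Theorem \ref{th:21} rather than re-deriving the large-jump/small-jump splitting) and the third term from a L\'evy tail estimate over $K$ intervals (the paper's Lemma \ref{Lemma:Levynoise}, applied to the modulus of continuity $\sup_{t\in[k\eta,(k+1)\eta]}\|L^\alpha(t)-L^\alpha(k\eta)\|$ rather than to the one-step discrete increments, though the resulting expression is the same). However, two of your mechanisms do not match how the stated bound actually arises, and one of them would not produce the claimed term. First, the $\mathcal{O}\big((dK\eta^{2-1/\alpha})^\beta\big)$ term is \emph{not} an Euler discretization error between $\bbtheta^k$ and $\bbtheta^\epsilon_{k\eta}$. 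In the paper it comes from Markov's inequality applied to $\mathbb{E}\big[\max_k\|\bbtheta_{k\eta}\|^\beta\big]$ inside the event that the drift contribution $\exp(M_J\eta)\,M_J\eta\,\|\bbtheta_{k\eta}\|^\beta$ exceeds $\xi/3$; that expectation is controlled by an It\^{o}-formula argument combined with the dissipativity of the score function (Assumption \ref{Assump: disspitat}) and a stochastic Gr\"onwall inequality (Lemmas \ref{Lemma:Gurbuzb} and \ref{Lemma:Levy1}), which is where the factor $d/k_1^{1/\alpha}=d\eta^{1-1/\alpha}$ and hence the exponent $2-1/\alpha$ originate. Your proposed route --- accumulating $\sum_k \eta^{\beta/\alpha}\mathbb{E}\|S_k\|^\beta$ --- scales as $K\eta^{\beta/\alpha}$, not as $K^\beta\eta^{(2-1/\alpha)\beta}$, so even if you closed the H\"older--Gr\"onwall recursion it would not reproduce the second term. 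You also never invoke dissipativity, without which there is no a priori moment control on the iterates to feed into Markov's inequality.

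Second, the additive $\delta$ is not the probability of a small-jump tube violation. In the paper it is the total-variation distance between the law of the discrete iterates and the law of the continuous SDE sampled at the grid points, obtained via the data-processing inequality and Pinsker's inequality from a KL bound that is an assumption on the step-size (following \cite{simsekli2019tail}); the paper then works exclusively with the continuous process conditioned on this coupling succeeding, which is precisely how it avoids the pathwise Euler analysis you identify as your ``main obstacle.'' Your synchronous-coupling alternative (same driving noise in both processes) is a genuinely different strategy, and if it worked there would be no reason for an additive $\delta$ at all --- but closing it requires bounding $\|\nabla J(\bbtheta^k)-\nabla J(\bbtheta^\epsilon_t)\|$ for $t$ inside an interval where the continuous process has infinite-variance increments, and your sketch does not resolve this beyond asserting that the $\beta$-moment suffices. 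To align with the theorem as stated you would need to either adopt the paper's TV-coupling device or show that your pathwise error genuinely lands at order $(dK\eta^{2-1/\alpha})^\beta$, neither of which the proposal currently does.
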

See proof in Appendix \ref{Proof:exittime}.
Observe that as $\epsilon \to 0$ in  \eqref{eq:exit_time_theorem}, the right hand side of \eqref{eq:exit_time_theorem} depends on the distance of $\bbtheta_0$ from the boundary $\partial \mathcal{G}$ and tail-index $\alpha$. Further, the dependence on $d^+$ (cf. \eqref{eqn:distm}) implicitly hinges upon the width $a$ of the neighborhood of the extrema \eqref{eqn:_Gi}, which noticeably decreases with heavier tails (smaller $\alpha$), meaning that heavier-tailed policies increase the likelihood of escape and tend towards wider maxima. The intricacy of the expression precludes easy interpretation. Thus, consider the average exit time for the single dimensional case in Table \ref{table_1}, in which there exists only a single direction of exit, which coincides with  \citep{imkeller2006first,nguyen2019first}. In contrast to proposed heavy-tailed setting wherein exit time is a function of width of the  neighborhood, exit time for PG under, e.g., a Gaussian parameterization, depends exponentially on the value at the extrema. 
Next, we discuss the transition time from one extrema to another.
\begin{theorem} (Transition Time Dependence on Tail Index)\label{theorem: transitiontym}
	Suppose Assumptions \ref{Assum:Q0}- \ref{assmp:domaing} hold and the value function $J$  is initialized near a local maxima $\bar{\bbtheta}_i$. Then in the limit $\epsilon \to 0$,  the policy gradient update in \eqref{eq:policy_gradient_iteration} under a heavy-tailed parameterization with tail index $\alpha$ associated with its induced stochastic error \eqref{eqn:SGD_L}, transitions from $\mathcal{G}_i$ to the boundary of $(i+1)$-th local maxima with probability, $\mathcal{P}^{\bbtheta_0}( \bbtheta^{k} \in \Omega_i^+ (\bar{\delta}) \cap \partial \mathcal{G}_{i+1}) $  lower bounded as a function of tail index $\alpha$:
	\begin{align} \label{eqn:thm4}
		\lim_{\epsilon \to 0} \mathcal{P}^{\mathbf{\theta}_0}( {\mathbf{\theta}}^{k} \in \Omega_i^+ (\bar{\delta}) \cap \partial \mathcal{G}_{i+1}) \geq \frac{d_{ij}^{-\alpha}}{((d_{ij}^+)^{-\alpha} + (-d_{ij}^-)^{-\alpha}}  - \delta,
	\end{align}
	where $\delta >0$, escape distance from extrema are defined as $d_{ij}^+(\bbtheta)\!:=\! \inf \{ t\! >\!0\!:\! g_{{i}_{\bbtheta}}(t) \!\in\! \Omega_i^+ (\bar{\delta}) \cap \partial \mathcal{G}_{i+1} \!\}$ and $d_{ij}^-(\bbtheta)\!:=\! \sup \{ t \!<\!0\!:\! g_{{i}_{\bbtheta}}(t) \!\in\! \Omega_i^- (\bar{\delta}) \cap \partial \mathcal{G}_{i-1} \!\}$, and $\Omega_i^+(\bar{\delta})$ is defined before Def. \ref{def:transition_time}.
\end{theorem}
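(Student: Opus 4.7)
The plan is to lift the discrete-time iteration \eqref{eqn:SGD_L} to its continuous-time counterpart \eqref{eqn:SGD_L1}, analyze the exit location of the $\alpha$-stable L\'{e}vy-driven SDE in the small-$\epsilon$ limit via the explicit form of the L\'{e}vy measure, and then transfer the resulting landing probability back to the discrete iterates through a coupling that reuses the exit-time control already furnished by Theorem~\ref{Theorem:Exit_time}.

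First I would set up the continuous-time skeleton. Let $\bbtheta_t^{\epsilon}$ solve \eqref{eqn:SGD_L1} with $\bbtheta_0 \in \mathcal{G}_i$, and let $\hat{\tau}_{\xi,a}(\epsilon)$ denote the first exit from $\mathcal{G}_i$. Classical small-noise asymptotics for $\alpha$-stable-driven SDEs (in the spirit of Imkeller--Pavlyukevich) assert that, as $\epsilon \to 0$, exit occurs with probability tending to one through a single dominant jump of $\epsilon\bbL^{\alpha}$ whose magnitude exceeds the radius $a+\xi$; between consecutive jumps the process tracks the deterministic flow $g_{i,\bbtheta}(t)$ of $\dot{\bbtheta}=\nabla J(\bbtheta)$, which Assumption~\ref{Assump:fun} forces back toward $\bar{\bbtheta}_i$. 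Consequently, to leading order in $\epsilon$, the exit location has the law of $\bbtheta_{\hat\tau^-}+\epsilon\,\Delta \bbL_{\hat\tau}$ restricted to $\mathcal{G}_i^{\,c}$.

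Next I would compute the probability that this dominant jump lands in the cone $\Omega_i^+(\bar\delta)\cap\partial\mathcal{G}_{i+1}$. The L\'{e}vy measure of $\bbL^{\alpha}$ factorizes radially, $\nu(dx)=c_\alpha\,r^{-1-\alpha}\,dr\,\Sigma(du)$, so the mass of jumps along $+\bbr$ whose length exceeds the forward escape distance $d_{ij}^+(\bbtheta)$ scales like $(d_{ij}^+)^{-\alpha}$, while the mass along $-\bbr$ exceeding $-d_{ij}^-(\bbtheta)$ scales like $(-d_{ij}^-)^{-\alpha}$. Conditioning on exit through either tube and applying Bayes' rule then yields the ratio $(d_{ij}^+)^{-\alpha}/\bigl[(d_{ij}^+)^{-\alpha}+(-d_{ij}^-)^{-\alpha}\bigr]$. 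The geometric hypothesis in Assumption~\ref{assmp:domaing} on the cone connecting $\mathcal{G}_i$ and $\mathcal{G}_{i+1}$ along $\bbr$ guarantees that such a jump deposits the post-jump state inside the basin of attraction of $\bar{\bbtheta}_{i+1}$, so the deterministic flow then drives it to $\partial\mathcal{G}_{i+1}$. The slack $\delta$ absorbs the contributions of multi-jump exits, of drift-induced bending of the landing direction, and of the $O(\epsilon)$ corrections to the single-jump heuristic, all of which vanish as $\epsilon\to 0$.

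Finally I would transfer the continuous-time bound to the discrete iterates \eqref{eqn:SGD_L} by coupling them to the Euler discretization of \eqref{eqn:SGD_L1} at step-size $\eta$ with $\epsilon=\eta^{(\alpha-1)/\alpha}$. Combining the H\"older continuity of $\nabla J$ from Lemma~\ref{Lemma:PGL} with a Gr\"onwall-type estimate on the drift error yields $\sup_{k\eta\le \hat\tau}\|\bbtheta^k-\bbtheta^{\epsilon}_{k\eta}\|=o(1)$ in $\eta$, so the discrete landing probability matches the continuous one up to the absorbed slack $\delta$. The hardest step will be this coupling in dimension $d>1$ with a merely H\"older drift: existing metastability arguments for L\'{e}vy-driven SDEs are carried out in scalar settings with Lipschitz drifts, and propagating the cone-geometry of $\Omega_i^+(\bar\delta)$ through the drift's H\"older bending while keeping the $(d_{ij}^{\pm})^{-\alpha}$ scaling intact is the principal technical obstacle.
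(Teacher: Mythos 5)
Your plan is correct and follows essentially the same route as the paper: the continuous-time exit-direction probability $\lim_{\epsilon\to 0}\mathcal{P}\bigl(\bbtheta^{\epsilon}_{\hat\tau}\in\Omega_i^+(\bar\delta)\cap\partial\mathcal{G}_{i+1}\bigr) = (d_{ij}^+)^{-\alpha}/\bigl[(d_{ij}^+)^{-\alpha}+(-d_{ij}^-)^{-\alpha}\bigr]$, which you propose to re-derive from the radial factorization of the L\'{e}vy measure and the single-big-jump heuristic, is imported by the paper wholesale from Imkeller--Pavlyukevich (Theorem \ref{th:21}), and the transfer to the discrete iterates with slack $\delta$ is carried out by a Markov-inequality/optimal-coupling case analysis rather than the Gr\"onwall coupling you outline. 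The $d>1$, H\"older-drift coupling you flag as the principal obstacle is precisely the step the paper handles only by assertion (assuming $\EE\|\bbtheta^{k}-\bbtheta_{k\eta}\|/\epsilon\le\delta/2$ and deferring the step-size condition to prior work), so your proposal is, if anything, more demanding of itself than the paper's own argument.
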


Similar to exit time, the transition time probability [cf. \eqref{eqn:thm4}] (proof in Appendix \ref{sec:proofthm4}) depends on the width of boundary and the tail index, which noticeably also decreases for heavier tails (smaller $\alpha$), and depends on the width of the neighborhood containing a local maxima. For ease of interpretation, the single-dimensional case  for both vanilla PG and HPG are given in Table \ref{table_1}.  Transition times are asymptotically exponentially distributed in the limit of small noise and scale with $1/\epsilon^{\alpha}$ for HPG, whereas transition time for Brownian is exponentially distributed with  $\epsilon^\alpha$ replaced by \emph{exponential dependence} $e^{2 J(\cdot)/\epsilon^2}$ for a Gaussian policy.
\begin{figure}
	\centering
	\subfigure[ 1D Mario environment.]
	{\includegraphics[scale=0.16]{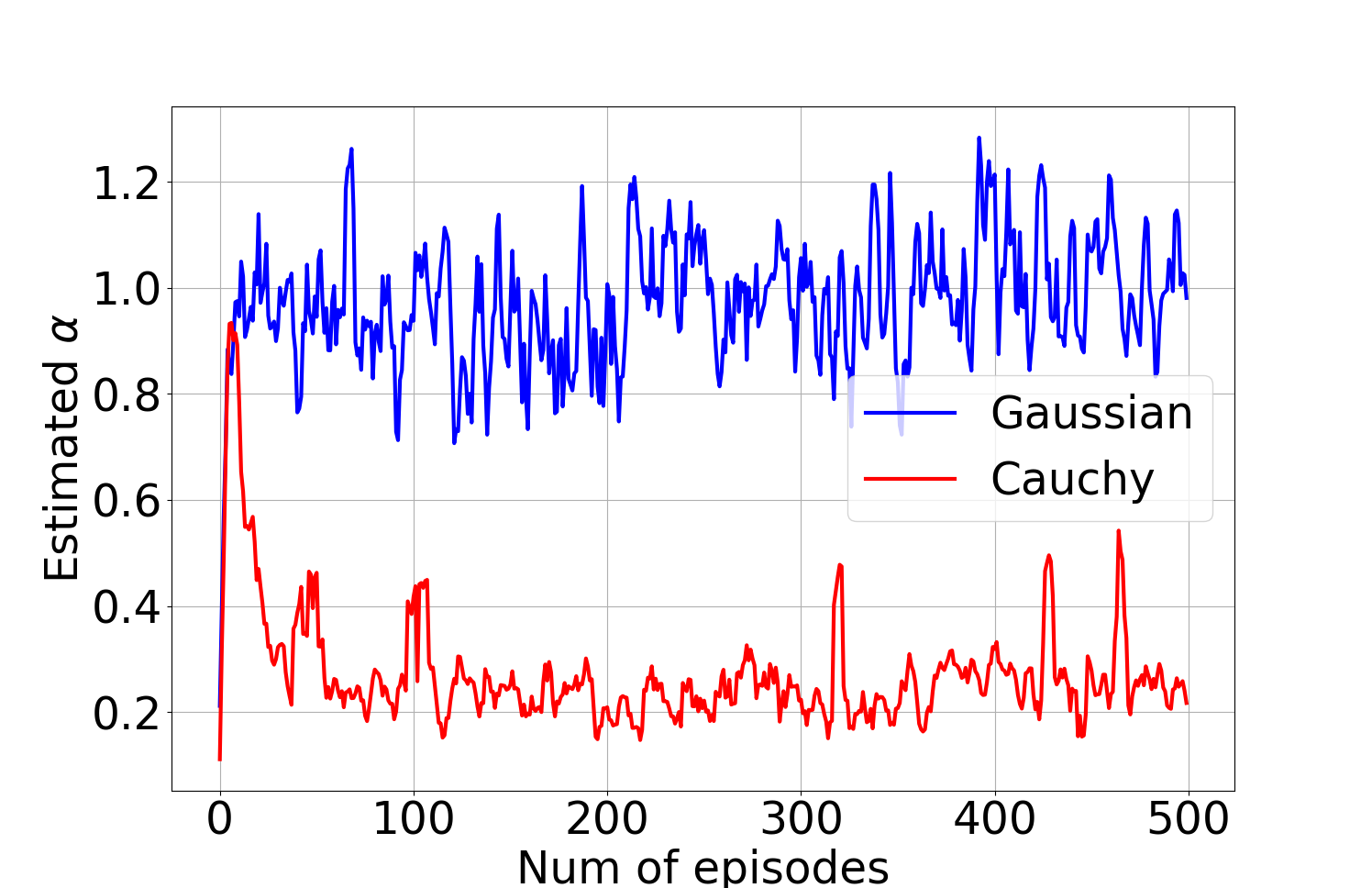}\label{fig:30}}
	\subfigure[ PMC (Sec. \ref{sec:prob})]
	{\includegraphics[scale=0.16]{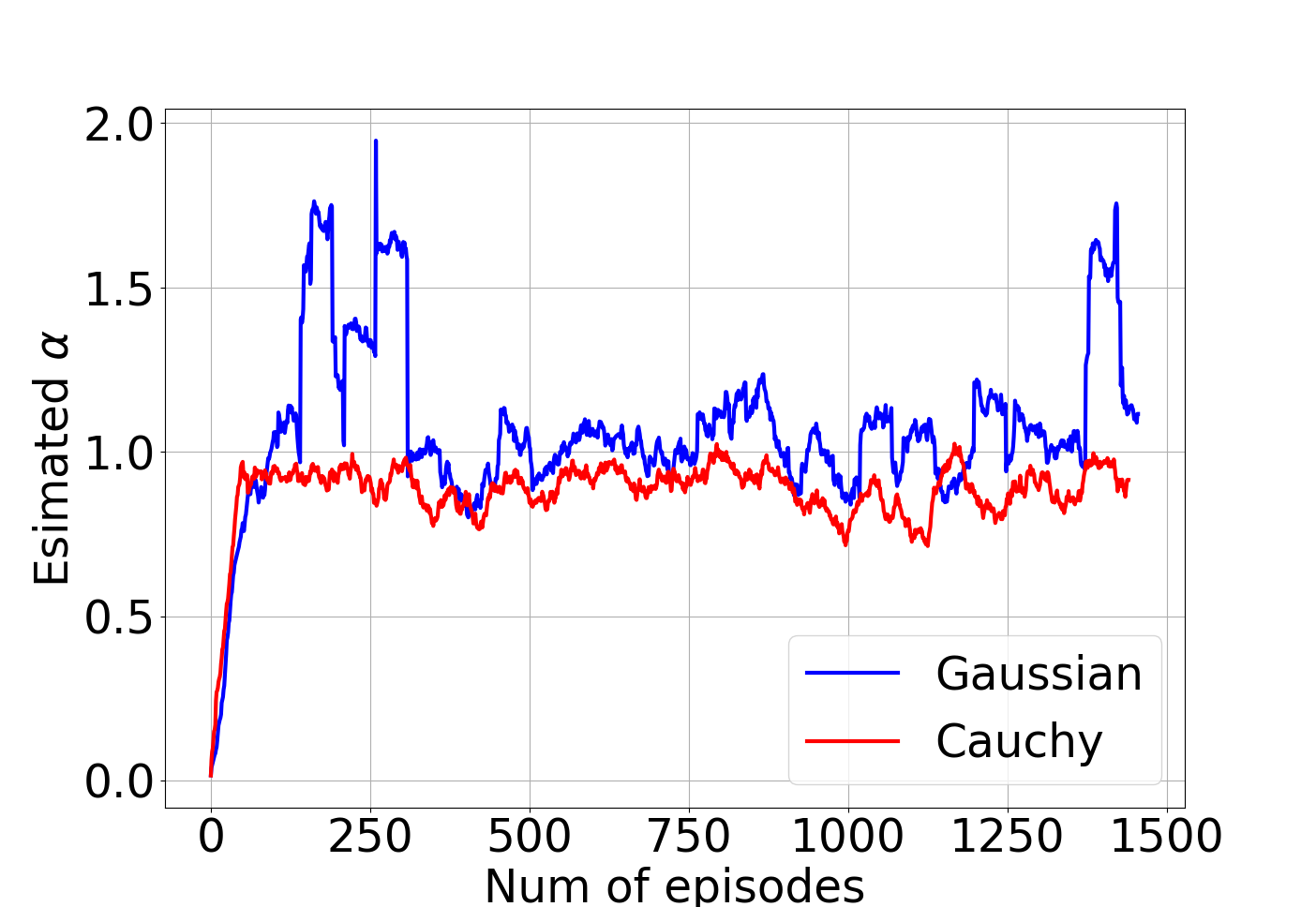} \label{fig:3}}
	\caption{ {\bf({a}) } Tail index  estimation of HPG updates for a 1D Mario \citep{matheron2019problem}.  
		{\bf({b}) }Tail index  estimation for Pathological Mountain car. In both, estimates are averaged over latest 50 episodes. Observe that a Cauchy policy induces a tail index lower than the Gaussian policy, and the volatility of the blue sample path stems from training being uncompleted during estimation.}
	\label{figure_estimation_alpha}
\end{figure}
Thus, in the small noise limit, Brownian-motion driven PG needs exponential time to transition from one peak to another, whereas the L\'{e}vy-driven process requires polynomial time, illuminating that heavy-tailed policies quickly jump away from spurious extrema.

\section{Experiments} \label{sec:simulations}

In this section, we evaluate the proposed HPG (Algorithm \ref{Algorithm2}) as compared to some common approaches for policy search. Before doing so, we demonstrate experimentally evidence that the heavy-tailed policies results in heavy tailed policy gradients. Then, we provide experiments for the Pathological Mountain Car (PMC) (Sec. \ref{sec:prob}) and 1D Mario environment \cite{matheron2019problem}. 
 For PMC, we consider an incentive structure in which the amount of energy expenditure, i.e., the action squared, at each time-step is negatively penalized and the reward structure is given by
 %
 $$ r(s_t,a_t) = -  a_t^2\mathbbm{1}_{\{-4.0 < s < 3.709, s \neq 2.667\}} + (500 -  a_t^2)\mathbbm{1}_{\{s = -4.0\}}+ (10-   a_t^2)\mathbbm{1}_{\{s = 2.67\}}.$$
 %
 %
 Here,  $s$ denotes the state space, and the action $a_t$ is a one-dimensional scalar representing the speed of the vehicle $\dot{s}_t$. 
 \begin{figure}[h]
 	\centering
 	\includegraphics[scale=0.07]{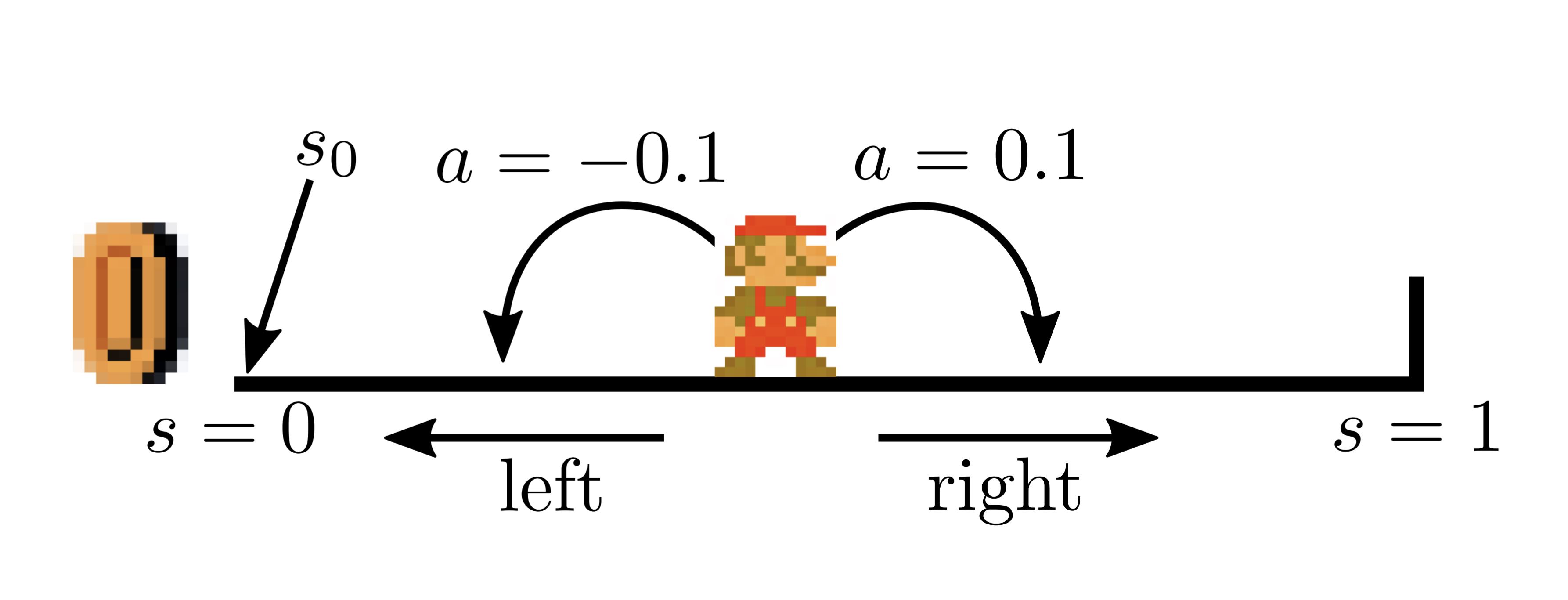}
 	\caption{ 1D Mario environment \cite{matheron2019problem}.}
 \end{figure} In 1D Mario environments, state $s\in[-4.0, \,  3.709]$ and the actions are confined to $[-20, \, 20]$. On the other hand, as the name suggests, the 1D Mario environment is one-dimensional with continuous state and action spaces with incentive structure  and state transition  defined as 
 $r(s_t,a_t)=\mathbbm{1}_{\{s_t+a_t <0\}}$, and $s_{t+1}=\min\{1,\max{\{0,s_t+a_t\}}\}$ 
 where, state, $s\in [0,1]$ and action $a\in[-0.1, 0.1]$. Each episodes are initialized at $s_0=0$.  

{Before presenting the experiments, first in Fig. \ref{figure_estimation_alpha}, we depict the estimation of tail index $\alpha$ (using method in \cite{mohammadi2015estimating}) for gradient estimates [cf. \eqref{eq:policy_gradient_iteration}] with a Cauchy and Gaussian policy. The lower the value of $\alpha$ the heavier the tail is of the policy gradient. In Fig. \ref{fig:30}, we observe that the average estimate for the Gaussian policy settles to a value of one, while  the corresponding value for Cauchy values settles around $0.2$ for 1D Mario environment. A similar plot for PMC is in  Fig. \ref{fig:3}: note that the tail-index estimate of Cauchy settles around unity and the corresponding value for Gaussian exhibits volatility since the policy has yet to converge.}  For the tail index estimation, we utilized the logic presented in \cite{mohammadi2015estimating} for the $\alpha$ estimation reiterated here in the form of Theorem \ref{them_appendix_exp} for quick reference. 
\begin{theorem} \label{them_appendix_exp} \cite{mohammadi2015estimating}
	Let  $\{ \bbX_i\}_{i=1}^{K}$ be the collection of random variables with $\bbX_i \sim \mathcal{S}\alpha\mathcal{S}(\sigma)$ and $K =K_1 \times K_2$. Define $Y_i \triangleq \sum_{i=1}^{K_1} \bbX_{j+(i-1)K_{1}}$ for $i \in [1, K_2].$ Then the estimator
	\begin{align}
		\hat{\frac{1}{\alpha}}  \triangleq \frac{1}{\log K_1} \left( \frac{1}{K_2}   \sum_{i=1}^{K_2}  \log |\bbY_i|  - \frac{1}{K} \sum_{i=1}^{K} \log |\bbX_i| \right) 
	\end{align}
	converges to $\frac{1}{\alpha}$ almost surely as $K_2 \to \infty$. 
\end{theorem}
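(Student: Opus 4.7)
The plan is to reduce the almost-sure convergence to two applications of the strong law of large numbers (SLLN), combined with the stability property of $\alpha$-stable distributions and a log-moment integrability check. First I will observe that by the $\alpha$-stability of the $X_j$'s, the block sums $Y_i = \sum_{j=1}^{K_1} X_{j+(i-1)K_1}$ are themselves iid, each distributed as $\mathcal{S}\alpha\mathcal{S}(K_1^{1/\alpha}\sigma)$, because sums of $K_1$ iid $\mathcal{S}\alpha\mathcal{S}(\sigma)$ variables have scale parameter $K_1^{1/\alpha}\sigma$ (this is the defining stability relation, easily verified through the characteristic function $\mathbb{E}[e^{i\omega X}] = e^{-\sigma|\omega|^\alpha}$ given in Example~\ref{eg:alpha_stable}).

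Next I will compute the logarithmic first moments. Writing $Z \sim \mathcal{S}\alpha\mathcal{S}(1)$, the scaling property gives $|X_i| \stackrel{d}{=} \sigma|Z|$ and $|Y_i| \stackrel{d}{=} K_1^{1/\alpha}\sigma|Z|$, hence
\begin{align*}
\mathbb{E}[\log|X_i|] &= \log\sigma + \mathbb{E}[\log|Z|], \\
\mathbb{E}[\log|Y_i|] &= \tfrac{1}{\alpha}\log K_1 + \log\sigma + \mathbb{E}[\log|Z|].
\end{align*}
The key integrability fact is that $\mathbb{E}[\log|Z|]$ is finite. This follows because the density $f_Z$ of $Z$ is bounded on $\mathbb{R}$ and satisfies $f_Z(z) = \Theta(|z|^{-1-\alpha})$ as $|z|\to\infty$, so $\int \log|z|\, f_Z(z)\,dz$ splits into a contribution near zero (integrable since $\log|z|$ is locally $L^1$ against a bounded density) and a tail contribution (integrable since $\log|z| / |z|^{1+\alpha}$ is integrable at infinity for $\alpha>0$). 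Let $\mu := \log\sigma + \mathbb{E}[\log|Z|]$, a finite constant.

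With finite means in hand, I apply SLLN twice: the $\log|Y_i|$ are iid with mean $\tfrac{1}{\alpha}\log K_1 + \mu$, so $\tfrac{1}{K_2}\sum_{i=1}^{K_2}\log|Y_i| \to \tfrac{1}{\alpha}\log K_1 + \mu$ almost surely as $K_2 \to \infty$; the $\log|X_i|$ are iid with mean $\mu$, so $\tfrac{1}{K}\sum_{i=1}^K \log|X_i| \to \mu$ almost surely as $K = K_1 K_2 \to \infty$ (with $K_1$ held fixed). Subtracting and dividing by $\log K_1$ gives the claimed limit
\begin{equation*}
\widehat{\tfrac{1}{\alpha}} \;\longrightarrow\; \frac{1}{\log K_1}\bigl(\tfrac{1}{\alpha}\log K_1 + \mu - \mu\bigr) \;=\; \frac{1}{\alpha} \quad \text{a.s.}
\end{equation*}
Note that the two empirical averages use overlapping randomness (the $Y_i$'s are built from the same $X_j$'s), but this is immaterial: SLLN applies separately to each sum, and almost-sure convergence is preserved under algebraic combinations on a common probability space. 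The only mildly delicate step is the log-integrability of $|Z|$; everything else is essentially stability plus bookkeeping, so I do not anticipate a serious obstacle.
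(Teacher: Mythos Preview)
The paper does not supply a proof of this statement; it is quoted from \cite{mohammadi2015estimating} for reference only, so there is no in-paper argument to compare against. Your proof is correct and is essentially the standard one: stability forces the block sums $Y_i$ to be i.i.d.\ $\alpha$-stable with scale inflated by the factor $K_1^{1/\alpha}$, the logarithmic moment $\mathbb{E}[\log|Z|]$ of a standard $\mathcal{S}\alpha\mathcal{S}$ variable is finite (bounded density near the origin, $|z|^{-1-\alpha}$ tail at infinity), and two invocations of the strong law of large numbers give the limit. Your remark that the shared randomness between the two empirical averages is harmless is also correct.

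One cosmetic point: under the parameterization used in Example~\ref{eg:alpha_stable} of the paper (characteristic function $e^{-\sigma|\omega|^\alpha}$), one has $X_i \stackrel{d}{=} \sigma^{1/\alpha}Z$ rather than $\sigma Z$, and the sum of $K_1$ copies is $\mathcal{S}\alpha\mathcal{S}(K_1\sigma)$ rather than $\mathcal{S}\alpha\mathcal{S}(K_1^{1/\alpha}\sigma)$. Your intermediate formulas implicitly use the alternative convention in which $\sigma$ is a genuine scale parameter. Either way the crucial identity $\mathbb{E}[\log|Y_i|]-\mathbb{E}[\log|X_i|]=\tfrac{1}{\alpha}\log K_1$ holds, so the argument and conclusion are unaffected.
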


\begin{figure*}[t!]
	\centering
	\subfigure[Constant scale $\sigma$ PMC]
	{\includegraphics[scale=0.14]{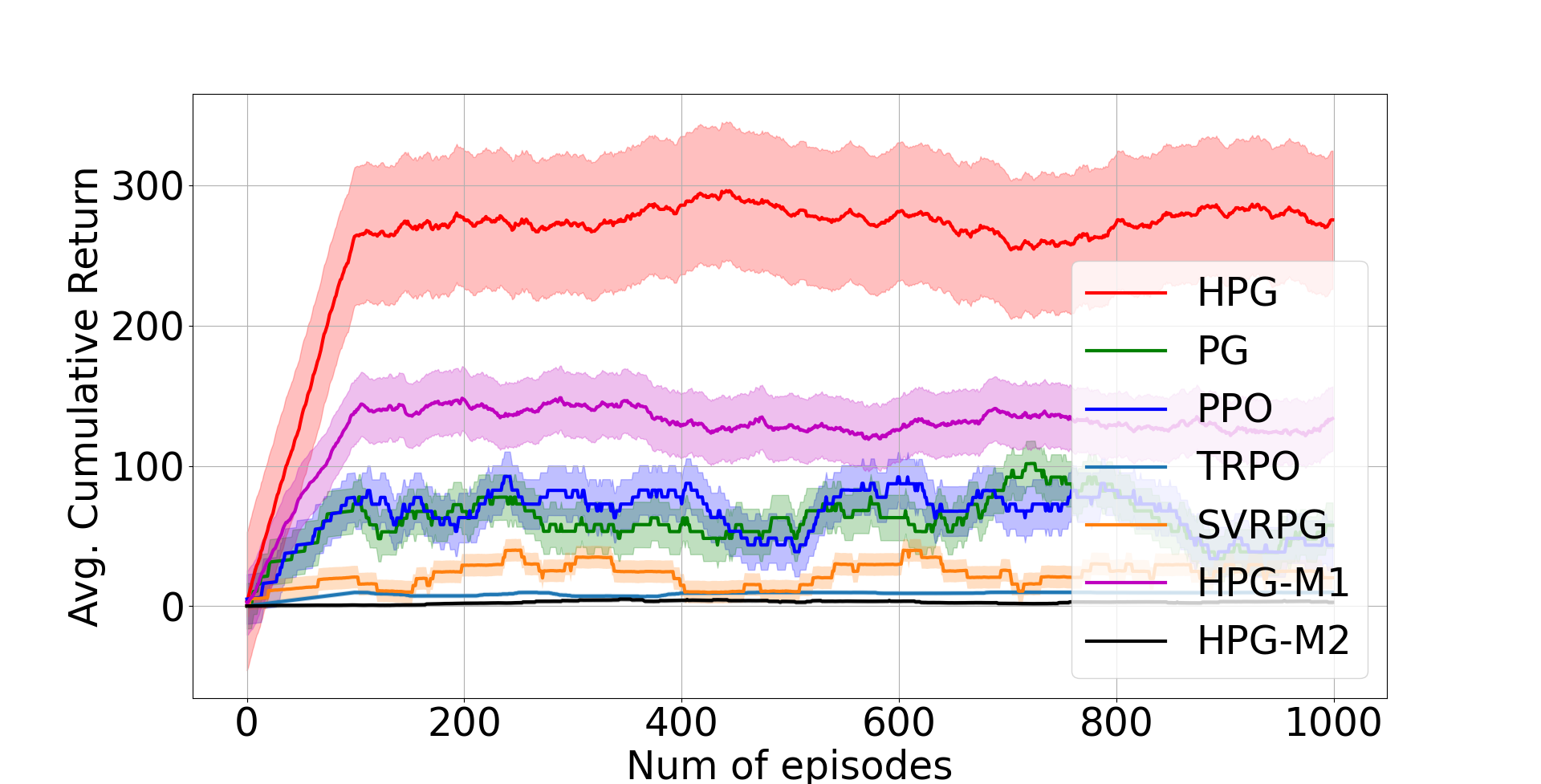} \label{fig:00}}\hspace{-2mm}
	\subfigure[Tuned scale $\sigma$ PMC.]
	{\includegraphics[scale=0.14]{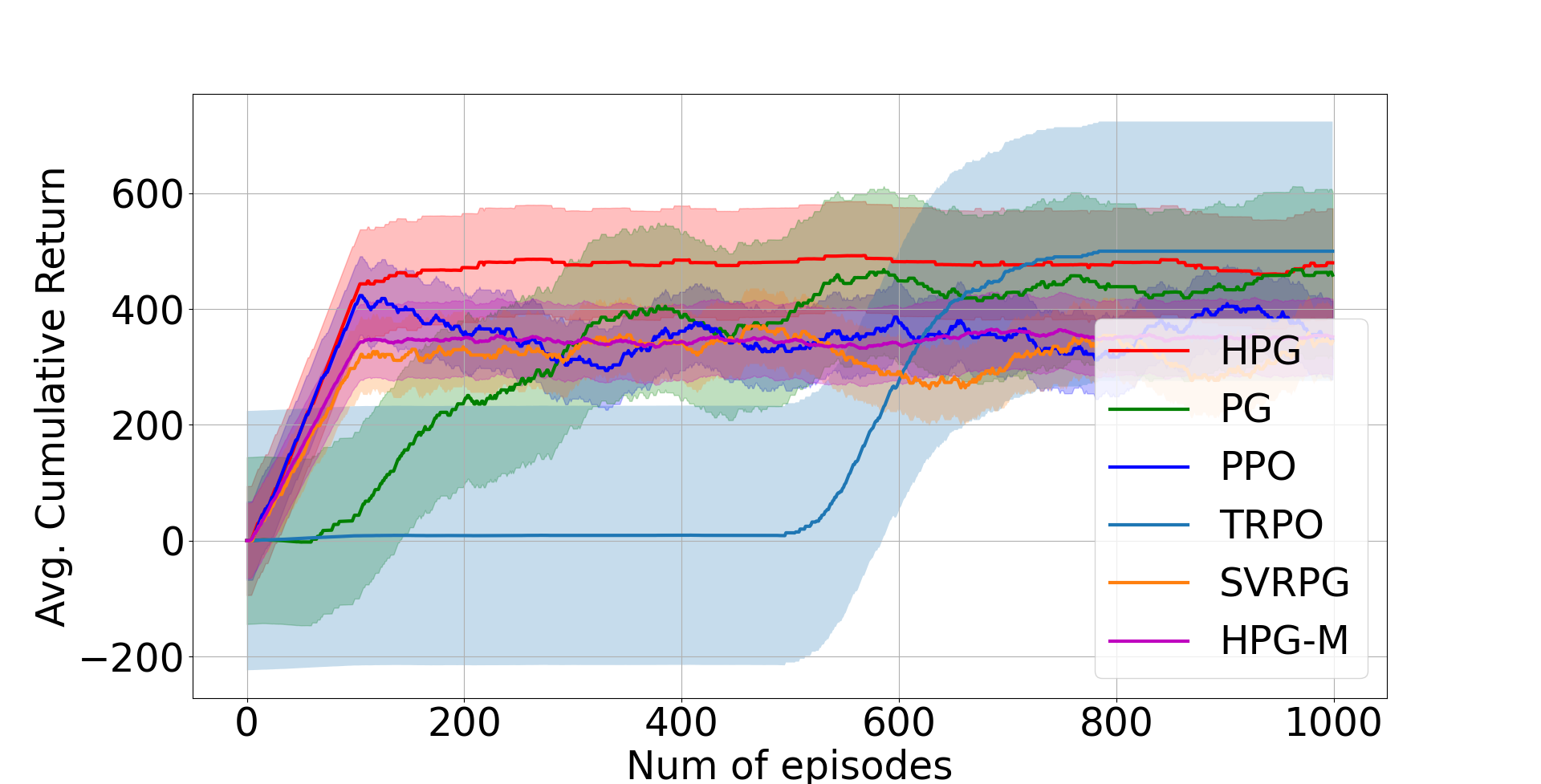} \label{fig:10}}\hspace{-2mm}
	\subfigure[Tuned scale $\sigma$ Mario.]
	{\includegraphics[scale=0.14]{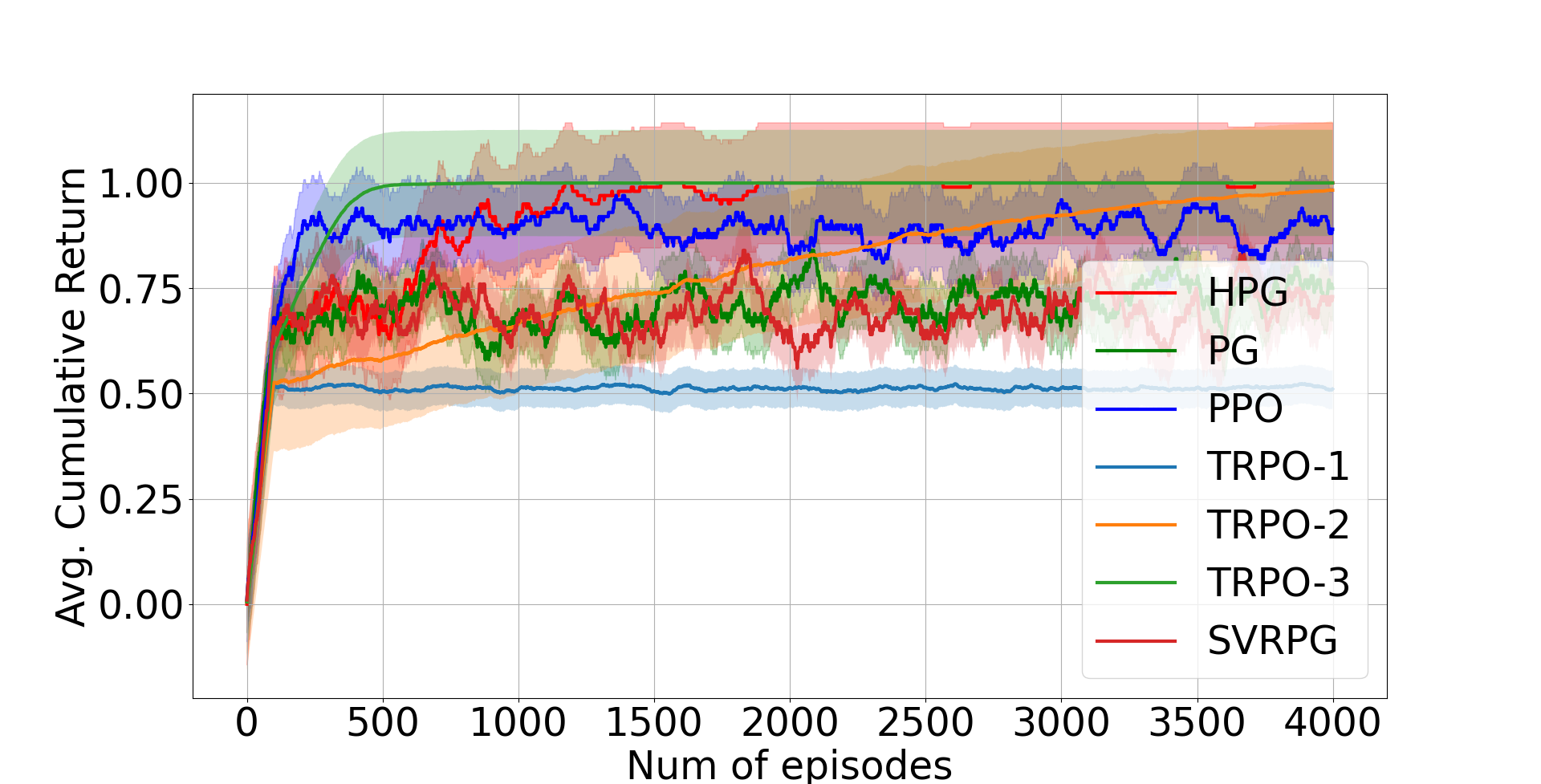} \label{fig:20}}
	\caption{ {\bf (a)} We plot the average cumulative returns for PMC environment over latest 100 episodes for Gaussian and Cauchy policies with constant $\sigma$. The importance of searching over a heavy-tailed (Cauchy) distribution is clear, as the Gaussian policy converges to spurious behavior.
		\textbf{(b)}  We plot average cumulative returns for PMC environment with variable $\sigma$ over latest 100 episodes. 
		{\textbf{(c)} Average commutative return for 1D Mario with variable sigma}. For TRPO, TRPO-1, 2, and 3, are respectively for trust region parameters $10^{-10}$, $10^{-6}$, and $10^{-5}$.
		\vspace{-0mm}}
	\label{comparisons2}
\end{figure*} 

Note that $\{ \bbX_i \}$ corresponds to the samples from policy gradient estimates.  The aforementioned  approach  has been  employed recently  for estimating the tail-index of stochastic gradients  \cite{garg2021proximal, simsekli2019tail}. Next, we present the main experiments results corroborating the findings in the main paper. Additional experiments with continuous control environments are provided in Appendix \ref{subsec:experiments_supplement}. 

 Fig \ref{comparisons2} compares the average commutative reward performance of HPG (for a Cauchy policy) to GPOMDP \cite{baxter2001infinite} with a Gaussian policy with fixed and tuned variance parameters \cite{papini2020balancing} (which we abbreviate as PG), as well as Proximal Policy Optimization (PPO) \cite{schulman2017proximal}, Trust Region Policy Optimization \cite{schulman2015trust}, 
 and Stochastic Variance Reduced PG (SVRPG) \cite{papini2018stochastic}, 
 for constant variance as well as variable variance.  In order to evaluate the Meta stable characteristics of the algorithms, we initialize each episodes at $s=2.26$, in the neighborhood of the local minima, $s = 2.67$.  Firstly in Fig. \ref{fig:00}, we evaluate the performance on PMC environment when the scale of the HPG is a constant $\sigma =3.0$ and the variance of Gaussian policy is also fixed $\sigma =3.0$. Secondly, we present the results with variable scale $\sigma$ for PMC and 1D Mario environments in Fig. \ref{fig:10}-\ref{fig:20}.  All the experiments use a  discounted factor of  $\gamma=0.97$  and we use   a diminishing step-size ranging from 0.005 to $5 \times 10^{-9}$. All the simulations are performed for $1000$ episodes using a  batch size of $B_k=5$ and with cumulative returns averaged over $100$ episodes.  For the comparison with PPO,  the policy ratio for  PPO is allowed to vary in the interval $[1-\epsilon_1, 1+\epsilon_1]$ with $\epsilon_1=0.2$.  Note that a fined tuned value of $\epsilon_1$ can result in a better performance as shown in Fig. \ref{fig:10}. However, note that the best performance feasible for PPO is same as that of PG.   The trust region parameters for TRPO, aka. maximum KL- divergence allowed is set to $0.001$.  In  addition, here we also evaluate performance of the HPG  against Stochastic Variance-Reduced Policy Gradient (SVRPG) (\cite{papini2018stochastic}). The number of epochs for SVRPG is fixed to $1000$ and epoch size, $m=10$. Further for PMC environment, we have included the comparisons with HPG-M which denotes the moderate tailed policy gradient with $\alpha=1.3$. In Fig. \ref{fig:00}, HPG-M1, HPG-M2 \blue{denotes the different instance} of moderate tailed policy gradient with $\sigma=5,3$, respectively.   For the experiments in  Fig \ref{comparisons2}, we use a simple  network without hidden layers.

From meta stability results of Section \ref{subsec:exit_time},  it is the  nature of jumps initiated by heavy tailed policies which results in better meta stable characteristics and  results in faster escape from spurious local extrema. In order to establish this fact, we plot the single test episode state visitation frequency (aka single episode occupancy measures) of HPG and PG once the training is done. The test episode is initialized at  $s=2.26$ (neighborhood of local extrema) and states of the environment  $s \in [-4, 3.709]$ are discretized into $100$ states and the heatmap \blue{of} the state visitation frequency is shown in Fig. \ref{comparisons4}. 
\begin{figure}[t]
	\centering
	\subfigure[PMC: Occupancy measure, HPG]
	{\includegraphics[width=.45\columnwidth, height=4.5cm]{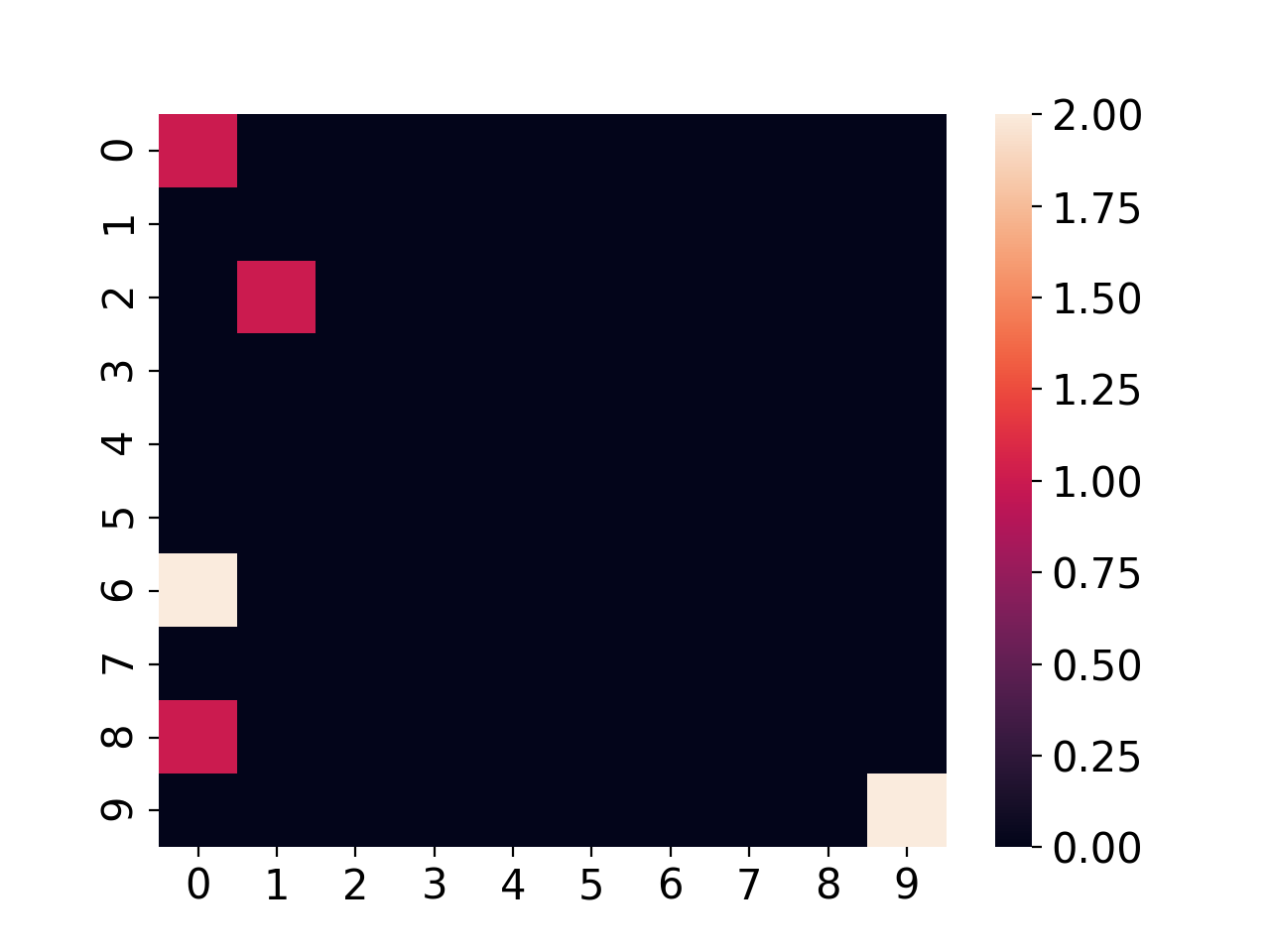} \label{fig:J2}}\hspace{-3mm}
	\subfigure[PMC: Occupancy measure, PG]
	{\includegraphics[width=.45\columnwidth, height=4.5cm]{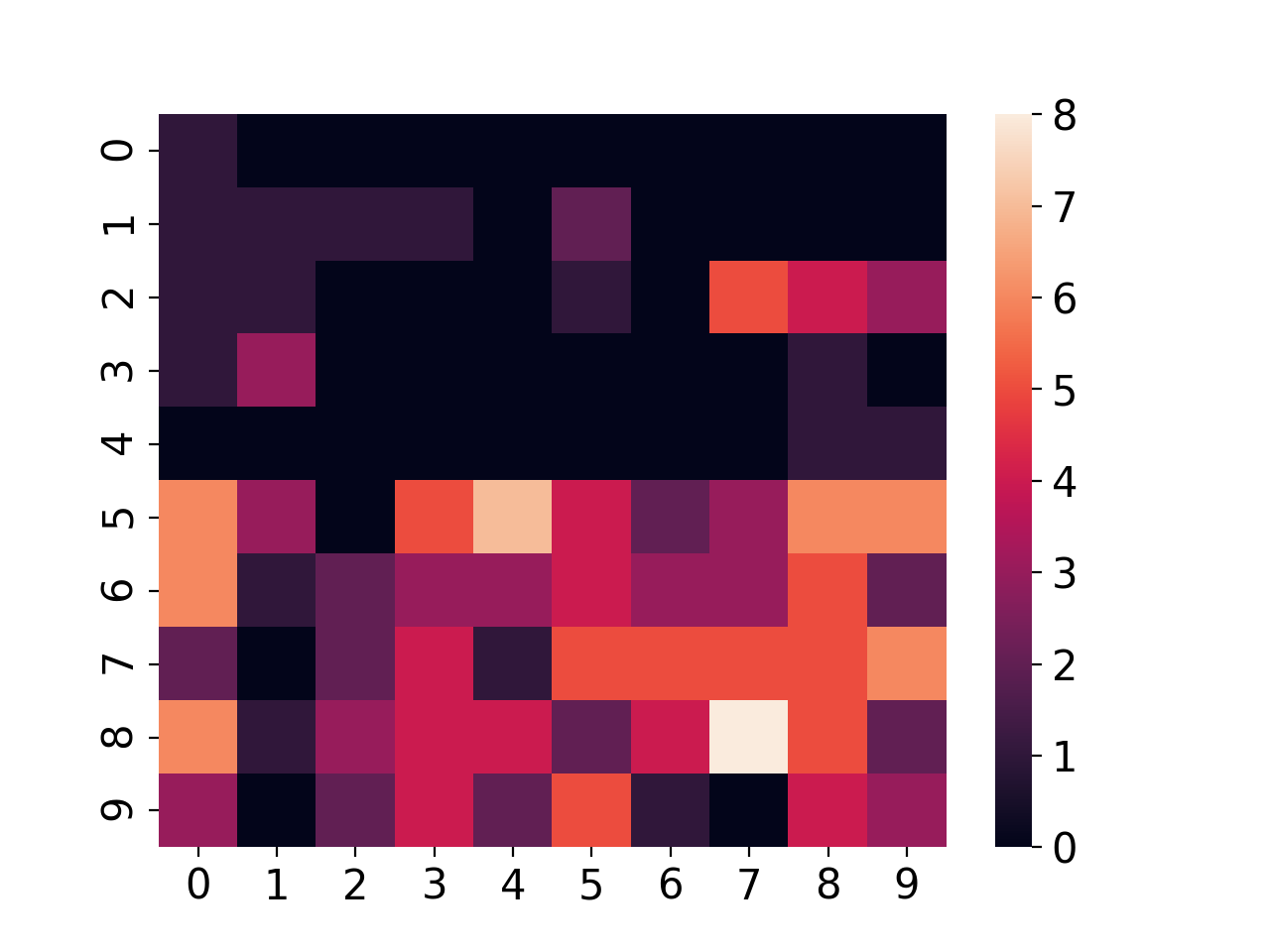} \label{fig:J3}}
	\caption{ Single episode occupancy measure for PMC when initialized in the neighborhood of spurious local extrema. We plot it for a test episode after network is trained for $1000$ episodes  with HPG policy and PG policy.  States from $[-4.0, 3.709]$ are discretized into $100$ states to calculate  the state visitation frequency.  Desired extrema of $s=-4.0$ corresponds to $(0,0)$ and the initial state $s=2.26$ is $(0,8)$.  Dark color in the map corresponds  to  region not visited during the test episode.  {\bf (a)} Single episode occupancy measure for PMC with HPG.   The importance of searching over a heavy-tailed (Cauchy) distribution is clear, as the  policy  takes heavier jumps and reach to desired goal faster.
		\textbf{(b)}  Single episode occupancy measure for  for PMC with PG.  Overall, we may observe that a Gaussian policy results in an occupancy measure which exhibits diffuse probability across the state space, failing to concentrate around actions associated with higher reward, whereas the heavy-tailed distribution results in an occupancy measure that assigns high likelihood to a small number of extreme actions in a manner reminiscent of the ``black swan" phenomenon \cite{taleb2007black}.}
	\label{comparisons4}
\end{figure} 
%

\section{Conclusion} \label{sec:conclusion}

We focused on PG method in infinite-horizon RL problems. Inspired by persistent exploration that mitigates the tendency of policies to become mired at spurious behavior, we sought to nearly satisfy it in continuous settings through heavy-tailed policies. Doing so invalidated several aspects of existing analyses, which motivated studying the sample complexity of policy search when the score function is H\"{o}lder continuous and its norm is integrable with respect to the policy, and introducing an exploration tolerance parameter to quantify the degree to which the score function may be unbounded.

Moreover, we established that heavy-tailed policies induce heavy-tailed transition dynamics, which jump away from local extrema as formally quantified by the metastability characteristics of its L\'{e}vy process representation. We discerned that policies a heavier tail induce transitions away from a local extrema more quickly than one with a lighter tail, and tend towards extrema with more volume, which  we empirically associated with more stable policies for a few RL problems in practice. The characterization of jumps defined by metastability provides a lens through which approximate persistent exploration may be satisfied in continuous space.

%
\bibliographystyle{plain}
\bibliography{RL_1}
%
\newpage
\renewcommand{\theHsection}{A\arabic{section}}

\appendix
\addcontentsline{toc}{section}{Appendix} 
\part{Appendix} 
\parttoc 

\section*{\centering Appendix}

\section{Technical Details of Policy Search} \label{technical_details}
\subsection{Finiteness of Integral in \eqref{equ:policy_grad_main}} \label{integral_bounded_proof}

\begin{lemma}\label{lemma:integral_bounded}
The integral in \eqref{equ:policy_grad_main} is finite.
\end{lemma}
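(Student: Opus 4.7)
The plan is to bound the norm of the integrand pointwise using Assumption~\ref{Assum:Q0}, then apply Cauchy--Schwarz in the action variable to handle the potentially unbounded score function using only the second-moment condition in Assumption~\ref{Assum:integral}, and finally integrate out the state using the fact that $\rho_{\pi_{\bbtheta}}$ is a probability distribution.

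First I would observe that Assumption~\ref{Assum:Q0} gives a uniform bound on the $Q$-function, namely $|Q_{\pi_{\bbtheta}}(s,a)| \leq U_R/(1-\gamma)$, by summing the geometric series. Passing the norm inside the integral and factoring the occupancy measure as $\rho_{{\bbtheta}}(s,a)=\rho_{\pi_{\bbtheta}}(s)\cdot \pi_{{\bbtheta}}(a|s)$ then yields
\[
\|\nabla_{\bbtheta} J(\bbtheta)\| \;\leq\; \frac{U_R}{(1-\gamma)^2} \int_{\cS} \rho_{\pi_{\bbtheta}}(s) \left(\int_{\cA} \|\nabla_{\bbtheta}\log\pi_{{\bbtheta}}(a|s)\| \cdot \pi_{{\bbtheta}}(a|s)\, da\right) ds.
\]
The key step is to control the inner integral. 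Since the score function is not assumed almost-surely bounded, I cannot pull its sup out; instead, I would apply the Cauchy--Schwarz inequality with respect to the measure $\pi_{{\bbtheta}}(\cdot|s)\,da$:
\[
\int_{\cA} \|\nabla_{\bbtheta}\log\pi_{{\bbtheta}}(a|s)\| \, \pi_{{\bbtheta}}(a|s)\, da \;\leq\; \left(\int_{\cA} \|\nabla_{\bbtheta}\log\pi_{{\bbtheta}}(a|s)\|^{2} \pi_{{\bbtheta}}(a|s)\, da\right)^{1/2} \left(\int_{\cA} \pi_{{\bbtheta}}(a|s)\, da\right)^{1/2} \leq B^{1/2},
\]
where the second factor equals one because $\pi_{{\bbtheta}}(\cdot|s)$ is a probability density, and the first is at most $B^{1/2}$ by Assumption~\ref{Assum:integral}. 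This bound is uniform in $s$.

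Finally, plugging this uniform-in-$s$ estimate back and using $\int_{\cS}\rho_{\pi_{\bbtheta}}(s)\,ds = 1$ (since $\rho_{\pi_{\bbtheta}}$ is a valid distribution, as noted in the paragraph preceding the lemma) gives the concrete bound $\|\nabla_{\bbtheta} J(\bbtheta)\| \leq U_R B^{1/2}/(1-\gamma)^2 < \infty$, which proves finiteness of the integral. The main (and essentially only) obstacle is the one already flagged in the text: the score function can be unbounded under heavy-tailed policies, so one cannot argue via an $L^{\infty}$ bound, and must route the argument through the $L^{2}(\pi)$ integrability in Assumption~\ref{Assum:integral} via Cauchy--Schwarz.
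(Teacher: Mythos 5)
Your proof is correct and follows essentially the same route as the paper's: bound $|Q_{\pi_{\bbtheta}}|$ by $U_R/(1-\gamma)$, reduce the inner action integral to $B^{1/2}$ via Assumption~\ref{Assum:integral} (the paper invokes Jensen's inequality where you invoke Cauchy--Schwarz, which is the same estimate here), and integrate out the state using that $\rho_{\pi_{\bbtheta}}$ is a probability measure, arriving at the identical bound $U_R\sqrt{B}/(1-\gamma)^2$.
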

\begin{proof}
	Consider the integral for the policy gradient from \eqref{equ:policy_grad_main}, we obtain
	$$	I:=\frac{1}{1-\gamma}\int_{\cS\times\cA}\rho_{\pi_{\bbtheta}}(s)\cdot\pi_{{\bbtheta}}(a\given s)\cdot\|\nabla \label{proof_1} \log[\pi_{{\bbtheta}}(a\given s)]\|\cdot |Q_{\pi_{\bbtheta}}\cdot(s,a)|dsda\nonumber
	$$
	From Assumption \eqref{Assum:Q0} and the definition of $Q$ function, we note that $Q_{\pi_{\bbtheta}}(s,a)\leq \frac{U_R}{1-\gamma}$ for any ${\bbtheta},s$ and $a$. Therefore we could upper bound the above integral as follows
	\begin{align}
		I\leq \frac{U_R}{(1-\gamma)^2}&\int_{\cS\times\cA}\rho_{\pi_{\bbtheta}}(s)\cdot \|\nabla \log\pi_{{\bbtheta}}(a\given s)\|\cdot \pi_{{\bbtheta}}(a\given s)ds da\nonumber
		\\
		\leq \frac{U_R \sqrt{B}}{(1-\gamma)^2}&\int_{s\in\cS}\rho_{\pi_{\bbtheta}}(s) \cdot \blue{ds}\label{proof_3}
	\end{align}
	where we obtain \eqref{proof_3} from Assumption \ref{Assum:integral} after applying the 	{ Jensen's inequality which implies that
		\begin{equation*}
			\int_{\cA}  \|\nabla_{\bbtheta} \log\pi_{{\bbtheta}}(a\given s)\|^x\cdot\pi_{{\bbtheta}}(a\given s)\cdot da \leq B^{\frac{x}{2}}
		\end{equation*}
		for $x\in(1,2]$.} Since $\rho_{\pi_{\bbtheta}}(a)$ is the occupancy measure distribution for states $s$, we can obtain the  following upper bound for the integral as
	%
	$I\leq \frac{U_R \sqrt{B}}{(1-\gamma)^2}$.
\end{proof}

\subsection{Proof of Lemma \ref{lemma:J}}\label{apx_lemma:J_proof}
\begin{proof}
	Let us start by considering the stochastic gradient $\hat{\nabla} J({\bbtheta})$ as defined in \eqref{eq:policy_gradient_iteration}
	\#\label{equ:RPG_unbiased}
	&\EE[\hat{\nabla} J({\bbtheta})\given {\bbtheta}]=\EE\bigg\{\sum_{t=0}^{T}\gamma^{t/2}\cdot R(s_t,a_t)\cdot \bigg(\sum_{\tau=0}^{t}\nabla\log\pi_{{\bbtheta}_k}(a_{\tau}\given s_{\tau})\bigg)  \bigg\}.
	\#
	In order to relate the above expression to the true gradient, we introduce the infinite sum via the identity function notation $\mathbbm{1}_{T \geq t}$  and modify \eqref{equ:RPG_unbiased} as
	\#\label{equ:RPG_exchange_0}
	\EE[\hat{\nabla} J({\bbtheta})\given {\bbtheta}]= \EE\bigg\{\lim_{N\rightarrow \infty}\sum_{t=0}^{N}\mathbbm{1}_{T \geq t}\cdot \gamma^{t/2}\cdot R(s_t,a_t)\cdot \bigg(\sum_{\tau=0}^{t}\nabla\log\pi_{{\bbtheta}_k}(a_{\tau}\given s_{\tau})\bigg)\bigg\}.
	\#
	To interchange the limit and expectation in the above expression via Dominated Convergence Theorem in \eqref{equ:RPG_exchange_0}, we need first to ensure that the individual terms are dominated by an integrable function. To do so, we consider the term inside the expectation in \eqref{equ:RPG_exchange_0} as
	\begin{align}
		\bigg\|\sum_{t=0}^{N}&\mathbbm{1}_{T \geq t}\cdot \gamma^{t/2}\cdot R(s_t,a_t)\cdot \bigg(\sum_{\tau=0}^{t}\nabla\log\pi_{{\bbtheta}_k}(a_{\tau}\given s_{\tau})\bigg)\bigg\|
		\nonumber
		\\
		&\leq 		U_R\sum_{t=0}^N\mathbbm{1}_{T \geq t}\cdot \gamma^{t/2}\cdot \bigg(\sum_{\tau=0}^{t}\|\nabla\log\pi_{{\bbtheta}_k}(a_{\tau}\given s_{\tau})\|\bigg)\label{dom_1},
	\end{align}
	which follows from the bound $R(s_t,a_t)\leq {U_R}$. The sum on the right-hand side of \eqref{dom_1} is integrable with respect to the occupancy measure via Assumption \ref{Assum:integral}. Therefore, we may apply Dominated Convergence Theorem in \eqref{equ:RPG_exchange_0} in order to exchange expectation and limit as follows  
	\#\label{equ:RPG_exchange_00}
	\EE[\hat{\nabla} J({\bbtheta})\given {\bbtheta}]=& \lim_{N\rightarrow \infty}\sum_{t=0}^{N}\EE\bigg[\EE_T\left[\mathbbm{1}_{T \geq t}\right]\cdot \gamma^{t/2}\cdot R(s_t,a_t)\cdot \bigg(\sum_{\tau=0}^{t}\nabla\log\pi_{{\bbtheta}_k}(a_{\tau}\given s_{\tau})\bigg)\bigg]\nonumber
	\\
	= &\lim_{N\rightarrow \infty}\sum_{t=0}^{N}\EE\bigg[ \mathbb{P}[\blue{T} \geq t]\cdot \gamma^{t/2}\cdot R(s_t,a_t)\cdot \bigg(\sum_{\tau=0}^{t}\nabla\log\pi_{{\bbtheta}_k}(a_{\tau}\given s_{\tau})\bigg)\bigg].
	\#
	where \eqref{equ:RPG_exchange_00} holds since \blue{$\EE_{T}[\mathbbm{1}_{T \geq t}] = \mathbb{P}[T \geq t]$}. Next, we note that since $T\sim\tx{Geom}(1-\gamma^{1/2})$ which implies that $\mathbb{P}[T\geq t]=\gamma^{t/2}$, hence we can write
	\#\label{equ:RPG_exchange_000}
	\EE[\hat{\nabla} J({\bbtheta})\given {\bbtheta}]
	= &\EE\bigg[\sum_{t=0}^{\infty}\gamma^{t}\cdot R(s_t,a_t)\cdot \bigg(\sum_{\tau=0}^{t}\nabla\log\pi_{{\bbtheta}_k}(a_{\tau}\given s_{\tau})\bigg)\bigg].
	\#
	After rearranging the order of summation in the above expression, we could write
	\#\label{equ:RPG_exchange_0000}
	\EE[\hat{\nabla} J({\bbtheta})\given {\bbtheta}]
	= &\EE\bigg[\sum_{\tau=0}^{\infty} \sum_{t=\tau}^{\infty}\gamma^{t}\cdot R(s_t,a_t)\cdot \nabla\log\pi_{{\bbtheta}_k}(a_{\tau}\given s_{\tau})\bigg]
	\nonumber
	\\
	= &\EE\bigg[\sum_{\tau=0}^{\infty} \gamma^\tau\cdot\sum_{t=\tau}^{\infty}\gamma^{t-\tau}\cdot R(s_t,a_t)\cdot \nabla\log\pi_{{\bbtheta}_k}(a_{\tau}\given s_{\tau})\bigg]\nonumber
	\\
	= &\EE\bigg[\sum_{\tau=0}^{\infty} \gamma^\tau\cdot{Q}_{\pi_{{\bbtheta}}}(s_\tau,a_\tau)\cdot \nabla\log\pi_{{\bbtheta}_k}(a_{\tau}\given s_{\tau})\bigg]
	\nonumber
	\\
	= &{\nabla} J({\bbtheta}).
	\#
	%
	%
	%
	%
	which is as stated in Lemma \ref{lemma:J}.
\end{proof}
\section{Proof of Lemma \ref{Lemma:PGL}}\label{lemma_1_proof}
Before providing the proof for the statement of Lemma \ref{Lemma:PGL}, we discuss an  intermediate Lemma \ref{lemma_occu_meas} provided next. 
\begin{lemma}\label{lemma_occu_meas}
	For any given $(\bbtheta,\bbtheta')\in\mathbb{R}^d$, it holds that 
	
	1) the occupancy measure is Lipschitz continuous, which implies that  
	\begin{align}\label{lips_occu}
		\norm{\rho_{\bbtheta}(s,a)-\rho_{\bbtheta'}(s,a)}_{1}\leq M_\rho \cdot \|\bbtheta-\bbtheta'\|,
	\end{align}
	
	2) and the $Q_{\pi_{{\bbtheta}}}$ is also  Lipschitz continuous which satisfies 
	\begin{align}\label{Q_function_lips}
		\norm{Q_{\pi_{{\bbtheta}}}(s,a)-Q_{\pi_{{\bbtheta'}}}(s,a)}_{1}\leq M_Q\cdot \|\bbtheta-\bbtheta'\|,
	\end{align}
	where $M_\rho= \frac{\sqrt{B}}{1-\gamma}$ and $M_Q=\frac{ {\gamma U_R M_{\rho}}}{1-\gamma}$.
\end{lemma}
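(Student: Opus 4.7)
The plan is to prove both statements by first establishing a $\theta$-Lipschitz bound on the policy itself in $L^{1}(\mathcal{A})$, and then lifting this bound to the occupancy measure via telescoping of Markov-chain powers, and to the $Q$-function via the Bellman contraction. For any fixed state $s$, I would write
$\pi_\theta(a|s)-\pi_{\theta'}(a|s) = \int_{0}^{1}\pi_{\theta_u}(a|s)\,\nabla_\theta\log\pi_{\theta_u}(a|s)^{\top}(\theta-\theta')\,du$
along the segment $\theta_u := u\theta+(1-u)\theta'$, take absolute values, integrate $a$ over $\mathcal{A}$, and apply Cauchy--Schwarz pointwise in $u$ together with Assumption \ref{Assum:integral} to get the base inequality $\|\pi_\theta(\cdot|s)-\pi_{\theta'}(\cdot|s)\|_{1}\le\sqrt{B}\,\|\theta-\theta'\|$, uniformly in $s$. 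This is the single ingredient on which everything else rests.

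For statement 1, I would first bound the induced state-to-state kernel $P_\theta(s'|s)=\int\pi_\theta(a|s)P(s'|s,a)\,da$ by Fubini and the policy bound, obtaining $\|P_\theta(\cdot|s)-P_{\theta'}(\cdot|s)\|_{1}\le\sqrt{B}\,\|\theta-\theta'\|$. The standard telescoping identity $P_\theta^{t}-P_{\theta'}^{t}=\sum_{k=0}^{t-1}P_\theta^{k}(P_\theta-P_{\theta'})P_{\theta'}^{t-1-k}$, combined with the $L^{1}$-nonexpansiveness of Markov kernels, then yields $\|\delta_{s_0}P_\theta^{t}-\delta_{s_0}P_{\theta'}^{t}\|_{1}\le t\sqrt{B}\,\|\theta-\theta'\|$. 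Summing against $(1-\gamma)\gamma^{t}$ gives a state-marginal bound $\|\rho_{\pi_\theta}-\rho_{\pi_{\theta'}}\|_{1}\le \tfrac{\gamma\sqrt{B}}{1-\gamma}\|\theta-\theta'\|$. Finally, I decompose $\rho_\theta(s,a)-\rho_{\theta'}(s,a)=\rho_{\pi_\theta}(s)(\pi_\theta-\pi_{\theta'})(a|s)+(\rho_{\pi_\theta}-\rho_{\pi_{\theta'}})(s)\pi_{\theta'}(a|s)$, integrate in $(s,a)$, and note the two contributions combine as $\sqrt{B}\|\theta-\theta'\|+\tfrac{\gamma\sqrt{B}}{1-\gamma}\|\theta-\theta'\|=\tfrac{\sqrt{B}}{1-\gamma}\|\theta-\theta'\|$, recovering $M_\rho=\sqrt{B}/(1-\gamma)$.

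For statement 2, I would work at the level of the Bellman operator $T_{\pi_\theta}Q(s,a) := R(s,a)+\gamma\int P(s'|s,a)\int\pi_\theta(a'|s')Q(s',a')\,da'ds'$, and use the identity $Q^{\pi_\theta}-Q^{\pi_{\theta'}}=(T_{\pi_\theta}Q^{\pi_\theta}-T_{\pi_\theta}Q^{\pi_{\theta'}})+(T_{\pi_\theta}Q^{\pi_{\theta'}}-T_{\pi_{\theta'}}Q^{\pi_{\theta'}})$. The first bracket is contracted by $\gamma$ in sup-norm, while the second reduces to $\gamma\int P(s'|s,a)\int(\pi_\theta-\pi_{\theta'})(a'|s')Q^{\pi_{\theta'}}(s',a')\,da'ds'$, whose absolute value is at most $\gamma\cdot\tfrac{U_R}{1-\gamma}\cdot\sqrt{B}\|\theta-\theta'\|$ using the uniform bound $\|Q^{\pi_{\theta'}}\|_\infty\le U_R/(1-\gamma)$ from Assumption \ref{Assum:Q0} and the policy bound above. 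Taking sup over $(s,a)$ produces the self-bounding inequality $\|Q^{\pi_\theta}-Q^{\pi_{\theta'}}\|_\infty\le\gamma\|Q^{\pi_\theta}-Q^{\pi_{\theta'}}\|_\infty+\tfrac{\gamma U_R\sqrt{B}}{1-\gamma}\|\theta-\theta'\|$, which rearranges to $M_Q=\gamma U_R\sqrt{B}/(1-\gamma)^{2}=\gamma U_R M_\rho/(1-\gamma)$. The main bookkeeping obstacle is keeping careful track of the $\gamma$ factors in the telescoping geometric sum $(1-\gamma)\sum_t t\gamma^{t}$ and seeing that the two halves of the product-rule decomposition in the occupancy measure step combine to exactly cancel the missing $1-\gamma$; modulo that, the arguments are routine given Assumption \ref{Assum:integral}, and notably no heavy-tailed features of $\pi_\theta$ enter.
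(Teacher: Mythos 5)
Your proof is correct and reaches exactly the stated constants $M_\rho=\sqrt{B}/(1-\gamma)$ and $M_Q=\gamma U_R\sqrt{B}/(1-\gamma)^2=\gamma U_R M_\rho/(1-\gamma)$, but it takes a genuinely different route from the paper on both parts. For statement (1), the paper does not pass through a per-state policy bound at all: it defines the scalar function $d(\bbtheta,\bbtheta')=\|\rho_{\bbtheta}-\rho_{\bbtheta'}\|_1$, differentiates it in $\bbtheta$ (pulling a $\mathrm{sign}$ factor through the integral), expands the occupancy measure into full trajectory probabilities $p_{\bbtheta}(\mathcal{T}_t)$, applies the log-derivative trick to get $\sum_{u=0}^{t}\nabla\log\pi_{\bbtheta}(a_u|s_u)$, and bounds the resulting gradient by $(1-\gamma)\sum_t(t+1)\gamma^t\sqrt{B}=\sqrt{B}/(1-\gamma)$. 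Your version — the segment/Cauchy--Schwarz bound $\|\pi_{\bbtheta}(\cdot|s)-\pi_{\bbtheta'}(\cdot|s)\|_1\le\sqrt{B}\|\bbtheta-\bbtheta'\|$, kernel telescoping giving the factor $t$, and the product-rule split of $\rho_{\bbtheta}(s,a)=\rho_{\pi_{\bbtheta}}(s)\pi_{\bbtheta}(a|s)$ — is more modular and avoids the somewhat delicate step of differentiating an $L^1$ norm under the integral sign; the $(t+1)$ in the paper's sum is exactly your $t$ from the kernel telescope plus the $1$ from the policy factor in the product rule, so the two bookkeepings agree term by term. For statement (2), the paper reuses statement (1): it writes $Q_{\pi_{\bbtheta}}(s,a)=R(s,a)+\tfrac{\gamma}{1-\gamma}\int R\,\mu_{\bbtheta}^{sa}$ with $\mu_{\bbtheta}^{sa}$ the occupancy measure started from $s_0\sim p(\cdot|s,a)$ and applies the Lipschitz bound on that occupancy measure, whereas your Bellman-operator decomposition with the $\gamma$-contraction and the self-bounding inequality is independent of part (1). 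Both are valid; the paper's buys part (2) essentially for free from part (1), while yours isolates the single inequality $\|\pi_{\bbtheta}(\cdot|s)-\pi_{\bbtheta'}(\cdot|s)\|_1\le\sqrt{B}\|\bbtheta-\bbtheta'\|$ as the only analytic input, which makes the dependence on Assumption \ref{Assum:integral} more transparent.
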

\begin{proof}
	\textbf{Proof of statement (1)}. In order to bound the term $	\norm{\rho_{\bbtheta}(s,a)-\rho_{\bbtheta'}(s,a)}_{1}$, let us define a function $d(\bbtheta,\bbtheta')$ as follows
	\begin{align}\label{def_d}
		d(\bbtheta,\bbtheta')
		=&  \|\rho_{\bbtheta}(s,a)-\rho_{\bbtheta'}(s,a)\|_1
		\\
		=& \int_{ \mathcal{S} \times \mathcal{A}}|\rho_{\bbtheta}(s,a)-\rho_{\bbtheta'}(s,a)|\cdot ds\ da .
	\end{align} 
	Next, we evaluate the gradient of $d(\bbtheta,\bbtheta')$ with respect to $\bbtheta$ and tale norm as 
	\begin{align}\label{main_eqution}
		\|\nabla_{\bbtheta} d(\bbtheta,\bbtheta')\| \leq &\bigg\|\int_{ \mathcal{S} \times \mathcal{A}}\text{sign}[\rho_{\bbtheta}(s,a)-\rho_{\bbtheta'}(s,a)]\cdot\nabla_{\bbtheta} \rho_{\bbtheta}(s,a)\cdot ds\ da \bigg\|\nonumber
		\\
		\leq& \int_{ \mathcal{S} \times \mathcal{A}}\|\nabla_{\bbtheta} \rho_{\bbtheta}(s,a)\| ds\ da,
	\end{align}
	%
%
	where $\text{sign}[x] = + 1$ if $x\geq0$ and $\text{sign}[x] = - 1$ if $x<0$. We recall the definition of the occupancy measure and write 
	\begin{align}\label{occupancy_measure}
		\rho_{\bbtheta}(s,a) = (1-\gamma)\sum_{t=0}^{\infty} \gamma^t p(s_t=s,a_t=a~|~\pi_{\bbtheta}, \xi(s_0))
	\end{align}
	where $p(s_t=s,a_t=a~|~\pi_{\bbtheta}, \xi(s_0))$ is the probability of visiting state $s$ and $a$ at the $t^{th}$ instant and $\xi(s_0)$ denotes the initial state distribution. We write the explicit form of $p(s_t=s,a_t=a~|~\pi_{\bbtheta}, \xi(s_0))$ as
	\begin{align}\label{probability_measure}
		p(s_t\!=\!s,a_t\!=\!a~|~\pi_{\bbtheta}, \xi(s_0))=&\int_{ \mathcal{S} \times \mathcal{A}}\hspace{-0mm}\xi(s_0)\cdot \pi_{\bbtheta}(a_0|s_0)p(s_1|s_0,a_0)\times
		\nonumber
		\\
		&\hspace{1.6cm}\times \pi_{\bbtheta}(a_1|s_1)p(s_2|s_1,a_1)\times		\nonumber
		\\
		&\hspace{3cm}\vdots\nonumber
		\\		\nonumber
		&\hspace{1.6cm}\times \pi_{\bbtheta}(a_{t-1}|s_{t-1})p(s_t|s_{t-1},a_{t-1})\times		\nonumber
		\\
		&\hspace{1.6cm}\times\pi_{\bbtheta}(a_{t}\!=\!a|s_{t}\!=\!s)\cdot d\bbs_{t-1}d\bba_{t-1}
	\end{align}
	where $d\bbs_{t-1}= ds_{0}ds_{1}\cdots ds_{t-1}$, and $d\bba_{t-1}= da_{0}da_{1}\cdots da_{t-1}$ denotes the integration of the state action pairs so far till $t$. Let us collect the state action pair trajectory till $t$ as~$\mathcal{T}_t:=\{(s_0,a_0), (s_1,a_1), \cdots, (\blue{s_{t},a_{t}})\}$ and write
	\begin{align}\label{def_prob}
		p_{\bbtheta}(\mathcal{T}_t)=\xi(s_0)\cdot \pi_{\bbtheta}(a_0|s_0)p(s_1|s_0,a_0)\times\cdots\times p(s_t|s_{t-1},a_{t-1})\cdot\pi_{\bbtheta}(a_{t}|s_{t}).
	\end{align}
	Using the notation in \eqref{def_prob}, we could rewrite \eqref{probability_measure} as 
\begin{align}\label{probability_measure2}
	p(s_t\!=\!s,a_t\!=\!a~|~\pi_{\bbtheta}, \xi(s_0))=&\int_{ \mathcal{T}_{t-1}}	p_{\bbtheta}(\mathcal{T}_{t-1})\cdot p(s_t|s_{t-1},a_{t-1})\cdot\pi_{\bbtheta}(a_{t}\!=\!a|s_{t}\!=\!s)\cdot d\mathcal{T}_{t-1}.
\end{align}

	Calculating the gradient on both sides of \eqref{occupancy_measure}, we get
	\begin{align}\label{occupancy_measure221}
		\nabla_{\bbtheta}\rho_{\bbtheta}(s,a) = (1-\gamma)\sum_{t=0}^{\infty} \gamma^t\nabla_{\bbtheta}p(s_t=s,a_t=a~|~\pi_{\bbtheta}, \xi(s_0)).
	\end{align}
	Using the definition \eqref{probability_measure2}, we could write the gradient in \eqref{occupancy_measure221} as follows
	\begin{align}\label{occupancy_measure2}
		\nabla_{\bbtheta}\rho_{\bbtheta}(s,a) = (1-\gamma)\sum_{t=0}^{\infty} \gamma^t\bigg[\int_{\mathcal{T}_{t-1}}\nabla_{\bbtheta}\big\{p_{\bbtheta}(\mathcal{T}_{t-1})\cdot p(s|s_{t-1},a_{t-1})\cdot\pi_{\bbtheta}(a|s)\big\}\cdot d\mathcal{T}_{t-1}\bigg],
	\end{align}
	\blue{ where by default, for the term $t=0$, we let $p_{\bbtheta}(\mathcal{T}_{-1})\equiv1,$ and $p(s|s_{t-1},a_{t-1}) = \xi(s)$.}
	Now we shift to upper bound the right hand side of \eqref{main_eqution} using the simplified definition in \eqref{occupancy_measure2}. 
		\blue{	Let us rewrite the inequality in \eqref{main_eqution} as
  \begin{align}
    	&\|\nabla_{\bbtheta} d(\bbtheta,\bbtheta')\|
    	\leq \int_{ \mathcal{S} \times \mathcal{A}}(1-\gamma)\sum_{t=0}^{\infty} \gamma^t\bigg\|\int_{\mathcal{T}_{t-1}}\nabla_{\bbtheta}\big\{p_{\bbtheta}(\mathcal{T}_{t-1})\cdot p(s|s_{t-1},a_{t-1})\cdot\pi_{\bbtheta}(a|s)\big\}\cdot d\mathcal{T}_{t-1}\bigg\|\cdot ds\ da, \tag{B.12} \nonumber
    \end{align}
where note that the integration over $\mathcal{S}\times \mathcal{A}$ \blue{\textit{is now outside the norm as correctly pointed out by the reviewer}}. 
	For this correct version of (B.12) the argument from the reviewer will no longer hold and will not result in any confusion. Note that $\nabla_{\bbtheta}\log\big\{p_{\bbtheta}(\mathcal{T}_{t-1})\cdot p(s|s_{t-1},a_{t-1})\cdot\pi_{\bbtheta}(a|s)\big\} = \sum_{i=0}^t\nabla_{\bbtheta}\log\pi_\theta(a_i|s_i)$, with $(s_t,a_t) = (s,a)$, then we have
	\begin{align}
		 \bigg\|\int_{\mathcal{T}_{t-1}}&\nabla_{\bbtheta}\big\{p_{\bbtheta}(\mathcal{T}_{t-1})\cdot p(s|s_{t-1},a_{t-1})\cdot\pi_{\bbtheta}(a|s)\big\} d\mathcal{T}_{t-1}\bigg\|\\
		 = & \bigg\|\int_{\mathcal{T}_{t-1}} p_{\bbtheta}(\mathcal{T}_{t-1})\cdot p(s|s_{t-1},a_{t-1})\cdot\pi_{\bbtheta}(a|s)\cdot \sum_{i=0}^t\nabla_{\bbtheta}\log\pi_\theta(a_i|s_i)  d\mathcal{T}_{t-1}\bigg\|\nonumber\\
		 \leq & \int_{\mathcal{T}_{t-1}} p_{\bbtheta}(\mathcal{T}_{t-1})\cdot p(s|s_{t-1},a_{t-1})\cdot\pi_{\bbtheta}(a|s)\cdot \
		\Big\|\sum_{i=0}^t\nabla_{\bbtheta}\log\pi_\theta(a_i|s_i)\Big\|  d\mathcal{T}_{t-1} \nonumber\\
	     \leq & \int_{\mathcal{T}_{t-1}} p_{\bbtheta}(\mathcal{T}_{t-1})\cdot p(s|s_{t-1},a_{t-1})\cdot\pi_{\bbtheta}(a|s)\cdot (t+1)\sqrt{B}  d\mathcal{T}_{t-1} \nonumber
	\end{align}
    Substitute the above inequality to B.12 gives us 
        \begin{align}
    	&\|\nabla_{\bbtheta} d(\bbtheta,\bbtheta')\|
    	\leq \int_{ \mathcal{S} \times \mathcal{A}}(1-\gamma)\sum_{t=0}^{\infty} \gamma^t\cdot \int_{\mathcal{T}_{t-1}} p_{\bbtheta}(\mathcal{T}_{t-1})\cdot p(s|s_{t-1},a_{t-1})\cdot\pi_{\bbtheta}(a|s)\cdot (t+1)\sqrt{B}\cdot  d\mathcal{T}_{t-1}\cdot ds\ da,  \nonumber
    \end{align}
		We note that we can take $(1-\gamma)\sum_{t=0}^{\infty} \gamma^t$ outside the integration $\int_{ \mathcal{S} \times \mathcal{A}}$, hence we get
		\begin{align}
			&\|\nabla_{\bbtheta} d(\bbtheta,\bbtheta')\|
			\nonumber
			\\
			&
	\leq (1-\gamma)\sum_{t=0}^{\infty} \gamma^t\cdot \int_{ \mathcal{S} \times \mathcal{A}} \int_{\mathcal{T}_{t-1}} p_{\bbtheta}(\mathcal{T}_{t-1})\cdot p(s|s_{t-1},a_{t-1})\cdot\pi_{\bbtheta}(a|s)\cdot (t+1)\sqrt{B}\cdot  d\mathcal{T}_{t-1}\cdot ds\ da  \nonumber. \nonumber
		\end{align}
		%
		%
		After adjusting the limits of integration and from the definition of $	p_{\bbtheta}(\mathcal{T}_t)=\xi(s_0)\cdot \pi_{\bbtheta}(a_0|s_0)p(s_1|s_0,a_0)\times\cdots\times p(s_t|s_{t-1},a_{t-1})\cdot\pi_{\bbtheta}(a_{t}|s_{t})$ from (B.8) in the main paper,  we can write
		\begin{align}
			&\|\nabla_{\bbtheta} d(\bbtheta,\bbtheta')\|
			\leq (1-\gamma)\sum_{t=0}^{\infty} \gamma^t \int_{\mathcal{T}_{t}}  \big((t+1)\sqrt{B}\big)\cdot p_{\bbtheta}(\mathcal{T}_t)\cdot d\mathcal{T}_{t}. \nonumber
		\end{align}
		From Assumption 2, it holds that
		%
		$\int_{\mathcal{A}}  \|\nabla_{\bbtheta} \log\pi_{{\bbtheta}}(a| s)\|\cdot\pi_{{\bbtheta}}(a| s) da \leq \sqrt{B}$. 
		%
	We can write  the above inequality as 
		\begin{align}\label{main_eqution2222}
			\|\nabla_{\bbtheta} d(\bbtheta,\bbtheta')\|
			\leq  &(1-\gamma)\sum_{t=0}^{\infty} \gamma^t (t+1)\sqrt{B}=\frac{\sqrt{B}}{1-\gamma}:=M_\rho,  \tag{B.15}
		\end{align}
		where we have used the fact that arithmetic–geometric sequence $\sum_{t=0}^{\infty}(t+1) \gamma^t = \frac{\gamma}{(1-\gamma)^2}$. Since the gradient of function $d(\bbtheta,\bbtheta')$ with respect to $\bbtheta$ is bounded, it implies that  $d(\bbtheta,\bbtheta')$ is Lipschitz with respect to $\bbtheta$, which further implies that
		\begin{align}\label{lipschitz}
			|d(\bbtheta,\bbtheta') - d(\bbtheta',\bbtheta')|\leq M_\rho\|\bbtheta-\bbtheta'\| ,  \tag{B.16}
		\end{align}
		for all $\bbtheta,\bbtheta'\in\mathbb{R}^d$. 	Next,  substituting the definition $d(\bbtheta,\bbtheta')=\|\rho_{\bbtheta}(s,a)-\rho_{\bbtheta'}(s,a)\|_1$ into \eqref{lipschitz} and noting that $d(\bbtheta',\bbtheta')=0$ , we get
		\begin{align}\label{lipschitz2}
			\|\rho_{\bbtheta}(s,a)-\rho_{\bbtheta'}(s,a)\|_1
			\leq M_\rho\|\bbtheta-\bbtheta'\|,
		\end{align}
		which is as stated in Lemma \ref{lemma_occu_meas}(1).}

	\textbf{Proof of statement (2)}. Let us define the occupancy measure, 
	\begin{align}\label{occupancy_measure222}
		\mu_{\bbtheta}^{sa}(s',a') = (1-\gamma)\sum_{t=0}^{\infty} \gamma^t\cdot p(s_t=s',a_t=a'~|~\pi_{\bbtheta}, s_0 \sim p(\cdot|s,a))
	\end{align}
	where the initial state $s_0 \sim p(\cdot|s,a)$. With the help of occupancy measure $\mu_{\bbtheta}^{sa}(s',a')$, we could write the $Q_{\pi_{{\bbtheta}}}$ function for state action pair $(s,a)$ as 
	\begin{align}\label{q_function}
		Q_{\pi_{{\bbtheta}}}(s,a)=R(s,a)+\frac{\gamma}{1-\gamma} \int R(s',a')\cdot	\mu_{\bbtheta}^{sa}(s',a')\cdot ds'da'.
	\end{align}
	Using the definition in \eqref{q_function}, we can write
	\begin{align}
		|Q_{\pi_{{\bbtheta}}}(s,a)-Q_{\pi_{{\bbtheta'}}}(s,a)|\leq \frac{\gamma U_R}{1-\gamma}	\|\mu_{\bbtheta}^{sa}(s',a')-\mu_{\bbtheta'}^{sa}(s',a')\|_1.
	\end{align}	
	Utilizing the upper bound in \eqref{lipschitz2}, we can write
	Using the definition in \eqref{q_function}, we can write
	\begin{align}
		|Q_{\pi_{{\bbtheta}}}(s,a)-Q_{\pi_{{\bbtheta'}}}(s,a)|\leq \frac{ {\gamma U_R M_{\rho}}}{1-\gamma}\|\bbtheta-\bbtheta'\|.
	\end{align}	
as stated in Lemma \ref{lemma_occu_meas}(2).
	
\end{proof}

Now we shift focus to prove the statement of Lemma \ref{Lemma:PGL} as follows.
\begin{proof}[Proof of Lemma \ref{Lemma:PGL}]
	To obtain a bound on the gradient norm difference for $J$, we start by considering the  term $\| {\nabla} _{{\bbtheta}} J ({\bbtheta}_1) - {\nabla} _{{\bbtheta}} J ({\bbtheta}_2 )  \|$ and expanding it using the definition in \eqref{equ:policy_grad_main}, we get 
	\begin{align}\label{first}
		\| {\nabla} _{{\bbtheta}} J ({\bbtheta}_1) - {\nabla} _{{\bbtheta}} J ({\bbtheta}_2 )  \|
		& \leq  \frac{1}{1-\gamma}\left\| \int_{ \mathcal{S} \times \mathcal{A}} Q_{\pi_{{\bbtheta}_1}}(s,a) \nabla\log\pi_{{\bbtheta_1}}(a | s)    {\rho_{\pi_{{\bbtheta}_1}}(s,a) }  ds  \, da \right.  
		\\ 
		& \hspace{2cm}\left.  - \int_{ \mathcal{S} \times \mathcal{A}}    Q_{\pi_{{\bbtheta}_2}}(s,a)  \nabla\log\pi_{{\bbtheta_2}}(a | s)   \rho_{\pi_{{\bbtheta}_2}}(s,a)  ds  \, da \right\| .\nonumber
	\end{align}
	Add and subtract the terms $Q_{\pi_{{\bbtheta}_1}}(s,a) \nabla\log\pi_{{\bbtheta_2}}(a | s)  {\rho_{\pi_{{\bbtheta}_1}}(s)} \pi_{{\bbtheta}_1}(a|s) $ and $Q_{\pi_{{\bbtheta}_1}}(s,a) \nabla\log\pi_{{\bbtheta_2}}(a | s)  {\rho_{\pi_{{\bbtheta}_2}}(s)} \pi_{{\bbtheta}_1}(a|s) $  inside the first  integral of \eqref{first} and use triangle inequality to obtain
	%
	\begin{align} \label{Eqn:int5}
		&\| {\nabla} _{{\bbtheta}} J ({\bbtheta}_1) - {\nabla} _{{\bbtheta}} J ({\bbtheta}_2 )  \|  
		\\
		&  \leq  \underbrace{ \frac{1}{1-\gamma}\bigg\|\int_{ \mathcal{S} \times \mathcal{A}}  Q_{\pi_{{\bbtheta}_1}}(s,a) \bigg(\nabla\log\pi_{{\bbtheta_1}}(a | s) -\nabla\log\pi_{{\bbtheta_2}}(a | s)\bigg)  {\rho_{\pi_{{\bbtheta}_1}}(s,a) } ds  \, da \bigg\|}_{\bbI_1} \nonumber 
		\\ 
		& \hspace{2cm}  +\underbrace{\frac{1}{1-\gamma}\bigg\|\int_{ \mathcal{S} \times \mathcal{A}}  \hspace{0mm} \bigg(Q_{\pi_{{\bbtheta}_1}}(s,a)-Q_{\pi_{{\bbtheta}_2}}(s,a)\bigg)\nabla\log\pi_{{\bbtheta_2}}(a | s)   
			\rho_{\pi_{{\bbtheta}_2}}(s,a)  ds  \, da\bigg\|}_{\bbI_2}  \nonumber 
		\\ 
		& \hspace{2.5cm}  +\underbrace{\frac{1}{1-\gamma}\bigg\|\int_{ \mathcal{S} \times \mathcal{A}}  \hspace{0mm} Q_{\pi_{{\bbtheta}_1}}(s,a) \nabla\log\pi_{{\bbtheta_2}}(a | s)  \bigg(  {\rho_{\pi_{{\bbtheta}_1}}(s,a) }  -\rho_{\pi_{{\bbtheta}_2}}(s,a)    \bigg)  ds  \, da\bigg\|}_{\bbI_3} .\nonumber
	\end{align}
	Hence, we could write the equation in \eqref{Eqn:int5} as follows using the triangle inequality:
	\begin{align}
		\| {\nabla} _{{\bbtheta}} J ({\bbtheta}_1) -{\nabla} _{{\bbtheta}} J ({\bbtheta}_2 )  \|  \leq &   \bbI_1+\bbI_2+\bbI_3.\label{second}
	\end{align}
	Now we will bound each of the above terms separately. Let us start with $\bbI_1$ and take the norm inside the integral to get 
	\begin{align}\label{fouth}
		\bbI_1 \leq \frac{1}{1-\gamma}\int_{ \mathcal{S} \times \mathcal{A}}  |Q_{\pi_{{\bbtheta}_1}}(s,a)|  \left\| \nabla\log\pi_{{\bbtheta_1}}(a | s) -\nabla\log\pi_{{\bbtheta_2}}(a | s) \right\|     {\rho_{\pi_{{\bbtheta}_1}}(s,a) }   ds  \, da,
	\end{align}
	From Assumption \ref{Assum:Q0}, we have $|Q_{\pi_{{\bbtheta}_1}}(s,a)|  \leq \frac{U_R}{(1-\gamma)}$ which may be substituted into the right-hand side of \eqref{fouth} as follows
	\begin{align}\label{fouth2}
		\bbI_1 \leq \frac{U_R}{(1-\gamma)^2}\int_{ \mathcal{S} \times \mathcal{A}}  {\left\| \nabla\log\pi_{{\bbtheta_1}}(a | s)-\nabla\log\pi_{{\bbtheta_2}}(a | s)  \right\| }   {\rho_{\pi_{{\bbtheta}_1}}(s,a) } ds  \, da.
	\end{align} 
	From Assumption \eqref{Assum:nablaF} regarding the H\"{o}lder continuity of the score function, we have
	\begin{align}\label{inequality}
		\|\nabla\log\pi_{{\bbtheta_1}}(a | s)-\nabla\log\pi_{{\bbtheta_2}}(a | s) \|\leq M \|{\bbtheta}_1-{\bbtheta}_2\|^{\beta},
	\end{align} 
	where $\beta\in(0,1]$. 
	This expression \eqref{inequality} may be substituted into \eqref{fouth2} to write
	\begin{align}
		\bbI_1
		\leq &    \frac{U_R M}{(1-\gamma)^2} \left( \|{\bbtheta}_1-{\bbtheta}_2\|^{\beta}  \int_{ \mathcal{S} \times \mathcal{A}}    {\rho_{\pi_{{\bbtheta}_1}}(s,a) } \cdot    \,  ds  \, da \right).
	\end{align}
	The above  integral is a valid probability measure which further implies that it integrates to unit. Therefore, we have that
	\begin{align}\label{first_bound}
		\bbI_1\leq   \frac{U_R M  \|{\bbtheta}_1-{\bbtheta}_2\|^\beta }{(1-\gamma)^2}.
	\end{align}
	Now let us consider the expression associated with $\bbI_2$  in \eqref{Eqn:int5} and take the norm inside the integral , we write
	\begin{align}\label{fifth2}
		\bbI_2 \leq\frac{1}{1-\gamma}\int_{ \mathcal{S} \times \mathcal{A}}  \hspace{0mm} \big|Q_{\pi_{{\bbtheta}_1}}(s,a)-Q_{\pi_{{\bbtheta}_2}}(s,a)\big|\cdot \|\nabla\log\pi_{{\bbtheta_2}}(a | s)\|   
		\rho_{\pi_{{\bbtheta}_2}}(s,a)  ds  \, da.
	\end{align}
	Note that Q function is Lipschitz as given in \eqref{Q_function_lips}, hence we can upper bound \eqref{fifth2} as follows
	\begin{align}\label{fifth22}
		\bbI_2 \leq& \frac{M_Q}{1-\gamma}\|\bbtheta_1-\bbtheta_2\|\int_{ \mathcal{S} \times \mathcal{A}}  \|\nabla\log\pi_{{\bbtheta_2}}(a | s)\|   
		\rho_{\pi_{{\bbtheta}_2}}(s,a)  ds  \, da
		\\
		\leq& \frac{M_QB^{1/2}}{1-\gamma}\|\bbtheta_1-\bbtheta_2\|.
	\end{align}
	Now, we are only left to bound $\bbI_3$ in \eqref{second}. Let us rewrite $\bbI_3$  as follows
	\begin{align}
		\bbI_3=&\frac{1}{1-\gamma}\bigg\|\int_{ \mathcal{S} \times \mathcal{A}}  \hspace{0mm} Q_{\pi_{{\bbtheta}_1}}(s,a) \nabla\log\pi_{{\bbtheta_2}}(a | s)  \bigg(  {\rho_{\pi_{{\bbtheta}_1}}(s,a) }  -\rho_{\pi_{{\bbtheta}_2}}(s,a)    \bigg)  ds  \, da\bigg\|
		\nonumber
		\\
		\leq & \frac{1}{1-\gamma}\int_{ \mathcal{S} \times \mathcal{A}}  \hspace{0mm} |Q_{\pi_{{\bbtheta}_1}}(s,a)|\cdot \|\nabla\log\pi_{{\bbtheta_2}}(a | s)\| \cdot   |{\rho_{\pi_{{\bbtheta}_1}}(s,a) }  -\rho_{\pi_{{\bbtheta}_2}}(s,a)|      ds  \, da.
	\end{align}
	Using the bound $|Q_{\pi_{{\bbtheta}_1}}(s,a)|  \leq \frac{U_R}{(1-\gamma)}$, we can write
	\begin{align}
		\bbI_3
		\leq & \frac{U_R}{(1-\gamma)^2}\int_{ \mathcal{S} \times \mathcal{A}}   \|\nabla\log\pi_{{\bbtheta_2}}(a | s)\| \cdot   |{\rho_{\pi_{{\bbtheta}_1}}(s,a) }  -\rho_{\pi_{{\bbtheta}_2}}(s,a)|      ds  \, da. \label{bound_score}
	\end{align}
	{Next, we need to bound the right-hand side of \eqref{bound_score}. 
%
%
%
According to definitions provided in \eqref{define_Set}-\eqref{definition_last}, for any $s$, there exists a set $\mathcal{C}\in\mathcal{A}(\lambda)$ s.t. 
\begin{align}\label{definition_2}
			\int_{\mathcal{S}}\int_{\mathcal{A}\backslash\mathcal{C}}\|\nabla\log\pi_{{\bbtheta}}(a | s)\|\cdot\rho_{{\bbtheta}}(s,a) \cdot  ds da \leq \lambda,
\end{align}
and 
$\sup_{a\in\mathcal{C}} \|\nabla\log\pi_{{\bbtheta}}(a | s)\|\leq B(\lambda)$.
%
We proceed to upper bound the right-hand side of \eqref{bound_score} by splitting this integral over the action space into two parts and employing the quantities in Definition \ref{def:exploration_tolerance}, specifically, \eqref{define_Set}, as
	%
	\begin{align}
		\bbI_3
		\leq & \frac{U_R}{(1-\gamma)^2}\int_{ \mathcal{S}}  \int_{\mathcal{C}}    \|\nabla\log\pi_{{\bbtheta_2}}(a | s)\| \cdot|{\rho_{\pi_{{\bbtheta}_1}}(s,a) }  -\rho_{\pi_{{\bbtheta}_2}}(s,a)|      ds  \, da.
		\nonumber
		\\
		&+\frac{U_R}{(1-\gamma)^2}\int_{ \mathcal{S}}\int_{ \mathcal{A}\backslash\mathcal{C}}   \|\nabla\log\pi_{{\bbtheta_2}}(a | s)\| \cdot   |{\rho_{\pi_{{\bbtheta}_1}}(s,a) }  -\rho_{\pi_{{\bbtheta}_2}}(s,a)|      ds  \, da.
	\end{align}
	From \eqref{definition_last} and the definition of $\|\cdot\|_1$, we can write
	\begin{align}
		\bbI_3
		\leq & \frac{U_R B(\lambda)}{(1-\gamma)^2}{\|{\rho_{\pi_{{\bbtheta}_1}}(s,a) }-{\rho_{\pi_{{\bbtheta}_2}}(s,a) }\|_1}
		\nonumber
		\\
		&+\frac{U_R}{(1-\gamma)^2}\int_{ \mathcal{S}}\int_{ \mathcal{A}\backslash\mathcal{C}}  \|\nabla\log\pi_{{\bbtheta_2}}(a | s)\| \cdot   |{\rho_{\pi_{{\bbtheta}_1}}(s,a) }  -\rho_{\pi_{{\bbtheta}_2}}(s,a)|      ds  \, da. \label{bound_score2}
	\end{align}
	From the Lipschitz continuity of the occupancy measure in \eqref{lips_occu} and triangle inequality, we may write
	\begin{align}
		\bbI_3
		\leq & \frac{U_R B(\lambda) M_\rho}{(1-\gamma)^2}\|\bbtheta_1-\bbtheta_2\| 
		\nonumber
		\\
		&+\frac{U_R}{(1-\gamma)^2}\underbrace{\int_{ \mathcal{S}}\int_{\mathcal{A}\backslash\mathcal{C}}  \|\nabla\log\pi_{{\bbtheta_2}}(a | s)\| \cdot   \left({\rho_{\pi_{{\bbtheta}_1}}(s,a) }  +\rho_{\pi_{{\bbtheta}_2}}(s,a)\right)      ds  \, da}_{\mathcal{Z}}. \label{bound_score22}
	\end{align}
	Let us focus on the second term $\mathcal{Z}$ of the right-hand side of \eqref{bound_score22}. We expand its expression using \eqref{define_Set} as
	\begin{align}
		\mathcal{Z}=& \int_{ \mathcal{S}}\int_{\mathcal{A}\backslash\mathcal{C}} \|\nabla\log\pi_{{\bbtheta_2}}(a | s)\| \cdot   {\rho_{\pi_{{\bbtheta}_1}}(s,a) }      ds  \, da
		\nonumber
		\\
		&+\int_{ \mathcal{S}}\int_{\mathcal{A}\backslash\mathcal{C}} \|\nabla\log\pi_{{\bbtheta_2}}(a | s)\| \cdot   \rho_{\pi_{{\bbtheta}_2}}(s,a)    ds  \, da.\label{bound_score222}
	\end{align}
	Next, from the H\"{o}lder continuity of the score function, by adding and subtracting $\nabla\log\pi_{{\bbtheta_1}}(a | s)$ inside the norm for the first term, we may write
	\begin{align}
		\mathcal{Z}
		\leq & M\|\bbtheta_1-\bbtheta_2\|^\beta + \int_{ \mathcal{S}}\int_{ \mathcal{A}\backslash\mathcal{C}} \|\nabla\log\pi_{{\bbtheta_1}}(a | s)\| \cdot   {\rho_{\pi_{{\bbtheta}_1}}(s,a) }      ds  \, da
		\nonumber
		\\
		&\hspace{2.5cm}+\int_{\mathcal{S}}\int_{\mathcal{A}\backslash\mathcal{C}} \|\nabla\log\pi_{{\bbtheta_2}}(a | s)\| \cdot   \rho_{\pi_{{\bbtheta}_2}}(s,a)    ds  \, da\nonumber\\
		\leq & M\|\bbtheta_1-\bbtheta_2\|^\beta + 2\lambda.    \label{bound_score2222}
	\end{align}}
	Substituting \eqref{bound_score2222} into the right hand side of \eqref{bound_score22}, we get
	\begin{align}
		\bbI_3
		\leq & \frac{U_RM}{(1-\gamma)^2}\|\bbtheta_1-\bbtheta_2\|^\beta+\frac{U_R B(\lambda) M_\rho}{(1-\gamma)^2}\|\bbtheta_1-\bbtheta_2\|+ \frac{2U_R\lambda}{(1-\gamma)^2}. \label{bound_score2222022}
	\end{align}
	%
	%
	%
	%
	Next, substituting the upper bounds for $\bbI_1$, $\bbI_2$, and $\bbI_3$ into the right hand side of \eqref{second}, we get
	\begin{align}
		\| {\nabla} _{{\bbtheta}} J ({\bbtheta}_1)\! -\!{\nabla} _{{\bbtheta}} J ({\bbtheta}_2 )  \|  \leq &    M_J[\|{\bbtheta}_1-{\bbtheta}_2\|^\beta\!+\!\|\bbtheta_1-\bbtheta_2\|\!+\!\lambda]  \nonumber
	\end{align}
	where $M_J$ is defined as 
	\begin{align}\label{eq:M_J}
		M_J:=\max\bigg\{\frac{2U_R M }{(1-\gamma)^2}, \frac{M_QB^{1/2}}{1-\gamma} + \frac{U_R B(\lambda) M_\rho}{(1-\gamma)^2},\frac{2U_R}{(1-\gamma)^2}\bigg\}.
	\end{align}
	which concludes the proof of Lemma \ref{Lemma:PGL}.
\end{proof}

\section{Proof of Lemma \ref{Lemma:PGL2}}\label{apx_lemma_2}
\begin{proof}
	The proof technique is motivated from the analysis in \cite{nguyen2019non}. Consider a curve $g(t) \triangleq J(\bbtheta_2+ t(\bbtheta_1-\bbtheta_2))$. Then $g'(t) = \left< \nabla J(\bbtheta_2 + t(\bbtheta_1 - \bbtheta_2)), \bbtheta_1 - \bbtheta_2 \right>$. The integral of $g'(t)$ from $t=0$ to $t=1$ can be expressed as 
	$$
	\int_{t=0}^{t=1} g'(t) dt = g(1)-g(0) =  J(\bbtheta_1) -J(\bbtheta_2).
	$$
	by the \blue{Fundamental theorem of calculus}.
	Now subtracting $\left< \nabla J(\bbtheta_2), \bbtheta_1-\bbtheta_2 \right>$ on the both sides of  the above expression, we get
	\begin{align}
		\left| J(\bbtheta_1) -J(\bbtheta_2) -\left< \nabla J(\bbtheta_2), \bbtheta_1-\bbtheta_2 \right> \right|  = \left| \int_{t=0}^{t=1} g'(t) dt  -\left< \nabla J(\bbtheta_2), \bbtheta_1-\bbtheta_2 \right> \right|
	\end{align}
	Using the expression for $g'(t)$ above expression takes the form
	\begin{align}
		\big| J(\bbtheta_1) -&J(\bbtheta_2) -\left< \nabla J(\bbtheta_2), \bbtheta_1-\bbtheta_2 \right> \big|  \nonumber
		\\
		= & \left| \int_{t=0}^{t=1}\left< \nabla J(\bbtheta_2 + t(\bbtheta_1 - \bbtheta_2)), \bbtheta_1 - \bbtheta_2 \right> dt  -\left< \nabla J(\bbtheta_2), \bbtheta_1-\bbtheta_2 \right> \right| \nonumber \\
		=& \left| \int_{t=0}^{t=1}\left< \nabla J(\bbtheta_2 + t(\bbtheta_1 - \bbtheta_2))-\nabla J(\bbtheta_2), \bbtheta_1 - \bbtheta_2 \right> dt   \right|.
	\end{align}
	Using Cauchy-Schwartz inequality for inner product on the right-hand side of the previous expression
	\begin{align}
		\left| J(\bbtheta_1) -J(\bbtheta_2) -\left< \nabla J(\bbtheta_2), \bbtheta_1-\bbtheta_2 \right> \right|  \leq  \int_{t=0}^{t=1} \left\| \nabla J(\bbtheta_2 + t(\bbtheta_1 - \bbtheta_2))-\nabla J(\bbtheta_2)\right\|  \left\| \bbtheta_1 - \bbtheta_2  \right\| dt .
	\end{align}
	Apply Lemma \ref{Lemma:PGL} to the first factor of the integrand on the right-hand side, which simplifies the preceding expression to
	\begin{align}
		\big| J(\bbtheta_1) -& J(\bbtheta_2) -\left< \nabla J(\bbtheta_2), \bbtheta_1-\bbtheta_2 \right> \big| \nonumber
		\\
		& \leq  \left\| \bbtheta_1 - \bbtheta_2  \right\| \int_{t=0}^{t=1} M_J\left[\|  t( {\bbtheta}_1 -{\bbtheta}_2)  \|^{\beta}+\|   t( {\bbtheta}_1 -{\bbtheta}_2)  \| +\lambda\right] dt
		\nonumber
		\\
		& \leq   M_J\left[\|  {\bbtheta}_1 -{\bbtheta}_2 \|^{1+\beta}+\|  {\bbtheta}_1 -{\bbtheta}_2  \|^2 + \lambda \|  {\bbtheta}_1 -{\bbtheta}_2 \|\right] ,
	\end{align}
%
	which is as stated in \eqref{eq:gradient_smoothness}.
\end{proof}
\section{Proof of Theorem \ref{Thm1}}\label{proof_theorem}

\begin{proof}
	We begin by unraveling the statement of Lemma \eqref{Lemma:PGL2} to write an approximate ascent relationship on the objective $J(\bbtheta)$ as:
	\begin{align}\label{thm_first}
		J ({{\bbtheta}}_{k+1}) \geq &  J ({{\bbtheta}}_{k}) +   \left<{\nabla} J ({{\bbtheta}}_k),{\bbtheta}_{k+1} - {\bbtheta}_k  \right> 
		\nonumber
		\\
		&- M_J\left[ \| {\bbtheta}_{k+1}- {\bbtheta}_k \|^{1+\beta}+\|  {\bbtheta}_{k+1}- {\bbtheta}_k  \|^2+\lambda\|  {\bbtheta}_{k+1}- {\bbtheta}_k  \|\right]. 
	\end{align}
	Substitute the expression for the policy gradient \eqref{eq:policy_gradient_iteration} in place of $\bbtheta_{k+1} - \bbtheta_k$ into \eqref{thm_first} as
	\begin{align}
		\label{thm3}
		J ({{\bbtheta}}_{k+1}) \geq &  J ({{\bbtheta}}_{k}) + \eta\left<{\nabla} J ({{\bbtheta}}_k), \hat{\nabla} J({\bbtheta}_k)  \right> \nonumber
		\\
		&-   M_J\left[\|  \eta \hat{\nabla} J({\bbtheta}_k) \|^{1+\beta}+\|   \eta \hat{\nabla} J({\bbtheta}_k) \|^2+\lambda\|   \eta \hat{\nabla} J({\bbtheta}_k) \|\right].  
	\end{align}
For $c = 1, 1+\beta, 2$, using Assumption \ref{Assum:Q0} along with the Jensen's inequality indicates that 
\begin{align}
	\|\eta\hat{\nabla} J({\bbtheta}_k)\|^c  = &\left\|\eta\sum_{t=0}^{T_k}\gamma^{t/2}\cdot R(s_t,a_t)\cdot \bigg(\sum_{\tau=0}^{t}\nabla\log\pi_{{\bbtheta}_k}(a_{\tau}\given s_{\tau})\bigg)\right\|^c\nonumber
	\\
	\leq& \eta^cU_R^c\cdot\left(\sum_{t=0}^{T_k}\gamma^{t/2}\right)^c\cdot\left\|\sum_{t=0}^{T_k}\frac{\gamma^{t/2}}{\sum_{t=0}^{T_k}\gamma^{t/2}}\sum_{\tau=0}^{t}\nabla\log\pi_{{\bbtheta}_k}(a_{\tau}\given s_{\tau})\right\|^c\nonumber
	\\
	 \leq & \eta^cU_R^c\cdot\left(\sum_{t=0}^{T_k}\gamma^{t/2}\right)^{c-1}\cdot\sum_{t=0}^{T_k}\gamma^{t/2}\left\|\sum_{\tau=0}^{t}\nabla\log\pi_{{\bbtheta}_k}(a_{\tau}\given s_{\tau})\right\|^c.
\end{align}	 
Applying the Jensen's inequality again, we can write
\begin{align}
	\|\eta\hat{\nabla} J({\bbtheta}_k)\|^c 	\leq&\eta^cU_R^c\cdot\left(\sum_{t=0}^{T_k}\gamma^{t/2}\right)^{c-1}\cdot\sum_{t=0}^{T_k}\gamma^{t/2}(t+1)^{c-1}\sum_{\tau=0}^{t}\left\|\nabla\log\pi_{{\bbtheta}_k}(a_{\tau}\given s_{\tau})\right\|^c\nonumber
	\\
	\leq&\eta^cU_R^c\cdot T_k\cdot\sum_{t=0}^{T_k}\gamma^{t/2}(t+1)^{c-1}\sum_{\tau=0}^{t}\left\|\nabla\log\pi_{{\bbtheta}_k}(a_{\tau}\given s_{\tau})\right\|^c.\label{here0}
\end{align}
Taking the expectation on both sides in \eqref{here0}, we have 
\begin{align}
	\EE[\|\eta\hat{\nabla} J({\bbtheta}_k)\|^c]  \leq & \eta^cU_R^c\cdot \EE\left[T_k\cdot\sum_{t=0}^{T_k}\gamma^{t/2}(t+1)^{c-1}\sum_{\tau=0}^{t}\left\|\nabla\log\pi_{{\bbtheta}_k}(a_{\tau}\given s_{\tau})\right\|^c\right]\nonumber
	\\
	 \leq & \eta^cU_R^c\cdot \sum_{T=0}^{+\infty}(1-\gamma^{1/2})\gamma^{T/2}\cdot T\cdot\sum_{t=0}^T\gamma^{t/2}(t+1)^cB^{c/2}\nonumber\\
 \leq & \eta^cU_R^cB^{c/2}\cdot \sum_{T=0}^{+\infty}(1-\gamma^{1/2})\gamma^{T/2}\cdot (T+1)\cdot\sum_{t=0}^{+\infty}\gamma^{t/2}(t+1)(t+2)\nonumber\\
	 = & \frac{2\eta^cU_R^cB^{c/2}}{(1-\gamma^{1/2})^4}.\label{here22}
\end{align}
Taking expectation on the both sides of \eqref{thm3} conditioning on $\bbtheta_k$, denoted as $\EE_k$ and utilizing the bound in \eqref{here22}, gives  
	\begin{align}\label{thm6}
		\mathbb{E}_k \left[   J ({{\bbtheta}}_{k+1})\right] \geq &  J ({{\bbtheta}}_{k}) +  \eta\|{\nabla} J ({{\bbtheta}}_k)\|^2 -  \frac{2M_J}{(1-\gamma^{1/2})^4}\Big(\eta^2U_R^2B+\eta^{1+\beta}U_R^{1+\beta}B^{\frac{1+\beta}{2}}+\eta \lambda U_RB^{1/2}\Big)
	\end{align}
	After rearranging the term and defining, 
	\begin{align}\label{eq:theorem41_constant}
		L_J:=\frac{2M_J}{(1-\gamma^{1/2})^4}\cdot\max\bigg\{U_R^2B,U_R^{1+\beta}B^{\frac{1+\beta}{2}}, U_RB^{1/2}\bigg\},
	\end{align}
	we can write \eqref{thm6} as 
	\begin{align}\label{thm77}
		\mathbb{E}_k \left[   J ({{\bbtheta}}_{k+1})\right] \geq &  J ({{\bbtheta}}_{k}) +  \eta\|{\nabla} J ({{\bbtheta}}_k)\|^2 -     L_J \left(\eta^{1+\beta}+\eta^{2}+\eta \lambda \right)
		\\
		\geq &  J ({{\bbtheta}}_{k}) +  \eta\|{\nabla} J ({{\bbtheta}}_k)\|^2 -     2L_J \eta^{1+\beta}-     L_J \eta \lambda.
	\end{align}
	Let $J^*$ be the optimal function value, then it holds that $J ({\bbtheta}_{k+1} ) \leq J^*$. Calculating the total expectation in \eqref{thm6} and taking sum from $k =0, \ldots, K-1$, we get
	\begin{align}\label{eq:gradient_sum}
		\sum_{k=0}^{K-1} \mathbb{E} \left[  \left\| {\nabla} J ({{\bbtheta}}_k)\right\|_2^2\right] \leq  \frac{J^*- J ({{\bbtheta}}_{0}) }{\eta } + 2K L_J \eta^{\beta} + K L_J  \lambda .
	\end{align}
	Divide both sides by $K$, we get
	\begin{align}\label{thm7}
		&\frac{1}{K} \sum_{k=0}^{K-1} \mathbb{E} \left[  \left\| {\nabla} J ({{\bbtheta}}_k)\right\|_2^2\right] \leq  \frac{  J^* -J ({{\bbtheta}}_{0})}{\eta K}  +  2L_J \eta^{\beta} + L_J  \lambda  .
	\end{align}
	Now we specify the step-size as a constant $\eta = c_{\beta}/ K^{\frac{1}{1+\beta}}$ with $c_{\beta} =   \left(  \frac{1}{2\beta L_J} \left(J^* -J ({{\bbtheta}}_{0})\right) \right)^{1/(1+\beta)} $. Doing so permits us to rewrite \eqref{thm7} as follows
	\begin{align}
		&\frac{1}{K} \sum_{k=0}^{K-1} \mathbb{E} \left[  \left\| {\nabla} J ({{\bbtheta}}_k)\right\|_2^2\right] \leq \frac{a_{\beta}}{K^{\frac{\beta}{1+\beta}}} +\mathcal{O}(\lambda),
	\end{align}
	where we define the constant
	$a_{\beta} = \left(2L_J\right)^{1/(\beta+1)} \left(J^* -J ({{\bbtheta}}_{0})\right)^{\beta/(\beta+1)}
	\; ,$ as stated in \eqref{Thm1}. Observe that in existing analyses in the literature\cite{zhang2020global}, the $\mathcal{O}(\lambda)$ term is assumed to be null. Thus, standard rates are recovered as a special case.
\end{proof}

\section{Instantiations of $\lambda$ in Example \ref{eg:gaussian_fixed_var}-\ref{eg:alpha_stable} }\label{apx_delta}
In this section, we discuss about the parameter $\lambda$ in detail. Note that the specific value of \blue{$\lambda$} would depend upon the policy class being considered. Therefore, we derive the values of $\lambda$ for Example \ref{eg:light_tailed}-\ref{eg:alpha_stable} (Example \ref{eg:gaussian_fixed_var} is special case of Example \ref{eg:light_tailed} for $\alpha=2$). \blue{For the sake of analysis in this section, we assume that  $\theta$ belongs to some compact set $\Theta$.}
%
	
	(1) We start with the Example \ref{eg:alpha_stable}, for which we note that the score function is absolutely bounded over the full action space $\mathcal{A}$. For this case, then, $\lambda=0$ and $B(\lambda)$ exists and is finite.
	
	(2) For the moderate tail case (Example \ref{eg:light_tailed}), note that the policy distribution is given by 
	\begin{align}\label{policy}
		\pi_{\bbtheta}(a|s)=\frac{1}{\sigma \mathcal{A}_\alpha}\exp\bigg\{-\frac{\|a-\phi(s)^T\bbtheta\|^\alpha}{\sigma^\alpha}\bigg\}.
	\end{align}
Therefore the score function could be written as
\begin{align}
\|\nabla_{\bbtheta}\log \pi_{\bbtheta}(a|s)\| = \sigma^{-\alpha} \|a-\phi(s)^T\bbtheta\|^{\alpha-1}\|\phi(s)\| 
\leq \sigma^{-\alpha} \mathcal{D}_\phi\|a-\phi(s)^T\bbtheta\|^{\alpha-1}\label{score_function}
\end{align}
as long as the feature map is bounded as $\|\phi(s)\|\leq \mathcal{D}_\phi$. 
Suppose $\theta$ belongs to some compact set $\Theta$. 
Let us  construct the set $\mathcal{C}$ as  $$\mathcal{C}:= \{a\in\mathcal{A}: \exists\bbtheta\in\Theta\,\,s.t.\,\, |a-\phi(s)^T\bbtheta|\leq R\}$$ where $R$ is a finite positive constant. 
Now, let us look at the following integral
\begin{align}
		\int_{\mathcal{A}\backslash\mathcal{C}}&\|\nabla\log\pi_{{\bbtheta}}(a | s)\|\cdot\pi_{{\bbtheta}}(a|s) \cdot  da \nonumber
		\\
		\leq&\sigma^{-\alpha} \mathcal{D}_\phi\int_{\mathcal{A}\backslash\mathcal{C}}\|a-\phi(s)^T\bbtheta\|^{\alpha-1} \cdot\pi_{{\bbtheta}}(a|s) \cdot  da
\end{align}
which follows from the upper bound in  \eqref{score_function}. From the definition in \eqref{policy}, we can write
\begin{align}
	\int_{\mathcal{A}\backslash\mathcal{C}}&\|\nabla\log\pi_{{\bbtheta}}(a | s)\|\cdot\pi_{{\bbtheta}}(a|s) \cdot  da \nonumber
	\\
	\leq& \frac{\mathcal{D}_\phi}{\sigma^{1+\alpha}\!\!\mathcal{A}_\alpha}\int_{\mathcal{A}\backslash\mathcal{C}}\|a-\phi(s)^T\bbtheta\|^{\alpha-1} \cdot\exp\bigg\{\!\!-\!\frac{\|a\!-\!\phi(s)^T\bbtheta\|^\alpha}{2\sigma^\alpha}\bigg\}\cdot\exp\!\bigg\{\!\!-\!\!\frac{\|a-\phi(s)^T\bbtheta\|^\alpha}{2\sigma^\alpha}\bigg\}   da\nonumber
	\\
	\leq& \frac{\mathcal{D}_\phi}{\sigma^{1+\alpha}\mathcal{A}_\alpha}\int_{\mathcal{A}\backslash\mathcal{C}} \|a-\phi(s)^T\bbtheta\|^{\alpha-1} \cdot\exp\bigg\{-\frac{\|a-\phi(s)^T\bbtheta\|^\alpha}{2\sigma^\alpha}\bigg\}\cdot  da\cdot\exp\bigg\{-\frac{R^\alpha}{2\sigma^\alpha}\bigg\} 
	\nonumber
	\\
	\leq& \frac{\mathcal{D}_\phi}{\sigma^{1+\alpha}\mathcal{A}_\alpha} \cdot\sigma^{\alpha}\cdot B_\alpha\cdot\exp\bigg\{-\frac{R^\alpha}{2\sigma^\alpha}\bigg\}
		\nonumber
	\\
	\leq& \frac{\mathcal{D}_\phi}{\sigma\mathcal{A}_\alpha} \cdot B_\alpha\cdot\exp\bigg\{-\frac{R^\alpha}{2\sigma^\alpha}\bigg\},
\end{align}
where $B_\alpha := \int |a|^{\alpha-1}\exp\{-\frac{|a|^\alpha}{2}\}<\infty$.
The above equation will be less than $\lambda$ if we have
\begin{align}
	\frac{\mathcal{D}_\phi}{\sigma\mathcal{A}_\alpha} \cdot B_\alpha\cdot\exp\bigg\{-\frac{R^\alpha}{2\sigma^\alpha}\bigg\}  \leq \lambda,
\end{align}
which implies that 
\begin{align}
	\left(\frac{R}{\sigma}\right)^\alpha \geq 2\log \left(\frac{\mathcal{D}_\phi B_\alpha}{\sigma\mathcal{A}_\alpha \lambda} \right).
\end{align}
The above expression provides the bound for $B(\lambda)$ as 
\begin{align}
	B(\lambda)\leq &\max_{a\in\mathcal{C}}\max_{s\in\cS}\max_{\bbtheta\in\Theta} \|\nabla_{\bbtheta}\log \pi_{\bbtheta}(a|s)\|\leq 	\max_{a\in\mathcal{C}}\max_{s\in\cS}\max_{\bbtheta\in\Theta}\frac{\mathcal{D}_\phi}{\sigma}\left(\frac{\|a-\phi(s)^T\bbtheta\|}{\sigma}\right) ^{\alpha-1}
	\nonumber
	\\
	\leq & 	\frac{\mathcal{D}_\phi}{\sigma}\left(\frac{\mathcal{D}_\Theta\mathcal{D}_\phi}{\sigma}+2\log \left(\frac{\mathcal{D}_\phi B_\alpha}{\sigma\mathcal{A}_\alpha \lambda} \right)\right) ^{\frac{\alpha-1}{\alpha}}
		\nonumber
	\\
	= & 	\mathcal{O}\left(\log\frac{1}{\lambda}\right) ^{\frac{\alpha-1}{\alpha}},\label{bound}
\end{align}
where $\mathcal{D}_\Theta:=\max_{\bbtheta,\bbtheta'\in\Theta}\|\bbtheta-\bbtheta'\|$ is the diameter of $\Theta$. Note that the bound in \eqref{bound} is small even for a very small value of $\lambda$. This permits us to relax the standard assumption of absolutely bounded score function for continuous action spaces.  

\section{Exit Time Analysis}
We first present the technical preliminaries required for the analysis in this section as follows.
\subsection{Technical Preliminaries}\label{subsec:exit_time_preliminaries}

{\bf \noindent Technical Results for Univariate Case $d=1$.}
We continue then with the formal definition of  first exit time for a SDE in continuous one dimensional case for simplicity. Consider a  neighborhood  $B_i: =  [-b,\, a]$ around the $i$-th local extrema $\bar{\bbtheta}_i$ in a single dimension. Let the process is initialized at $\bbtheta_0$ which is   inside $B_i$.  We are interested in the first
exit time from $B_i$ starting from a point $\bbtheta_0  \in B_i$. The  first exit time from  $B_i: =  [-b,\, a]$ for a process defined by continuous SDE  is defined as (also stated as Definition \ref{def:exit_time00})
%
\begin{align*}
\hat{\tau}(\epsilon) =\inf\{  t \geq 0: \bbtheta_t^{\epsilon} \notin [-b,a] \},
\end{align*}
as random perturbation  in the SDE, $\epsilon \to 0$, [cf. \eqref{eqn:SGD_L1}].  We denote the first exit time for continuous SDE using $\hat{\tau}(\cdot)$. Under the Assumption \ref{Assump:fun} (3.), we invoke results from \cite{imkeller2006first} for the first exit times  of  continuous SDEs \eqref{eqn:SGD_L1} in the univariate case $d=1$.

Further, we impose that the multi-dimensional L\'{e}vy motion $\bbL_t^\alpha$ in \eqref{eqn:SGD_L1} admits a representation as a $\bbL_t^\alpha=\bbr L_t$ with $\bbr\in\mathbb{R}^d$ as a standard basis vector in $d$-dimensions, which determines the direction of the jump process, and $L_t$ is a scalar  $\alpha$-stable L\'{e}vy motion. This restriction is needed in order to tractably study the transient behavior of \eqref{eqn:SGD_L1} in terms of its exit time from regions of attraction \cite{imkeller2010first}, specifically, in applying Lemma \ref{lemma:bound}, as well as characterizing the proportion of time jumping between its limit points, to be discussed next. We note that such analyses for general $d$-dimensional  L\'{e}vy motion is an open problem in stochastic processes. 

\begin{theorem}\label{th:2}  \cite{imkeller2006first}
Consider the SDE \eqref{eqn:SGD_L1}, in the univariate case $d = 1$ ($\theta \leftarrow \bbtheta $) and
assume that it has a unique strong solution. Assume further that the there exists an objective J with  a global
maximum at zero, satisfying the conditions $J'(\theta) 
\theta < 0$ for every $\theta \in \mathbb{R}$, $J(0) = 0$, $J'(\theta) = 0 $ if and only if $\theta=0$ and $J^{''}(0) < 0$. Then, there exist positive constants $\epsilon_0$, $\gamma$,  $\delta$, and $C > 0$ such that for
$0 <  \epsilon \leq \epsilon_0$, the following holds in the limit of small $\epsilon$:
\begin{align}
\exp\bigg(-u\epsilon^{\alpha}\bigg(\frac{1}{a^\alpha}\bigg)\frac{(1+C\epsilon^\delta)}{\alpha}\bigg)&(1-C\epsilon^\delta) \leq \mathcal{P} (\hat{\tau}(\epsilon) >u)  \\
&\hspace{2.5cm}\leq \exp\bigg(-u\epsilon^{\alpha}\bigg(\frac{1}{a^\alpha}\bigg)\frac{(1+C\epsilon^\delta)}{\alpha}\bigg) (1+C\epsilon^\delta)\nonumber
\end{align}
uniformly for all $\theta \leq a-\epsilon^{\gamma}  $  and $u \geq 0$. Consequently
\begin{align}
\mathbb{E} \left[ \hat{\tau}_a  (\epsilon)  \right] = \frac{\alpha a^{\alpha} }{\epsilon^{\alpha}}  (1+\mathcal{O}(\epsilon^\delta))
\end{align}
uniformly for all $\theta \leq a-\epsilon^{\gamma}  $.
\end{theorem}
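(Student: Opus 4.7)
The plan is to prove Theorem \ref{th:2} by following the Imkeller--Pavlyukevich large-jump/small-jump decomposition for one-dimensional SDEs driven by a symmetric $\alpha$-stable noise, adapted to the dissipative setting $J'(\theta)\theta<0$. The strategy exploits a sharp separation of scales: on the target horizon $t\sim\epsilon^{-\alpha}$, exit from the attracting domain occurs almost surely at a \emph{single} ``large jump'' of the driving L\'evy motion, while the aggregate effect of infinitely many ``small jumps'' is negligible compared with the drift's contraction toward $0$.

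First, I would decompose $L_t = \xi_t^\epsilon + \eta_t^\epsilon$ via the L\'evy--It\^o representation, with $\eta_t^\epsilon$ a compound Poisson process carrying jumps of size $|y|>\epsilon^{-\rho}$ for some $\rho\in(0,1)$ to be tuned, and $\xi_t^\epsilon$ the compensated remainder carrying all ``small'' jumps. Using the stable tail $\nu((x,\infty))=(C_\alpha/\alpha)x^{-\alpha}$, the Poisson intensity of $\eta_t^\epsilon$ is $\beta_\epsilon=(2C_\alpha/\alpha)\epsilon^{\rho\alpha}$, and its jump times $T_1<T_2<\cdots$ are Exp$(\beta_\epsilon)$. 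Between these jump times the state evolves as $d\bbtheta_t^\epsilon = J'(\bbtheta_t^\epsilon)\,dt + \epsilon\,d\xi_t^\epsilon$. Using the dissipativity $J'(\theta)\theta<0$ together with $J''(0)<0$, a Gronwall argument applied to $(\bbtheta_t^\epsilon)^2$ shows the drift contracts the state to an $O(\epsilon^{\gamma})$-neighborhood of $0$ within deterministic time $O(\log(1/\epsilon))$, while a Burkholder--Davis--Gundy estimate on the truncated martingale $\epsilon\xi^\epsilon$ bounds its excursion over any interval of length $\beta_\epsilon^{-1}$ by $O(\epsilon^{1-\rho})$ in probability. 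Choosing $\gamma$ small and $\rho$ appropriately, both error scales are $o(1)$ while the inter-jump time remains long enough for full relaxation.

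Second, I would analyze how exit occurs. Conditional on the state lying in $[-\epsilon^\gamma,\epsilon^\gamma]$ at the moment of the $k$-th large jump, the probability that this single jump (of state-space size $\epsilon W_k$) carries the process past $a$ equals
\[
q_\epsilon \;=\; \frac{\nu\bigl((\epsilon^{-1}(a-\epsilon^{\gamma}),\infty)\bigr)}{\beta_\epsilon} \;=\; \frac{\epsilon^\alpha a^{-\alpha}}{\alpha\,\beta_\epsilon}\bigl(1+O(\epsilon^\delta)\bigr),
\]
by direct substitution into the stable tail. The index $N$ of the first exiting jump is therefore geometric with parameter $q_\epsilon$, and $\hat\tau(\epsilon)=T_N$ is a geometric sum of Exp$(\beta_\epsilon)$ inter-arrival times; standard Poisson thinning makes $\hat\tau(\epsilon)$ asymptotically Exp$(\beta_\epsilon q_\epsilon)$. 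Since $\beta_\epsilon q_\epsilon = \epsilon^\alpha/(\alpha a^\alpha)\,(1+O(\epsilon^\delta))$, this yields the two-sided tail inequality in the theorem, and integrating gives $\mathbb{E}[\hat\tau_a(\epsilon)] = \alpha a^\alpha\epsilon^{-\alpha}(1+O(\epsilon^\delta))$.

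The main obstacle is making the reduction ``exit $=$ first large jump past $a$'' quantitative with $O(\epsilon^\delta)$ error \emph{uniform} in $\theta\leq a-\epsilon^\gamma$. Two failure modes must be excluded: (a) the drift plus small noise $\epsilon\xi^\epsilon$ itself crossing $a$ between large jumps, and (b) a large jump undershooting to a point from which the deterministic flow then carries the state out. Both are ruled out by balancing the exponents $\rho,\gamma,\delta$: one needs $\epsilon^{1-\rho}+\epsilon^\gamma \ll \epsilon^\gamma$ in the effective jump threshold, and one needs $\beta_\epsilon^{-1}\gg \log(1/\epsilon)$ so that relaxation completes before the next large jump; these inequalities determine admissible $\rho,\gamma$ and force $\delta$ to be the smallest exponent arising from the BDG tail, the Poisson thinning error, and the boundary slack in $q_\epsilon$. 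This calibration is precisely what is carried out in \cite{imkeller2006first}, whose argument this plan directly follows.
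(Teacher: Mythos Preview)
The paper does not prove this theorem: it is stated as a result quoted from \cite{imkeller2006first} and is used as a black box in the subsequent exit-time analysis (Theorems \ref{Theorem:Exit_time} and \ref{theorem: transitiontym}). Your proposal correctly sketches the original Imkeller--Pavlyukevich large-jump/small-jump decomposition argument from that reference, so there is nothing to compare against in the paper itself; your outline is the standard route and is sound.
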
 

\begin{theorem}\cite{imkeller2006first},
Consider the SDE \eqref{eqn:SGD_L1}, in dimension $d = 1$ and
assume that it has a unique strong solution. Assume further that there exists an  objective $J$ with a global
maximum  at zero, satisfying the conditions $J'(\theta)
\theta < 0$ for every $\theta \in \mathbb{R}$, $J(0) = 0$, $J'(\theta) = 0 $ if and only if $\theta=0$ and $J^{''}(0)<0$, the  following results hold in the limit if small $\epsilon$:
\begin{enumerate}
\item First exit time is exponentially large in $\epsilon^{-2}$. Assume for definiteness $J(a) >J(-b)$. Then for any $\delta >0$ , $ \theta  \in B_i$.
\begin{align}
\mathcal{P}_{\bbtheta} (\exp{(-2J (a)-\delta)/\epsilon^2 }< \hat{\tau} (\epsilon) < \exp{(-2J (a)+\delta)/\epsilon^2 )} \to 1 \,  \textrm{as} \, \epsilon \to 0 
\end{align}
Moreover, $\epsilon^2 \log \mathbb{E}_{\bbtheta} [\hat{\tau}(\epsilon)] \to 2J (a)$.
\item The mean of first exit time is given by 
\begin{align}\label{SGD:extg}
\mathbb{E}_{\theta} (\hat{\tau}(\epsilon) \approx \frac{\epsilon \sqrt{\pi}}{J'(a)\sqrt{J^{''}(0)}} \exp{(2J(a)/\epsilon^2)}
\end{align}
\item Normalized first exit time is exponentially distributed: for $u \geq 0$
\begin{align}
\mathcal{P}_{\bbtheta} \left(  \frac{\hat{\tau}(\epsilon)}{\mathbb{E}_{\theta} (\hat{\tau}(\epsilon)} >u \right) \to \exp{(-u)}\, \textrm{as}\, \epsilon \to 0
\end{align}
uniformly in $\theta$ on compact subsets of $(-b,a)$.
\end{enumerate}
\end{theorem}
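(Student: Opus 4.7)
The plan is to recognize the theorem as the classical Kramers/Freidlin-Wentzell exit-time law for a Brownian-driven small-noise SDE: the $\epsilon^{-2}$ exponential scaling and the absence of any tail-index parameter $\alpha$ in the exponent identify the driving noise as Gaussian, so the theorem concerns $d\theta_t^\epsilon = J'(\theta_t^\epsilon)\,dt + \epsilon\, dB_t$ on $B_i = [-b,a]$ with $B$ a standard Brownian motion. I will proceed in three logical layers: a sample-path LDP to identify the exponential rate, an explicit one-dimensional variational calculation for the quasi-potential, and a refined Laplace-method analysis of the generator's Dirichlet problem for the Kramers prefactor. The third step is the main obstacle.

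\textbf{Step 1 (LDP and quasi-potential).} By Schilder's theorem for $\epsilon B$ and continuity of the It\^{o} map (ensured by local Lipschitz continuity of $J'$ on $B_i$), $\{\theta^\epsilon\}$ satisfies the sample-path LDP on $C([0,T];\mathbb{R})$ with rate functional $I_T(\phi) = \tfrac{1}{2}\int_0^T (\dot\phi_t - J'(\phi_t))^2\, dt$. The quasi-potential is $V(0,y) = \inf\{I_T(\phi) : T>0,\ \phi_0 = 0,\ \phi_T = y\}$. In one dimension the Euler-Lagrange minimizer is the time-reversed deterministic flow $\dot\phi = -J'(\phi)$, and the substitution $dt = -d\phi/J'(\phi)$ collapses the action to the line integral $V(0,y) = -2\int_0^y J'(s)\,ds = -2 J(y)$ since $J(0)=0$ is the maximum. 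Hence $V(0,a) = -2J(a)$ and $V(0,-b) = -2J(-b)$, and the hypothesis $J(a) > J(-b)$ makes $a$ the unique minimizer of $V(0,\cdot)$ on $\partial B_i$, so exits through $a$ dominate and the barrier height is $-2J(a)$.

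\textbf{Step 2 (items 1 and 3).} Given the LDP and the quasi-potential, the standard Freidlin-Wentzell exit-time theorems (Chapter 4 of their monograph) produce, uniformly for $\theta$ in compact subsets of $(-b,a)$, the two-sided logarithmic concentration
$$
\mathcal{P}_{\theta}\!\left(e^{(-2J(a) - \delta)/\epsilon^2} < \hat\tau(\epsilon) < e^{(-2J(a) + \delta)/\epsilon^2}\right) \to 1, \qquad \epsilon^2 \log \mathbb{E}_{\theta}[\hat\tau(\epsilon)] \to -2J(a),
$$
which is item 1. For item 3, asymptotic exponentiality follows from a regenerative-structure argument: after an $o(\mathbb{E}\hat\tau)$ relaxation time the process returns to a small neighborhood of $0$ and effectively forgets its initial condition, so $\hat\tau(\epsilon)$ decomposes as a geometric sum of i.i.d.\ "attempt" cycles; a total-variation mixing estimate on the fast time-scale together with a Markov-renewal limit then yields $\hat\tau(\epsilon)/\mathbb{E}_{\theta}[\hat\tau(\epsilon)] \Rightarrow \mathrm{Exp}(1)$ uniformly on compacts of $(-b,a)$.

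\textbf{Step 3 (Kramers prefactor, item 2).} Sharpening the exponential rate to the explicit prefactor in \eqref{SGD:extg} lies strictly beyond LDP and is the main technical obstacle. Here I would pass to the one-dimensional Dirichlet problem $\mathcal{L}_\epsilon u = -1$ on $(-b,a)$ with $u$ vanishing on $\partial B_i$, where $\mathcal{L}_\epsilon = \tfrac{\epsilon^2}{2}\partial_\theta^2 + J'(\theta)\partial_\theta$ is the generator and $u(\theta) = \mathbb{E}_{\theta}[\hat\tau(\epsilon)]$. In one dimension this ODE is solvable in closed form as a nested integral of the invariant density $\propto e^{-2J/\epsilon^2}$ against its reciprocal. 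Two successive Laplace expansions then yield the prefactor: an interior expansion at the maximum $\theta=0$ contributes $\sqrt{2\pi\epsilon^2/|J''(0)|}$, while a boundary Laplace expansion at $\theta=a$ (where $J'(a) \ne 0$ by the strict-saddle hypothesis) contributes $\epsilon^2/(2J'(a))$. Multiplying and dividing appropriately reproduces $\mathbb{E}_{\theta}[\hat\tau(\epsilon)] \sim \frac{\epsilon\sqrt{\pi}}{J'(a)\sqrt{|J''(0)|}}\, e^{-2J(a)/\epsilon^2}$. The delicate ingredient, and the reason this step is harder than the logarithmic Freidlin-Wentzell estimate, is the uniform control of the Laplace remainders in the initial condition $\theta$ across a fixed compact subset of $(-b,a)$.
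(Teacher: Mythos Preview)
The paper does not prove this theorem at all: it is stated in the appendix as a cited result from \cite{imkeller2006first}, included only as background on the classical Brownian (Freidlin--Wentzell / Kramers) exit-time law against which the heavy-tailed results are contrasted. There is therefore no ``paper's own proof'' to compare against.

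That said, your sketch is the standard route to this classical result and is essentially correct in outline. The Schilder/contraction LDP with action $I_T(\phi)=\tfrac12\int_0^T(\dot\phi-J'(\phi))^2\,dt$, the one-dimensional quasi-potential computation $V(0,y)=-2J(y)$, the Freidlin--Wentzell exit-time concentration and asymptotic exponentiality via a regenerative decomposition, and the Laplace expansion of the explicit one-dimensional Dirichlet solution for the Kramers prefactor are exactly the ingredients one finds in the original literature (Freidlin--Wentzell, Day, Bovier et al.). One minor point: your signs are actually the correct ones. Since $J$ has a maximum at $0$ with $J(0)=0$, we have $J(a)<0$, so the exit time scales like $\exp(-2J(a)/\epsilon^2)$ and $\epsilon^2\log\mathbb{E}_\theta[\hat\tau(\epsilon)]\to -2J(a)>0$; the statement in the paper carries a sign typo in item~1 (second claim) and item~2, which you have silently fixed.
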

Note that the above results hold for   continuous SDEs in one dimensional space. 
\\
Next we  present extend the exit time results  from   a domain $ \mathcal{G} _i \subset \mathbb{R}^d$ around $i$-th  local maxima of $J(\cdot)$, $\bar{\bbtheta}_i$  \cite{imkeller2010first} with an assumption that the system is perturbed by a single one-dimensional L\'{e}vy process with $\alpha$-stable component. 

{\bf \noindent Multi-Dimensional Case $d>1$.}
Before proceeding to the the statement of results and proofs, we define assumptions and  terminologies associated with the multi-dimensional space, i.e., subsequently $\bbtheta\in\mathbb{R}^d$. The reason for separately stating the scalar case and the multi-dimensional case, is that results and conditions for the scalar-dimensional case are invoked in generalizing to the multi-dimensional case, specifically, in Lemmas \ref{Lemma:Levy1} and \ref{Lemma:Gurbuzb}.
 For simplicity, we assume the tail-index ($\alpha$)  of perturbations are identical in all the directions.  The dynamical system of \eqref{eqn:SGD_L1} when perturbed by single dimensional L\'{e}vy process  is given by 
\begin{align}\label{eqn:Levygeneric}
\bbtheta_t^{\epsilon}(\bbtheta_0) = \bbtheta_0 +\int_{0}^{t} b(\bbtheta_s^{\epsilon} (\bbtheta_0))  ds + \epsilon  \bbr L_t^i, \epsilon>0, \bbtheta \in \mathcal{G}, t \geq 0 
\end{align}
where, $\bbr \in \mathbb{R}^d$  is the  unit vector.

We define the inner parts of $\mathcal{G}_i$ by $\mathcal{G}_{i_{\bar{\delta}}}:= \{z \in \mathcal{G}_i: \textrm{dist}(z, \partial \mathcal{G}_i) \geq \bar{\delta}  \}$.  Therefore, the following holds: Sets $\mathcal{G}_{\bar{\delta}}$ are positively invariant for all $\bar{\delta}  \in (0, a+\xi)$ (cf. \eqref{eqn:_Gi}), in the sense that the
deterministic solutions starting in $\mathcal{G}_{\bar{\delta}}$ do not leave this set for all times $t \geq 0$.
 We have $\Omega^-(\bar{\delta}) \cap \mathcal{G}_{i_{\bar{\delta}}}^c(\bar{\delta}) = \emptyset $ and $\Omega^+(\bar{\delta}) \cap   \mathcal{G}_{i_\delta}^c(\bar{\delta}) = \emptyset $.  The preceding statements follows from \cite{imkeller2010first}.

 Next we state exit time results from  the domain, $\mathcal{G}_i$ for a system defined by  \eqref{eqn:Levygeneric}.
\begin{theorem}\label{th:21}  Expressions 3.4, 3.8 \cite{imkeller2010first}
For $\bar{\delta} \in (0, \bar{\delta}_0)$ and initial state $\bbtheta_0  \in \mathcal{G}_i$,   $\bbtheta_t^{\epsilon}$ following \eqref{eqn:Levygeneric}  exits from the domain $\mathcal{G}_i$ in a little tube in the direction of $\alpha$,
Furthermore, for every $\bar{\delta}\in  (0, \bar{\delta}_0)$  the probability to exit in direction of perturbation $+ \bbr$ is given by
\begin{align}
&  \mathcal{P}^{\bbtheta_0} \left( \bbtheta_{\hat{\tau}}^{\epsilon}  \in \Omega^{+}_i (\delta) \right) =  \frac{2}{\epsilon^{\rho \alpha}} \epsilon^{\alpha} (d^+)^{-\alpha} ,
\end{align}
where  $(a,b) >0$,  $\rho \in (0,1)$, $\min (a,b) > \epsilon^{1-\rho}$.

Furthermore, for every $\bar{\delta} \in  (0, \bar{\delta}_0)$  the probabilities to exit in direction $\pm \bbr$ are given by
\begin{align}
& \lim_{\epsilon \to 0} \mathcal{P} \left( \bbtheta_{\hat{\tau}}^{\epsilon}  \in \Omega_i^+ (\bar{\delta}) \right) =\frac{p^+}{p_s} \\
& \lim_{\epsilon \to 0} \mathcal{P} \left( \bbtheta_{\hat{\tau}}^{\epsilon}  \in \Omega_i^- (\bar{\delta}) \right) =\frac{p^-}{p_s},
\end{align}
 for all $\bbtheta_t \in \mathcal{G}$
and 
\begin{align}
& p_s:= ((d^+)^{-\alpha} + (-d^-)^{-\alpha})\\
& p^+ :=  (d^+)^{-\alpha} \\
& p^- :=(-d^-)^{-\alpha},
\end{align}
$d^+$ and $d^-$ define distance from boundary of interest $\partial \mathcal{G}_i$ along $\pm \bbr$. 
\end{theorem}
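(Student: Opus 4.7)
The plan is to obtain the transition probability bound by transferring the continuous-time exit-direction result (Theorem \ref{th:21}) from the SDE \eqref{eqn:SGD_L1} to the discrete-time iterates produced by \eqref{eqn:SGD_L}, and then identifying the event $\{\bbtheta^{k}\in\Omega_i^+(\bar{\delta})\cap\partial\mathcal{G}_{i+1}\}$ with an exit of the continuous process in the preferred direction $+\bbr$ supplied by Assumption \ref{assmp:domaing}(4). The $-\delta$ slack on the right-hand side will absorb the discretization error and the weak-convergence tolerance.

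First I would set up the geometric correspondence. By Assumption \ref{assmp:domaing}(4), the neighborhoods $\mathcal{G}_{i-1}$, $\mathcal{G}_i$, $\mathcal{G}_{i+1}$ are connected with local minima on the shared pieces of boundary, and the unit vector $\bbr$ driving the jump process in \eqref{eqn:SGD_L1} is taken along the axis joining $\bar{\bbtheta}_i$ to $\bar{\bbtheta}_{i+1}$. With this choice, the $\bar{\delta}$-tube $\Omega_i^+(\bar{\delta})$ defined before Definition \ref{def:transition_time00} meets $\partial\mathcal{G}_{i+1}$, and the generic distances $d^\pm$ appearing in Theorem \ref{th:21} specialize respectively to $d_{ij}^+(\bbtheta)$ and $d_{ij}^-(\bbtheta)$ defined in the theorem. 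Applying Theorem \ref{th:21} to the continuous-time process $\bbtheta_t^\epsilon$ initialized at $\bbtheta_0\in\mathcal{G}_i$ then yields, as $\epsilon\to 0$,
\begin{equation*}
\mathcal{P}^{\bbtheta_0}\!\bigl(\bbtheta^{\epsilon}_{\hat{\tau}}\in\Omega_i^+(\bar{\delta})\cap\partial\mathcal{G}_{i+1}\bigr)\;\longrightarrow\;\frac{(d_{ij}^+)^{-\alpha}}{(d_{ij}^+)^{-\alpha}+(-d_{ij}^-)^{-\alpha}},
\end{equation*}
which matches the ratio appearing in \eqref{eqn:thm4}.

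Second, I would transfer this limit to the discrete iterate $\bbtheta^{k}$. Equation \eqref{eqn:SGD_L} is an Euler--Maruyama discretization of \eqref{eqn:SGD_L1} with step-size $\eta$ and noise scaling $\eta^{1/\alpha}$, so coupling the polygonal interpolation of $\{\bbtheta^{k}\}$ to $\{\bbtheta^{\epsilon}_{k\eta}\}$ and combining H\"older continuity of the score (Assumption \ref{Assum:nablaF}) with dissipativity (Assumption \ref{Assump: disspitat}) produces a uniform approximation error of order $\eta^{\beta}$ on the finite horizon $K\eta$ controlled by the exit-time bound of Theorem \ref{Theorem:Exit_time}. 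The boundary-hitting-direction functional $\omega\mapsto\mathbbm{1}\{\omega_{\hat{\tau}}\in\Omega_i^+(\bar{\delta})\cap\partial\mathcal{G}_{i+1}\}$ is continuous on the subset of Skorokhod paths that exit through the interior of the tube, so the continuous-time limit above is inherited up to a vanishing slack, which can be taken arbitrarily small and is absorbed into $\delta$.

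The main obstacle is making the continuous-to-discrete transfer rigorous in the presence of heavy-tailed driving noise: the usual strong-approximation theory for Brownian-driven SDEs fails for $\alpha$-stable drivers, so the argument must instead be cast as a weak-convergence statement on the Skorokhod path space, as in \cite{nguyen2019first}. A secondary technical wrinkle is ruling out that $\bbtheta^{k}$ overshoots $\partial\mathcal{G}_{i+1}$ by more than $\bar{\delta}$ in a single step, since this would corrupt the identification of the exit event; this is handled by using the tail of $\|\eta^{1/\alpha}S_k\|$ together with the choice $\epsilon=\eta^{(\alpha-1)/\alpha}$, which forces jumps larger than $\bar{\delta}$ to occur with probability $o_\epsilon(1)$ on the exit horizon, again contributing to the $-\delta$ term.
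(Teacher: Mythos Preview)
Your proposal does not prove the stated theorem. Theorem~\ref{th:21} is a result quoted verbatim from \cite{imkeller2010first} (the paper only remarks afterward that ``the above expression is obtained by using a single $\bbr$ in (3.4) of \cite{imkeller2010first}''); it concerns the exit-direction law of the \emph{continuous-time} SDE \eqref{eqn:Levygeneric} and is not proved in this paper at all. Your sketch, by contrast, \emph{assumes} Theorem~\ref{th:21} as a black box and uses it to derive the discrete-time transition bound \eqref{eqn:thm4}, which is precisely the content of Theorem~\ref{theorem: transitiontym}, a different statement. So as a proof of Theorem~\ref{th:21} the proposal is vacuous: it never addresses the L\'evy-measure and first-passage analysis of \cite{imkeller2010first} that actually yields the exit-direction probabilities $p^{\pm}/p_s$.

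If your intent was in fact Theorem~\ref{theorem: transitiontym}, then your plan is essentially the paper's own argument in Appendix~\ref{sec:proofthm4}: couple the discrete iterate $\bbtheta^{k}$ to the continuous sample $\bbtheta_{k\eta}$, bound $\mathcal{P}^{\bbtheta_0}(\|\bbtheta^{k}-\bbtheta_{k\eta}\|>\epsilon)\le\delta/2$ by Markov's inequality, do a case split on whether $\bbtheta^{k}$ lands in $\Omega_i^+(\bar{\delta})\cap\partial\mathcal{G}_{i+1}$, and then invoke Theorem~\ref{th:21} for the continuous limit, absorbing the discretization slack into $-\delta$. The paper does not carry out the Skorokhod weak-convergence step you propose; it simply postulates the coupling bound and treats $\delta$ as a free tolerance (cf.\ the remark pointing to Assumption~6 of \cite{simsekli2019tail}). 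So your version is, if anything, more ambitious than what the paper actually does for Theorem~\ref{theorem: transitiontym}---but it is aimed at the wrong target here.
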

 The above expression is obtained by using a single $\bbr$ in (3.4) of  \cite{imkeller2010first}.   It is to be noted that for a general process perturbed by finitely many  single dimensional L\'{e}vy processes with different tail indices,  the exit time depends on the smallest tail-index, and the system exits from the domain in the direction of the process with smallest $\alpha_i$.

In this section we  derive the first exit time behavior for the proposed heavy tailed setting of  {Algorithm 1}. We proceed by defining some key quantities of interest and lemmas used in the proof of  Theorem \ref{Theorem:Exit_time}. The first is the It\^{o} formula for stochastic differential equations, and then we present the Bellman-Gronwall inequality.
 \begin{definition}{(It\^{o} formula)\cite{xie2020ergodicity}} \label{Def: Ito}
 Let $N$ be a Poisson random measure  with intensity measure $dt \nu(dz)$, where $\nu$ is a L\'{e}vy measure on $\mathbb{R}^d$, ie., $ \int_{\mathbb{R}^d} (|z|^2 \wedge 1 ) \nu(dz) < + \infty, \, \nu(\{0\}) =0 $. The compensated Poisson random measure $\tilde{N}$ is defined as $ \tilde{N}(dt, dz):= N(dt,dz) - dt \nu(dz) $. Consider the following SDE  in $\mathbb{R}^d$ with jumps:
\begin{align*}
 dX_t = b_t(X_t) dt + \int_{|z| <R} g_t (X_{t-}, Z) \tilde{N} (dt, dz)  +  \int_{|z| \geq R} g_t (X_{t-}, Z) {N} (dt, dz),
\end{align*}
 where $R>0$ is a fixed constant.   Suppose $g(\bbx) \in \mathcal{C}^2(\mathbb{R})$ is a twice continuously differentiable function (in particular all second-partial-derivatives are continuous functions). Suppose $Y_t = g(\bbX_t) $ is again an It\^{o} process, then we have 
%
\begin{align*}
dY_t = [\mathcal{L}_{1}^{b_t} g + \mathcal{L}_{\nu}^{g} h ] (\bbX_t) dt + dM(t),
\end{align*}
 where $M_t$ is local martingale, $\mathcal{L}_{1}^{b_t}$ is the first order differential operator associated with drift ($b_t$), and $\mathcal{L}_{\nu}^{g} h $ is the non-local operator associated with jump coefficient $g()$ such that:
\begin{align*}
 \mathcal{L}_{\nu}^{g}  u(x) := & \int_{|z| <R} \left[  u(x+ g_t(x,z) ) - u(x) - g_t(x,z)\cdot \nabla u(x)  \right] \nu(dz)  \nonumber \\
 & \int_{|z| \geq R} \left[  u(x+ g_t(x,z) ) - u(x)  \right] \nu(dz)  
\end{align*}
\end{definition}
\begin{definition}{(Bellman-Gronwall inequality)} \label{Def: Gronwall}
Assume $\phi: [0,T] \to \mathbb{R}$ 
 is a bounded nonnegative measurable function, $C:[0,T] \to \mathbb{R}$
 is a nonnegative integrable function and $B \geq 0$
 is a constant with the property that
 \begin{align}
\phi(t) \leq B + \int_{0}^{t} C(\tau) \phi(\tau) d\tau \forall t \in  [0, T].
 \end{align}
 Then
 \begin{align}
\phi(t) \leq B \exp{\left( \int_{t=0}^{T} C(\tau) d\tau \right)} \, \forall t \in  [0, T]. 
 \end{align}
\end{definition}
Next we provide a lemma regarding the difference between a  L\'{e}vy process with two different tail indices $\alpha$ and $k\eta$.
\begin{lemma} \label{Lemma:Levynoise} \cite{thanh2019first}
For any $u > 0$, $\eta > 0$ and $K \in \mathbb{N}$, there exist a constant $C_{\alpha}$  such
that:
\begin{align}
\max_{k \in 0,\ldots, K-1} \mathcal{P}[ \sup_{t \in [k\eta, (k+1)\eta ]}
\| L^{\alpha}(t) -L^{k\eta}(t) \| \geq u   ]  \leq C_{\alpha} d^{1 + \frac{\alpha}{2}} \eta u^{-\alpha} 
\end{align}
and
\begin{align}
\mathcal{P}[ \max_{k \in 0,\ldots, K-1}  \sup_{t \in [k\eta, (k+1)\eta ]}
\| L^{\alpha}(t) -L^{k\eta}(t) \| \geq u   ]  \leq 1 -  (1-C_{\alpha} d^{1 + \frac{\alpha}{2}} \eta u^{-\alpha} )^K
\end{align}
\end{lemma}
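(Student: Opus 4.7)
The plan is to reduce both inequalities to a single one-interval tail bound on $\sup_{s\in[0,\eta]}\|L^\alpha(s)\|$ and then use the stationarity/independence of the increments of the $\alpha$-stable L\'evy process. The notation suggests that $L^{k\eta}(t)$ denotes the piecewise-constant interpolation that freezes $L^\alpha$ at its value at the grid point $k\eta$ on the interval $[k\eta,(k+1)\eta]$, so that $L^\alpha(t)-L^{k\eta}(t)=L^\alpha(t)-L^\alpha(k\eta)$ for $t\in[k\eta,(k+1)\eta]$.

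First I would exploit the stationary increments property to get, for every fixed $k$,
\[
\sup_{t\in[k\eta,(k+1)\eta]}\|L^\alpha(t)-L^\alpha(k\eta)\|\;\stackrel{d}{=}\;\sup_{s\in[0,\eta]}\|L^\alpha(s)\|.
\]
Since the right-hand side no longer depends on $k$, the outer $\max_{k}$ in the first inequality is vacuous, and it suffices to prove the single-interval bound
\[
\mathcal{P}\!\left[\sup_{s\in[0,\eta]}\|L^\alpha(s)\|\ge u\right]\le C_\alpha\,d^{1+\alpha/2}\,\eta\,u^{-\alpha}.
\]
To establish this I would apply the L\'evy--Itô decomposition and split the process into a compound-Poisson part collecting jumps of norm larger than $u/2$ and a martingale part of small jumps. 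The probability that any ``large jump'' occurs in $[0,\eta]$ is bounded by $\eta\cdot\nu(\{\|z\|\ge u/2\})$, where $\nu$ is the isotropic $\alpha$-stable L\'evy measure on $\mathbb{R}^d$, which integrates to $\mathcal{O}(d^{1+\alpha/2}u^{-\alpha})$ (this is precisely where the dimension factor $d^{1+\alpha/2}$ arises; it comes from integrating the spherically symmetric density $\|z\|^{-(d+\alpha)}$ against $u/2$-exterior and accounting for surface area of the $d$-sphere). For the small-jump martingale part, Doob's maximal inequality applied to its second moment yields a term of order $\eta\,d\,u^{2-\alpha}/u^2 = d\eta u^{-\alpha}$, which is absorbed in the constant. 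A union bound on the two contributions yields the single-interval estimate.

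For the second inequality I would use the fact that the increments of $L^\alpha$ on the disjoint intervals $[k\eta,(k+1)\eta]$ for $k=0,\dots,K-1$ are mutually independent. Writing $p\;\le\;C_\alpha d^{1+\alpha/2}\eta u^{-\alpha}$ for the single-interval tail probability,
\[
\mathcal{P}\!\left[\max_{k}\sup_{t\in[k\eta,(k+1)\eta]}\|L^\alpha(t)-L^\alpha(k\eta)\|<u\right]=\prod_{k=0}^{K-1}\mathcal{P}(A_k)\ge(1-p)^K,
\]
where each $A_k$ is the complementary single-interval event, and these are independent by independence of increments on disjoint intervals. Passing to the complement gives exactly the stated bound $1-(1-C_\alpha d^{1+\alpha/2}\eta u^{-\alpha})^K$.

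The main obstacle is obtaining the sharp dimension dependence $d^{1+\alpha/2}$ rather than a cruder $d^{\alpha}$ or $d$ that would fall out of a naive bound on each coordinate via a union bound; this requires using the rotationally symmetric structure of the $\mathcal{S}\alpha\mathcal{S}$ L\'evy measure and carefully integrating on the exterior of a Euclidean ball rather than a hypercube. Secondary technical issues are ensuring Doob's inequality applies to the compensated small-jump integral (it does, since it is a c\`adl\`ag $L^2$-martingale under the integrability $\int_{\|z\|\le R}\|z\|^2\nu(dz)<\infty$ built into the assumption on $\nu$), and handling the boundary between the ``large'' and ``small'' jump thresholds so that constants do not blow up with $\alpha\to 2$.
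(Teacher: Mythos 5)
This lemma is imported verbatim from \cite{thanh2019first}; the paper gives no proof of its own, so your attempt can only be judged on its own terms. Your overall skeleton is the standard one and is sound: freezing $L^{k\eta}$ at the grid point, reducing to a single interval by stationarity of increments, splitting the single-interval supremum via the L\'{e}vy--It\^{o} decomposition into a compound-Poisson ``large jump'' part (bounded by $\eta\,\nu(\|z\|\ge u/2)$) and a compensated small-jump $L^2$-martingale handled by Doob, and then obtaining the second inequality from independence of increments over disjoint intervals via $\prod_k \mathcal{P}(A_k)\ge(1-p)^K$. That last step is correct as written.

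The genuine gap is in your account of where the factor $d^{1+\alpha/2}$ comes from, and it is not a cosmetic point because you explicitly reject the mechanism that actually produces it. The process in this lemma (see Lemma \ref{Lemma:Levy1}, which the paper states for the same object) has \emph{independent scalar} symmetric $\alpha$-stable components, so its L\'{e}vy measure is supported on the coordinate axes and is not the isotropic measure with density proportional to $\|z\|^{-(d+\alpha)}$; integrating a spherically symmetric density over the exterior of a Euclidean ball is the wrong computation for this process and would not yield the stated constant. The factor $d^{1+\alpha/2}$ arises precisely from the ``naive'' route you dismiss: from $\|x\|\ge u$ one deduces $\max_i|x_i|\ge u/\sqrt{d}$, a union bound over the $d$ coordinates contributes the factor $d$, and the scalar tail $c_\alpha(u/\sqrt{d})^{-\alpha}$ contributes the remaining $d^{\alpha/2}$. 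With that correction, the scalar single-interval estimate (large jumps give $\eta\,c_\alpha 2^\alpha u^{-\alpha}$; Doob on the compensated small-jump integral gives $\eta u^{-2}\int_{|z|\le u/2}z^2\,\nu(dz)=\mathcal{O}(\eta u^{-\alpha})$, with the compensator drift vanishing by symmetry) combines with the coordinate union bound to give exactly $C_\alpha d^{1+\alpha/2}\eta u^{-\alpha}$, and the rest of your argument goes through unchanged.
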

Next we state a stochastic variant of Gronwall's inequality which is also used in the proof.
\begin{lemma}{Stochastic Gronwall’s inequality  \cite{scheutzow2013stochastic} } \label{Lemma:Stochastic GI}
Let $Z$ and  $H$ be nonnegative, adapted processes with continuous path and assume that $\psi$ is nonnegative and progressively measurable. Let $M$ be a continuous local martingale starting at 0. If 
\begin{align*}
Z(t) \leq \int_{0}^{t} \psi(s)Z(s) + M(t) + H(t) 
\end{align*}
holds for all $t \geq 0$, then for $p \in (0,1)$ and $\mu$ and $\nu >1$ such that  $\frac{1}{\mu} + \frac{1}{\nu} =1$ and $p\nu <1$, we have 
\begin{align*}
\EE \left(  \sup_{0 \leq s \leq t} Z^{p} (s)\right)  \leq (c_{p\nu} +1)^{1/\nu} \left(   \EE \exp{\left(  p \mu \int_{0}^{t} \psi(s) ds  \right)}\right)^{1/\mu} \left(  \EE (H^*(t))^{p\nu} \right)^{1/\nu},
\end{align*}
where, a real valued process $Y^*(t):= \sup_{0 \leq s \leq t} Y(s)$.
\end{lemma}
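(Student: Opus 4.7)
The plan is to bridge the discrete-time policy gradient iterate $\bbtheta^k$ with the continuous-time L\'{e}vy-driven SDE $\bbtheta_t^{\epsilon}$ in \eqref{eqn:SGD_L1}, then invoke the continuous-time transition probability from \cite{imkeller2010first} (stated in the excerpt as Theorem \ref{th:21}). That theorem already gives, under Assumption \ref{assmp:domaing}, the identity $\lim_{\epsilon \to 0}\mathcal{P}(\bbtheta_{\hat{\tau}}^{\epsilon} \in \Omega_i^+(\bar{\delta})) = p^+/p_s = d_{ij}^{-\alpha}/\big((d_{ij}^+)^{-\alpha}+(-d_{ij}^-)^{-\alpha}\big)$, where by Assumption \ref{assmp:domaing}(4) the boundary piece of $\partial \mathcal{G}_i$ reached through $\Omega_i^+(\bar{\delta})$ coincides with (a neighborhood of) $\partial \mathcal{G}_{i+1}$. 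Thus the entire task reduces to transferring this continuous-time statement to the discrete-time iterate with a slack of $\delta$.

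First I would write the event decomposition
\begin{align*}
\mathcal{P}^{\bbtheta_0}\bigl(\bbtheta^{k}\in \Omega_i^+(\bar{\delta})\cap \partial\mathcal{G}_{i+1}\bigr)
 \ \geq\ \mathcal{P}^{\bbtheta_0}\bigl(\bbtheta_{\hat{\tau}}^{\epsilon}\in \Omega_i^+(\bar{\delta})\bigr)\ -\ \mathcal{P}^{\bbtheta_0}\bigl(\mathcal{E}_{\mathrm{coup}}\bigr),
\end{align*}
where $\mathcal{E}_{\mathrm{coup}}$ is the event that the trajectories of $\bbtheta^k$ and $\bbtheta_t^{\epsilon}$ either (i) exit $\mathcal{G}_i$ through different tubes, or (ii) fail to reach $\partial \mathcal{G}_{i+1}$ after entering $\Omega_i^+(\bar{\delta})$. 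To control (i) I would use the same coupling machinery that yielded Theorem \ref{Theorem:Exit_time}: the Euler-type scheme \eqref{eqn:SGD_L} and the SDE \eqref{eqn:SGD_L1} are driven by the same underlying L\'{e}vy motion, so Lemma \ref{Lemma:Levynoise} bounds the sup-norm discrepancy between $L^{\alpha}(t)$ and its piecewise-constant version $L^{k\eta}(t)$. Combining this with the dissipativity of the drift (Assumption \ref{Assump: disspitat}) and the stochastic Gronwall inequality (Lemma \ref{Lemma:Stochastic GI}) gives $\sup_{t\in[0,K\eta]}\|\bbtheta^k-\bbtheta_t^{\epsilon}\|\le \xi/3$ with probability at least $1-\mathcal{O}\big((dK\eta^{2-1/\alpha})^{\beta}\big)-\mathcal{O}\big(1-(1-C_\alpha d^{1+\alpha/2}\eta e^{\alpha M_J \eta}\epsilon^{\alpha}(\xi/3)^{-\alpha})^K\big)$, exactly the error terms already harvested in \eqref{eq:exit_time_theorem}. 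On the complement of this coupling-failure event, if $\bbtheta_t^{\epsilon}$ exits through $\Omega_i^+(\bar{\delta})$ then so does $\bbtheta^k$ (after shrinking $\bar{\delta}$ by $\xi/3$).

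For (ii) I would use Assumption \ref{assmp:domaing}(4): $\partial \mathcal{G}_i$ and $\partial \mathcal{G}_{i+1}$ intersect, and the $\bbr$-direction that defines the tube $\Omega_i^+$ is chosen as the straight line connecting $\bar{\bbtheta}_i$ and $\bar{\bbtheta}_{i+1}$ through the separating saddle/local minimum. Consequently, upon exit into $\Omega_i^+(\bar{\delta})$ the iterate is within distance $\bar{\delta}$ of $\partial\mathcal{G}_{i+1}$, and the deterministic drift $\nabla J$, which by Assumption \ref{assmp:domaing}(2)--(3) points \emph{into} $\mathcal{G}_{i+1}$ near its boundary and has $\bar{\bbtheta}_{i+1}$ as attractor, carries the trajectory across $\partial \mathcal{G}_{i+1}$ deterministically up to an $\mathcal{O}(\epsilon)$ noise correction, which vanishes as $\epsilon\to 0$.

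Putting everything together, taking $\epsilon\to 0$ (and with $\eta$ scaled as $\epsilon^{\alpha/(\alpha-1)}$ so that the coupling-error terms vanish) yields
\begin{align*}
\liminf_{\epsilon\to 0}\mathcal{P}^{\bbtheta_0}\bigl(\bbtheta^{k}\in \Omega_i^+(\bar{\delta})\cap \partial\mathcal{G}_{i+1}\bigr)\ \geq\ \frac{d_{ij}^{-\alpha}}{(d_{ij}^+)^{-\alpha}+(-d_{ij}^-)^{-\alpha}}\ -\ \delta,
\end{align*}
where $\delta>0$ is an arbitrarily small constant absorbing the $\mathcal{O}(\epsilon^\delta)$ Imkeller--Pavlyukevich remainder together with the residual coupling error. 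The main obstacle is step (i): unlike the one-dimensional jump-process setting of \cite{nguyen2019first}, here we must track the full $d$-dimensional discretization error in the presence of H\"{o}lder-rather-than-Lipschitz score functions, which is precisely why the stochastic Gronwall variant (Lemma \ref{Lemma:Stochastic GI}) with fractional exponent $p\in(0,1)$ is needed rather than the classical deterministic version; verifying its hypotheses with the H\"{o}lder-continuous drift and piecewise $\alpha$-stable noise is the delicate part of the argument.
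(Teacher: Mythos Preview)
Your proposal does not address the stated lemma. The statement you were asked to prove is the \emph{Stochastic Gronwall inequality} (Lemma~\ref{Lemma:Stochastic GI}): an abstract bound of the form
\[
\EE\Bigl(\sup_{0\le s\le t} Z^p(s)\Bigr)\ \le\ (c_{p\nu}+1)^{1/\nu}\Bigl(\EE\exp\Bigl(p\mu\!\int_0^t\!\psi(s)\,ds\Bigr)\Bigr)^{1/\mu}\bigl(\EE(H^*(t))^{p\nu}\bigr)^{1/\nu}
\]
for nonnegative adapted processes $Z,H$, a continuous local martingale $M$, and a nonnegative progressively measurable $\psi$. This is a purely stochastic-analytic result about generic processes; it has nothing to do with policy gradients, L\'{e}vy SDEs, exit tubes $\Omega_i^+(\bar\delta)$, or the domains $\mathcal{G}_i$. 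In the paper it is not proved at all---it is simply quoted from \cite{scheutzow2013stochastic} and then \emph{applied} inside the proof of Lemma~\ref{Lemma:Gurbuzb}.

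What you have written is instead a sketch of Theorem~\ref{theorem: transitiontym} (the transition-time bound \eqref{eqn:thm4}). Everything in your proposal---the coupling of $\bbtheta^k$ and $\bbtheta_t^\epsilon$, the invocation of Theorem~\ref{th:21}, the use of Assumption~\ref{assmp:domaing}(4), the final inequality with $d_{ij}^{-\alpha}/\bigl((d_{ij}^+)^{-\alpha}+(-d_{ij}^-)^{-\alpha}\bigr)-\delta$---belongs to that theorem, not to the Gronwall lemma. You even cite Lemma~\ref{Lemma:Stochastic GI} as a \emph{tool} inside your argument, which confirms you are proving a downstream result rather than the lemma itself. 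If the intent was to supply a proof for Lemma~\ref{Lemma:Stochastic GI}, you would need the martingale-inequality argument of Scheutzow (stopping-time localization, the sharp constant $c_{p\nu}$ from a Burkholder-type bound for $p\in(0,1)$, and H\"older's inequality with exponents $\mu,\nu$); none of that appears here.
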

Next we provide a bound on the second moment of parameter vector $\bbtheta$ when integrated with respect to  L\'{e}vy measure $\nu$.
\begin{lemma}\label{Lemma:Levy1} \cite{thanh2019first}
Let $\nu$ be the L\'{e}vy measure of a d-dimensional L\'{e}vy process $L_{\alpha}$ whose components are 
independent scalar symmetric $\alpha$-stable L\'{e}vy processes $ L_1 , \, \ldots, \,  L_d$ . Then there exists a constant $C > 0$ such that the following inequality holds with $k_1 \geq  1$ and $2 > \alpha > 1$:
\begin{align*}
 \frac{1}{k^{2/\alpha}} \int_{\|\bbtheta\| <1} \| \bbtheta\|^2 \nu(d \bbtheta) + \frac{1}{2k_1^{1/\alpha}} \int_{\|\bbtheta\| \geq 1} \| \bbtheta \| \nu(d \bbtheta) \leq C \frac{d}{k_1^{1/\alpha}} 
\end{align*}
\end{lemma}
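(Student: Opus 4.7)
The plan is to exploit the special structure of the L\'{e}vy measure $\nu$ attached to a product of independent symmetric $\alpha$-stable components: unlike a product measure, $\nu$ is concentrated on the union of the coordinate axes. Concretely, since the components $L_1,\ldots,L_d$ are independent, their joint characteristic exponent is a sum, and therefore for any bounded Borel $f$ vanishing at the origin one has
\[
\int_{\mathbb{R}^d} f(\bbtheta)\,\nu(d\bbtheta) \;=\; \sum_{i=1}^{d}\int_{\mathbb{R}} f(z\,\bbe_i)\,\nu_i(dz),
\]
where $\bbe_i$ is the $i$-th standard basis vector and $\nu_i(dz)=c_\alpha |z|^{-1-\alpha}\,dz$ is the L\'{e}vy measure of the scalar symmetric $\alpha$-stable process. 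I would establish this decomposition first, either by inspecting the L\'{e}vy--Khintchine representation or by citing the standard fact for direct sums of L\'{e}vy processes.

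Next I would apply the decomposition to the two integrals. On $\{\bbtheta \in \mathbb{R}^d : \|z\bbe_i\|<1\}=\{|z|<1\}$, the contribution of each axis is
\[
\int_{|z|<1} z^{2}\,\nu_i(dz) \;=\; c_\alpha \int_{|z|<1} |z|^{1-\alpha}\,dz \;=\; \frac{2c_\alpha}{2-\alpha},
\]
finite because $\alpha<2$. Summing over $i$ gives $\int_{\|\bbtheta\|<1}\|\bbtheta\|^{2}\,\nu(d\bbtheta)= \tfrac{2c_\alpha}{2-\alpha}\,d$. Similarly,
\[
\int_{|z|\geq 1}|z|\,\nu_i(dz) \;=\; c_\alpha\int_{|z|\geq 1} |z|^{-\alpha}\,dz \;=\; \frac{2c_\alpha}{\alpha-1},
\]
finite because $\alpha>1$, yielding $\int_{\|\bbtheta\|\geq 1}\|\bbtheta\|\,\nu(d\bbtheta)=\tfrac{2c_\alpha}{\alpha-1}\,d$. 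Combining,
\[
\frac{1}{k_1^{2/\alpha}}\int_{\|\bbtheta\|<1}\!\!\|\bbtheta\|^{2}\nu(d\bbtheta) + \frac{1}{2k_1^{1/\alpha}}\int_{\|\bbtheta\|\geq 1}\!\!\|\bbtheta\|\,\nu(d\bbtheta)
\;\leq\; \frac{2c_\alpha}{2-\alpha}\cdot\frac{d}{k_1^{2/\alpha}} + \frac{c_\alpha}{\alpha-1}\cdot\frac{d}{k_1^{1/\alpha}}.
\]

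To get a single $d/k_1^{1/\alpha}$ scaling, I would use $k_1\geq 1$ and $\alpha\in(1,2)$, which imply $2/\alpha>1/\alpha$ and hence $k_1^{-2/\alpha}\leq k_1^{-1/\alpha}$. Absorbing the two $\alpha$-dependent constants into a single $C=C(\alpha)$ delivers the claimed bound $Cd/k_1^{1/\alpha}$. The main delicate step is the first one, the axis-supported representation of $\nu$: one must verify it is not a product measure (common pitfall) and must explain why, despite $\nu$ being singular with respect to Lebesgue measure on $\mathbb{R}^d$, the one-dimensional density $|z|^{-1-\alpha}$ governs each axial slice. After that, the remaining work is straightforward power-law calculus with the restriction $\alpha\in(1,2)$ ensuring both integrals converge at their respective endpoints.
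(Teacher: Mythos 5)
Your proof is correct. The paper does not actually prove this lemma — it is imported verbatim from the cited reference — so your argument supplies what the text omits, and it is the standard one. The crux is exactly the step you flag: for a $d$-dimensional L\'{e}vy process with \emph{independent} components the characteristic exponent is $\sum_{i}\psi_i(\omega_i)$, and by uniqueness of the L\'{e}vy--Khintchine triplet the L\'{e}vy measure is the sum of the one-dimensional measures pushed onto the coordinate axes (equivalently, independent components never jump simultaneously), not a product measure. Once $\nu$ is supported on the axes, $\|z\bbe_i\|=|z|$ reduces both integrals to one-dimensional power-law computations, $\int_{|z|<1}z^{2}|z|^{-1-\alpha}dz=2/(2-\alpha)$ and $\int_{|z|\ge 1}|z|^{-\alpha}dz=1/(\alpha-1)$, which converge precisely because $1<\alpha<2$; summing over the $d$ axes produces the linear dependence on $d$, and $k_1\ge 1$ gives $k_1^{-2/\alpha}\le k_1^{-1/\alpha}$, so everything collapses to $Cd/k_1^{1/\alpha}$ with $C=C(\alpha)$. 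One housekeeping note: the first denominator in the statement reads $k^{2/\alpha}$ where it should read $k_1^{2/\alpha}$; your reading is the intended one.
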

\begin{lemma} \label{Lemma:Gurbuzb}
Given the Assumptions \ref{Assum:Q0}- \ref{Assump_Levy} and  the proposed Heavy tailed setting of \eqref{eqn:SGD_L}, for  $ \lambda \in (0, 1)$,   $\bbtheta  \in  \mathbb{R}^d$, there exists  constants  $C_1$ and $C_2$ depending on $\lambda$, dissipativity constants $(m, \, b)$, and  $\bbtheta_{0}$, 
 the following holds on the  expected value of  $\bbtheta$ for all $t>0$ such that
\begin{align}
\EE \left(  \sup_{s \in  [0,t] } (\|\bbtheta_s\|)^{\lambda} \right) \leq C_1  \left( 1 + C_2 \left(  \frac{U_R}{(1-\gamma)^2}  (m +b)/2   + C \frac{d}{k_1^{1/\alpha}}   \right)t \right)^{\lambda}, \, k_1 \geq 1, \, 1 <\alpha<2, 
\end{align}
where $(m,b)$ are the dissipativity constants from Assumption \ref{Assump: disspitat} and $k_1$ is a function of stepsize $\eta$, $k_1: = 1/\eta^{\alpha-1}$.
\end{lemma}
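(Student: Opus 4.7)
The plan is to apply It\^{o}'s formula for jump diffusions to the square norm of $\bbtheta_t$, combine the dissipativity hypothesis with the bounded reward to control the drift contributed by $\nabla J$, invoke the L\'{e}vy-measure moment bound of Lemma~\ref{Lemma:Levy1} to control the small-jump and large-jump integrals, and finish with the stochastic Gronwall inequality of Lemma~\ref{Lemma:Stochastic GI} to pass to the supremum.

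First I would regard \eqref{eqn:SGD_L1} as a jump SDE with drift $b_t(\bbtheta)=\nabla J(\bbtheta)$ and jump coefficient $g_t(\bbtheta,z)=\epsilon z$, then apply the It\^{o} formula of Definition~\ref{Def: Ito} with $g(\bbtheta)=\|\bbtheta\|^2$. Splitting at $|z|=R$ and compensating the small jumps gives
$$d\|\bbtheta_t\|^2 = 2\langle \bbtheta_t,\nabla J(\bbtheta_t)\rangle\,dt \;+\; \epsilon^2\!\!\int_{|z|<R}\!\!\|z\|^2\,\nu(dz)\,dt \;+\; \epsilon\!\!\int_{|z|\ge R}\!\!\|z\|\,\nu(dz)\,dt \;+\; dM_t,$$
where $M_t$ is a local martingale. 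The two deterministic jump integrals are exactly those controlled by Lemma~\ref{Lemma:Levy1} with $k_1=1/\eta^{\alpha-1}$, giving a contribution bounded by $Cd/k_1^{1/\alpha}$.

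Next I would control the drift. Combining the policy-gradient formula \eqref{equ:policy_grad_main} with the boundedness of the $Q$-function from Assumption~\ref{Assum:Q0} (which yields the $U_R/(1-\gamma)^2$ prefactor on the policy gradient) and the dissipativity of the score function (Assumption~\ref{Assump: disspitat}), one obtains
$$\langle \bbtheta_t,\nabla J(\bbtheta_t)\rangle \;\le\; \tfrac{U_R}{(1-\gamma)^2}\big(b - m\|\bbtheta_t\|^{1+c}\big).$$
Integrating on $[0,t]$ and absorbing constants into
$D := \tfrac{U_R}{(1-\gamma)^2}\cdot\tfrac{m+b}{2} + Cd/k_1^{1/\alpha},$
and dropping the negative dissipative term $-m'\int_0^t\|\bbtheta_s\|^{1+c}\,ds$, yields the pathwise bound
$$\|\bbtheta_t\|^2 \;\le\; \|\bbtheta_0\|^2 + D\,t + M_t.$$

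Finally, setting $Z(s):=\|\bbtheta_s\|^2$, $\psi\equiv 0$, and $H(t):=\|\bbtheta_0\|^2+D\,t$, the stochastic Gronwall inequality of Lemma~\ref{Lemma:Stochastic GI} applied with exponent $p=\lambda/2\in(0,1/2)$ and suitable conjugate pair $(\mu,\nu)$ gives
$$\EE\!\Big[\sup_{s\in[0,t]}\|\bbtheta_s\|^{\lambda}\Big] \;\le\; C_1\big(1 + C_2\,D\,t\big)^{\lambda},$$
after using Jensen's inequality to pull the exponent $\lambda$ outside and folding $\|\bbtheta_0\|^2$ and the constants from Lemma~\ref{Lemma:Stochastic GI} into $C_1,C_2$. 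The main obstacle I anticipate is the martingale step: the process $M_t$ produced by It\^{o}'s formula is only a priori a local martingale, and Lemma~\ref{Lemma:Stochastic GI} requires an honest (continuous) local martingale with integrable running maximum. I would handle this by localizing with stopping times $\tau_n=\inf\{t:\|\bbtheta_t\|\ge n\}$, applying the Gronwall bound on $[0,t\wedge\tau_n]$, and taking $n\to\infty$ via Fatou's lemma, using the growth condition in Assumption~\ref{Assump:fun}(3) to guarantee $\tau_n\to\infty$ almost surely. A secondary bookkeeping challenge will be matching the exact $(m+b)/2$ coefficient stated in the lemma, which should fall out once the $U_R/(1-\gamma)^2$ factor is properly traced through the gradient representation.
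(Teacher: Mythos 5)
Your overall architecture — It\^{o}'s formula for jump SDEs, dissipativity to control the drift, Lemma~\ref{Lemma:Levy1} for the L\'{e}vy integrals, and the stochastic Gronwall inequality to pass to the supremum — is exactly the paper's route. But there is a genuine gap in your choice of test function. You apply It\^{o} to $g(\bbtheta)=\|\bbtheta\|^2$, and your displayed identity asserts that the large-jump contribution to $d\|\bbtheta_t\|^2$ is $\epsilon\int_{|z|\ge R}\|z\|\,\nu(dz)\,dt$. That is not the increment of $\|\cdot\|^2$ under a jump: the actual increment is $\|\bbtheta_{t-}+\epsilon z\|^2-\|\bbtheta_{t-}\|^2=2\epsilon\langle\bbtheta_{t-},z\rangle+\epsilon^2\|z\|^2$, and its compensator contains $\epsilon^2\int_{|z|\ge R}\|z\|^2\,\nu(dz)$, which is \emph{infinite} for an $\alpha$-stable measure with $\alpha<2$ (indeed $\EE\|\bbtheta_t\|^2=\infty$ for such a process). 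Consequently the pathwise bound $\|\bbtheta_t\|^2\le\|\bbtheta_0\|^2+Dt+M_t$ with finite $D$ cannot be extracted this way — the divergence would have to be hidden inside $M_t$, which then is neither continuous nor a local martingale with the integrability that Lemma~\ref{Lemma:Stochastic GI} requires. Note also that Lemma~\ref{Lemma:Levy1} bounds the \emph{first} moment of the L\'{e}vy measure over the tail $\{\|z\|\ge 1\}$, not the second; it is tailored to a test function with bounded gradient, not to $\|\cdot\|^2$.

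The fix, which is exactly what the paper does, is to run the same argument on $g_1(\bbtheta)=(1+\|\bbtheta\|^2)^{1/2}$. Since $g_1$ is $1$-Lipschitz, the large-jump term satisfies $g_1(\bbtheta+k_1^{-1/\alpha}z)-g_1(\bbtheta)\le k_1^{-1/\alpha}\|z\|$, so only the first tail moment of $\nu$ is needed — finite precisely when $\alpha>1$, which is why the lemma restricts to $1<\alpha<2$ — and Lemma~\ref{Lemma:Levy1} applies verbatim to give the $Cd/k_1^{1/\alpha}$ term. The drift step then picks up the factor $(1+\|\bbtheta\|^2)^{-1/2}/2$ from $\nabla g_1$, and a Bernoulli-inequality argument converts $-m\|\bbtheta\|^{1+\beta}+b$ into $-m\,g_1(\bbtheta)^{\beta}+m+b$, producing the stated $(m+b)/2$ coefficient. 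After dropping the negative dissipative term, stochastic Gronwall is applied to $Z=g_1(\bbtheta)$ with exponent $\lambda$ (not $\lambda/2$), and the conclusion for $\|\bbtheta_s\|^{\lambda}$ follows from $\|\bbtheta\|\le g_1(\bbtheta)$. Your localization remarks at the end are sensible but secondary; without replacing the quadratic test function the proof does not go through.
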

\begin{proof}
{Here we derive an upper bound on $\EE \left(  \sup_{s \in  [0,t] } (\|\bbtheta_s\|)^{\lambda} \right) $ for the heavy-tailed policy gradient setting of \eqref{eqn:SGD_L} perturbed by a single dimensional L\'{e}vy process  in the direction of unit vector $\bbr$. We build up on the existing results from \cite{thanh2019first,xie2020ergodicity} for L\'{e}vy process and SDE. Using It\^{o} formula, the whole expression is divided into two terms.  We simplify the second term using properties of a L\'{e}vy process from \cite{thanh2019first,xie2020ergodicity}.  Further we build upon the heavy-tailed setting and its  properties such as dissipativity of score function and H\'{o}lder continuity  to simplify the first term. Further the combined expression is simplified using direct application of stochastic Gronwall's inequality.}
We start with the continuous equivalent of  \eqref{eqn:SGD_L} defined by 
$ d\bbtheta_t = b(\bbtheta_{t-}) dt  + k_1^{-1/\alpha}  dL_t^{\alpha}
$ ($t_{-}$ denote left limit of the process)  and $k_1: = 1/\eta^{\alpha-1}$ ($k_1$ is always greater than 1 as $\eta^{\alpha-1}<1$), we use $F =-J$ for simplicity,  $b(\cdot) := - \nabla F(\cdot)$ using \eqref{eqn:SGD_L1}.  Note that this representation is equivalent to \eqref{eqn:SGD_L1} and helps to invoke some of the existing results in the analysis. Here the direction of perturbation $r$ is absorbed   into $L_t^{\alpha}$  without the loss of generality (cf.  \eqref{eqn:Levygeneric}).  In order to upper bound $\EE \left(  \sup_{s \in  [0,t] } (\|\bbtheta_s\|)^{\lambda} \right)$, we  start by defining a function, $g_1(\bbtheta) \triangleq  (1+ \|\bbtheta\|^2)^{1/2}$. Using direct application of It\^{o}'s formula from Definition \ref{Def: Ito}  with jump coefficient $g$ being $k_1^{-1/\alpha}$, we can write It\^{o} formula for  \eqref{eqn:SGD_L} as follows
\begin{align} \label{eqn:ito}
dg_1(\bbtheta_t ) =  & \underbrace{\left< b(\bbtheta_t), \nabla g(\bbtheta_t )) \right>}_{T_1} dt \nonumber \\
& + \underbrace{ \int_{\mathbb{R}^d} \left( g_1(\bbtheta_t + k_1^{-1/\alpha} \bbtheta) -g_1(\bbtheta_t)-\mathbb{I}_{\|\bbtheta \| <1}\left<k_1^{-1/\alpha}, \nabla g_1(\bbtheta_t)  \right>) \nu (d\bbtheta)\right) }_{T_2} dt + dM(t),
\end{align}
here, $M(t)$ is defined as local martingale and $\nu$  be the L\'{e}vy measure of a d-dimensional L\'{e}vy process $L^{\alpha}$ .  We have $\partial_i g_1(\bbtheta) = \bbtheta_i (1+\|\bbtheta\|^2)^{-1/2}/2$. 

Next we unfold the expressions for $T_1$ and $T_2$ using the expression for the policy gradient in \eqref{eq:policy_gradient_iteration}.
\paragraph{Expression for $T_1$:}
 Expression for  $T_1$
 along with the policy gradient of \eqref{eq:policy_gradient_iteration} takes the form
\begin{align}
T_1 = \left< -\nabla F(\bbtheta), \nabla g_1(\bbtheta) \right>  = \left< - \frac{1}{1-\gamma} \EE_{(s,a) \sim \rho_{\bbtheta}}\left[ \nabla \log \pi_{\bbtheta}(s,a) Q_{\pi_{\bbtheta}}(s,a)  \right] , \bbtheta  \right> (1+ \| \bbtheta\|^2)^{-1/2}/2
\end{align} 
Now using Assumption \ref{Assum:Q0} to  upper bound    $\|\hat{Q}_{\pi_{{\bbtheta}}}(s,a)\|$, expression reduces to
\begin{align}
T_1 &  \leq \frac{U_R}{1-\gamma} \left< - \frac{1}{1-\gamma} \EE_{(s,a) \sim \rho_{\bbtheta}}\left[ \nabla \log \pi_{\bbtheta}(s,a)  \right] , \bbtheta  \right> (1+ \| \bbtheta\|^2)^{-1/2}/2 \\
& \leq \frac{U_R}{1-\gamma} \EE_{(s,a) \sim \rho_{\bbtheta}} \underbrace{\left[ \left< -\frac{1}{1-\gamma}  \nabla \log \pi_{\bbtheta}(s,a)  , \bbtheta  \right> \right] }_{T_3} (1+ \| \bbtheta\|^2)^{-1/2}/2
\end{align} 
The expression, $T_3$ is simplified using $(m,b, \beta)$-dissipative assumption of  Assumption \ref{Assump: disspitat} ($T_3 =  - \left<  \nabla \log \pi_{\bbtheta}(s,a)  , \bbtheta  \right> \leq -m \| \bbtheta\|^{1+\beta}+b $).  Unfolding the expectation operator defined with respect to the occupancy measure of the MDP under policy $\pi_\theta$ allows us to write 
\begin{align}
T_1 = \left< -\nabla F(\bbtheta), \nabla g_1(\bbtheta) \right>   \leq &  \frac{U_R}{(1-\gamma)^2} \left( \int_{ \mathcal{S} \times \mathcal{A}}   \left(  -m\left|  \bbtheta \right\|^{1+\beta} +b  \right) (1-\gamma)  {\rho_{\pi_{{\bbtheta}}}(s) } \cdot  \pi_{{\bbtheta}}(a|s)     \,  ds  \, da )\right) \nonumber \\
& \times (1+ \| \bbtheta\|^2)^{-1/2}/2
\end{align} 
The integral $\left(   \int_{ \mathcal{S} \times \mathcal{A}}  (1-\gamma)  {\rho_{\pi_{{\bbtheta}}}(s) } \cdot  \pi_{{\bbtheta}}(a|s)     \,  ds  \, da \right)   $ from the preceding expression  is a valid probability measure and hence integrates to unit. Therefore, we can simplify the right-hand side as
\begin{align}\label{eqn:intrim}
T_1& \leq  \frac{U_R}{(1-\gamma)^2}  \underbrace{\left(  -m\left|  \bbtheta \right\|^{1+\beta} +b  \right)}_{T_4} (1+ \| \bbtheta\|^2)^{-1/2}/2
\end{align} 
Adding and subtracting $m$ inside $T_4$ gives 
\begin{align}\label{eqn:intrim}
T_1& \leq  \frac{U_R}{(1-\gamma)^2}  {\left(  -m \underbrace{\left(\left\|  \bbtheta \right\|^{1+\beta}+1\right)}_{T_5}+m +b  \right)}(1+ \| \bbtheta\|^2)^{-1/2}/2
\end{align} 
In order to simplify the term $T_5$, we evaluate $(1+ \|\bbtheta\|^2)^{\beta_1} $.

 It turns out that the application of Bernoulli's inequality is advantageous when we split the evaluation of $T_5$ into  two cases, namely, $\|\bbtheta\|^2 < 1$ and $\|\bbtheta\|^2 >  1$ and relate the resultant inequality to $g_1 (\bbtheta)$. Consider the case  where $\|\bbtheta\|^2 <1$,  using Bernouli's inequality for  $0< \beta_1= (1+\beta)/2<1 $, 
\begin{align}
(1+ \|\bbtheta\|^2)^{\beta_1} \leq 1+ \beta_1 \|\bbtheta\|^2
\end{align}
As $0< \beta_1<1$, we get
\begin{align*}
(1+ \|\bbtheta\|^2)^{\beta_1} \leq 1+ \beta_1 \|\bbtheta\|^2 \leq 1+ \|\bbtheta\|^2
\end{align*}
Similarly, as   $0< \|\bbtheta\|^2 <1$, $ \|\bbtheta\|^2 <\|\bbtheta\|^{2\beta_1}$ and we have
\begin{align*}
(1+ \|\bbtheta\|^2)^{\beta_1} \leq 1+\|\bbtheta\|^2 \leq 1+ \|\bbtheta\|^{2\beta_1}
\end{align*}
Next, consider the case, $\|\bbtheta\|^2>1$, therefore $1/\|\bbtheta\|^2 <1$. 
Further following the same argument from previous case with Bernouli's inequality
\begin{align}
\left(1+ \frac{1}{\|\bbtheta\|^2}\right)^{\beta_1} \leq  1+\frac{\beta_1}{\|\bbtheta\|^2} \leq 1+\frac{1}{\|\bbtheta\|^2} \leq 1+\frac{1}{\|\bbtheta\|^{2\beta_1}}
\end{align}
Multiplying both side of the above inequality by $\|\bbtheta\|^{2{\beta_1}}$
\begin{align}
\left(\|\bbtheta\|^2+ 1\right)^{\beta_1}  \leq \|\bbtheta\|^{2\beta_1}+ 1
\end{align}
Therefore, using $\beta_1 = (1+\beta)/2$, we have 
\begin{align}
\left(\|\bbtheta\|^2+ 1\right)^{(1+ \beta)/2}  \leq \|\bbtheta\|^{1+ \beta}+ 1
\end{align}
Now we can write the above inequality for all values of $\|\bbtheta\|$ as 
\begin{align}\label{eqn:intrim1}
-m (\|\bbtheta\|^2+1)^{(1+\beta)/2} \geq -m( \|\bbtheta\|^{1+\beta} +1 )
\end{align}
Now we substitute this inequality \eqref{eqn:intrim1} back into the expression for the dissipativity-based upper-bound on the policy gradient in \eqref{eqn:intrim} to write
%
\begin{align}
\left< -\nabla F(\bbtheta), \nabla g_1(\bbtheta) \right> & \leq \frac{U_R}{(1-\gamma)^2}  \left(  -m(\left\|  \bbtheta \right\|^{2}+1)^{(1+\beta)/2} +m + b  \right) (1+ \| \bbtheta\|^2)^{-1/2}/2 \\
& = \frac{U_R}{2(1-\gamma)^2} \left(-m( \|\bbtheta\|^2+1)^{\beta/2} + (m+b)\underbrace{(1+\|\bbtheta\|^2)^{-1/2}}_{\in (0,1]}\right) \\
& \leq  \frac{U_R}{2(1-\gamma)^2}  \left (-m( \|\bbtheta\|^2+1)^{\beta/2} + m+b\right) \\
& \leq  \frac{U_R}{2(1-\gamma)^2}   \left(-m g_1( \bbtheta)^{\beta}+m +b\right). \label{eqn:itofrstterm}
\end{align} 
The last inequality is obtained by plugging in the expression for $g_1(\bbtheta) = (1+\|\bbtheta\|^2)^{1/2}$.
\paragraph{Expression for $T_2$:}
 Similarly, we analyze the second term, $T_2$. Note that $T_2$ does not depend on the heavy-tailed gradients and  simplification is based on the standard results  on properties of L\'{e}vy process  from \cite{xie2020ergodicity}.
We get
\begin{align}
T_2 = &\int_{\mathbb{R}^d} \left( g_1(\bbtheta_t + k^{-1/\alpha} \bbtheta) -g_1(\bbtheta_t)-\mathbb{I}_{\|\bbtheta \| <1}\left<k^{-1/\alpha}, \nabla g_1(\bbtheta_t)  \right>) \nu (d\bbtheta)\right)  dt\label{eqn:itso} \\
& \leq
 \frac{1}{2k_1^{2/\alpha}} \int_{\|\bbtheta\| <1} \| \bbtheta\|^2 \nu(d \bbtheta) + \frac{1}{2k_1^{1/\alpha}} \int_{\|\bbtheta\| \geq 1} \| \bbtheta \| \nu(d \bbtheta) \leq C \frac{d}{k_1^{1/\alpha}}  
\end{align}
Note that the above expression is a standard result for L\'{e}vy process  (\cite{xie2020ergodicity}, Expression 7.6). Using Lemma \ref{Lemma:Levy1} on the right-hand-side inequality, we get 
\begin{align}
T_2  
& \leq
 \frac{1}{2k_1^{2/\alpha}} \int_{\|\bbtheta\| <1} \| \bbtheta\|^2 \nu(d \bbtheta) + \frac{1}{2k_1^{1/\alpha}} \int_{\|\bbtheta\| \geq 1} \| \bbtheta \| \nu(d \bbtheta) \leq C \frac{d}{k_1^{1/\alpha}}  \label{eqn:itosecndterm}
\end{align}
 Using \eqref{eqn:itofrstterm}  and \eqref{eqn:itosecndterm} in \eqref{eqn:ito} and integrating the expression  from 0 to $t$ gives
\begin{align} \label{eqn:ito1}
g_1(\bbtheta_t )- g_1(\bbtheta_0) \leq & \int_{0}^{t}  \left(  \frac{U_R}{(1-\gamma)^2}  (-m g_1( \bbtheta_t)^{\beta}+m +b)/2   + C \frac{d}{k_1^{1/\alpha}}   \right) ds +  M(t) \\
& \leq \int_{0}^{t}  \left( \frac{U_R}{(1-\gamma)^2}  (m +b)/2   + C \frac{d}{k_1^{1/\alpha}}   \right) ds +  M(t)
\end{align}
%
The above expression can be simplified using  Gronwall's inequality for stochastic equations (Lemma 3.8 of \cite{scheutzow2013stochastic}, Lemma \ref{Lemma:Stochastic GI}  in  the Appendix) {to  upper bound  $\EE \left(  \sup_{s \in  [0,t] }  g_1(\bbtheta_s) \right)$ in the given interval. Now following the similar argument of \cite{thanh2019first}, an upper bound on  expression on $\EE \left(  \sup_{s \in  [0,t] }  g_1(\bbtheta_s) \right)$ also upper bounds $\EE \left(  \sup_{s \in  [0,t] } \|\bbtheta_s\|\right)$  as $\|\bbtheta\| $ is always less than $\|1+\bbtheta\|$. Therefore, let us first proceed with Gronwall's inequality and obtain an upper bound on    $\EE \left(  \sup_{s \in  [0,t] }  g_1(\bbtheta_s) \right)$ and further relate it to $\EE \left(  \sup_{s \in  [0,t] } \|\bbtheta_s\|\right)$. Upon comparing the above expression with  Stochastic Gronwall's inequality  of Lemma \ref{Lemma:Stochastic GI} (Theorem 4, \cite{scheutzow2013stochastic} ) for nonnegative adapted processes $Z$ and $H$ such that 
\begin{align*}
Z(t) \leq \int_{0}^{t} \psi(s)Z(s) + M(t) + H(t) 
\end{align*}
we have $g_1(\cdot)$ equivalent to $Z(\cdot)$,   $H(\cdot):=  \int_{0}^{t}  \left(  \frac{U_R}{(1-\gamma)^2}  (m +b)/2   + C \frac{d}{k_1^{1/\alpha}}   \right) ds$, and $\sup_{s \in [0, \, t]} H^*(s) = \left(  \frac{U_R}{(1-\gamma)^2}  (m +b)/2   + C \frac{d}{k_1^{1/\alpha}}   \right)t $, $p=\lambda$.  As  \eqref{eqn:ito1} holds for all $t \geq 0$, using Lemma \ref{Lemma:Stochastic GI}  we have  
\begin{align}
\EE \left(  \sup_{s \in  [0,t] } g_1(\bbtheta_s)^{\lambda} \right) \leq  (c_{p\nu} +1)^{1/\nu}  \left(  \EE g_1(\bbtheta_0) + \left(  \frac{U_R}{(1-\gamma)^2}  (m +b)/2   + C \frac{d}{k_1^{1/\alpha}}   \right)t \right)^{\nu}
\end{align}
where,  $ \nu>0, \, c_{p\nu}:= (4 \wedge  \frac{1}{p\nu}) \frac{\sin \pi\nu}{\pi p\nu}$ (Proposition 1, \cite{scheutzow2013stochastic}).
Now for  $\lambda \nu <1, \, p \in (0,1)$ from Lemma \ref{Lemma:Stochastic GI}, we get
\begin{align}
\EE \left(  \sup_{s \in  [0,t] } g_1(\bbtheta_s)^{\lambda} \right) \leq c_{\lambda}  \left(  \EE g_1(\bbtheta_0) + \left(  \frac{U_R}{(1-\gamma)^2}  (m +b)/2   + C \frac{d}{k_1^{1/\alpha}}   \right)t \right)^{\lambda}
\end{align}
where,  $c_{\lambda}:= (c_{p\nu} +1)^{\lambda}$.} 
 As we have $g_1(\bbtheta) \geq  \|\bbtheta\|$, the above inequality is lower bounded by $\EE \left(  \sup_{s \in  [0,t] } \|\bbtheta\|)^{\lambda} \right)$ and  we get
\begin{align}
\EE \left(  \sup_{s \in  [0,t] } \|\bbtheta\|)^{\lambda} \right) \leq  C_1\left( 1+ C_2  \left(  \frac{U_R}{(1-\gamma)^2}  (m +b)/2   + C \frac{d}{k_1^{1/\alpha}}   \right)t \right)^{\lambda},
\end{align}
where $C_1$ and $C_2$ are  positive constants depending on $c_{\lambda}$ and $ \EE g_1(\bbtheta_0)$.
\end{proof}
The following lemma upper bounds the error between \eqref{eqn:SGD_L} and its continuous time equivalent SDE \eqref{eqn:SGD_L1} for  $t \in [k\eta, (k+1) \eta]$ and derives its probabilistic interpretation. Note that this result is useful in translating exit time  results for the proposed setting as defined  in  Theorem \ref{Theorem:Exit_time}.  
\begin{lemma}\label{lemma:bound}
Given the proposed heavy-tailed setting of \eqref{eq:policy_gradient_iteration}  is initialized at $\bbtheta_0$ with step-size $\eta: \left(  \exp{(M_J\eta)} \eta (B + M_J) \leq \xi/3   \right), \, \xi>0$,  there exist a set of  positive constants $C_{\alpha}$, $C_1$, and $C_2$  such that the following holds
\begin{align}\label{eqn:bound}
&  \mathcal{P}^{\bbtheta_0}  \left(  \max_{0 \leq k \leq K-1}   \sup_{t \in [k\eta, (k+1) \eta]} \| \theta_t-\bbtheta_{k\eta} \| \geq \xi \right) \nonumber \\
& \leq \exp{(M_J \eta)}  M_J \eta  \frac{   C_1 \left(1 +   C_2 \left(\frac{U_R}{(1-\gamma)^2}  (m +b)/2   + C \frac{d}{k_1^{1/\alpha}}   \right)K \eta \right)^{\beta}}{\xi/3}    \nonumber \\
&\quad+  1- \left(1-C_{\alpha}d^{1+\frac{\alpha}{2}}\eta \exp{(\alpha M_J\eta)}\epsilon^{\alpha}\left(\frac{\xi}{3}\right)^{-\alpha}\right)^{K}.
\end{align}
\end{lemma}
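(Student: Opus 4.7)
The plan is to compare the continuous-time SDE solution $\bbtheta_t^{\epsilon}$ on each sub-interval $[k\eta,(k+1)\eta]$ against its left endpoint $\bbtheta_{k\eta}^{\epsilon}$ by isolating the drift contribution from the L\'{e}vy jump contribution, then aggregating with a union bound over the $K$ sub-intervals. Set $Z_t^{(k)} := \bbtheta_t^{\epsilon} - \bbtheta_{k\eta}^{\epsilon}$; from \eqref{eqn:SGD_L1} we have the integral representation
\begin{equation*}
Z_t^{(k)} \;=\; \int_{k\eta}^{t}\nabla J(\bbtheta_s^{\epsilon})\,ds \;+\; \epsilon\bigl(\bbL_t^{\alpha}-\bbL_{k\eta}^{\alpha}\bigr).
\end{equation*}
I would add and subtract $\nabla J(\bbtheta_{k\eta}^{\epsilon})$ inside the integrand, invoke the approximate-Lipschitz content of Lemma \ref{Lemma:PGL} (i.e. the $M_J\|\theta_1-\theta_2\|$ component of that bound), and apply the Bellman--Gr\"{o}nwall inequality (Definition \ref{Def: Gronwall}) to obtain
\begin{equation*}
\sup_{t\in[k\eta,(k+1)\eta]}\|Z_t^{(k)}\| \;\leq\; \exp(M_J\eta)\Bigl(\eta\,\|\nabla J(\bbtheta_{k\eta}^{\epsilon})\| \;+\; \epsilon\sup_{t\in[k\eta,(k+1)\eta]}\|\bbL_t^{\alpha}-\bbL_{k\eta}^{\alpha}\|\Bigr).
\end{equation*}

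Next I would use the H\"{o}lder bound $\|\nabla J(\bbtheta_{k\eta}^{\epsilon})\|\leq B+M_J\|\bbtheta_{k\eta}^{\epsilon}\|^{\beta}$ together with the step-size restriction $\exp(M_J\eta)\eta(B+M_J)\leq \xi/3$ to peel off a deterministic piece absorbed by the assumption, leaving a \emph{stochastic drift} piece involving $\|\bbtheta_{k\eta}^{\epsilon}\|^{\beta}$ and a \emph{pure noise} piece involving $\|\bbL_t^{\alpha}-\bbL_{k\eta}^{\alpha}\|$. A three-way inclusion of events then splits the target probability so that each remaining term is bounded separately. For the stochastic drift, Markov's inequality applied to $\exp(M_J\eta)M_J\eta\|\bbtheta_{k\eta}^{\epsilon}\|^{\beta}\geq \xi/3$, together with a maximum over $k\in\{0,\ldots,K-1\}$, reduces the problem to controlling
\begin{equation*}
\frac{\exp(M_J\eta)\,M_J\,\eta}{\xi/3}\,\EE\Bigl[\sup_{s\in[0,K\eta]}\|\bbtheta_s^{\epsilon}\|^{\beta}\Bigr],
\end{equation*}
and the expectation is exactly what Lemma \ref{Lemma:Gurbuzb} delivers with $\lambda=\beta\in(0,1)$ and $t=K\eta$; this produces the factor $C_1\bigl(1+C_2\bigl(\tfrac{U_R}{(1-\gamma)^2}(m+b)/2+Cd/k_1^{1/\alpha}\bigr)K\eta\bigr)^{\beta}$ appearing as the first summand of \eqref{eqn:bound}.

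For the pure-noise piece, the event $\{\max_{0\le k\le K-1}\sup_t \exp(M_J\eta)\epsilon\|\bbL_t^{\alpha}-\bbL_{k\eta}^{\alpha}\|\geq \xi/3\}$ is rewritten as $\{\max_k\sup_t\|\bbL_t^{\alpha}-\bbL_{k\eta}^{\alpha}\|\geq \xi/(3\epsilon\exp(M_J\eta))\}$, and applying the second (union-of-intervals) statement of Lemma \ref{Lemma:Levynoise} with $u=\xi/(3\epsilon\exp(M_J\eta))$ gives precisely $1-\bigl(1-C_{\alpha}d^{1+\alpha/2}\eta\exp(\alpha M_J\eta)\epsilon^{\alpha}(\xi/3)^{-\alpha}\bigr)^K$, matching the second summand. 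The main obstacle I anticipate is justifying the effectively Lipschitz Gr\"{o}nwall step under the H\"{o}lder-only smoothness of Assumption \ref{Assum:nablaF}: Lemma \ref{Lemma:PGL} contributes a $\|\cdot\|^{\beta}$ term and a $\lambda$-error alongside the $M_J\|\cdot\|$ term, so one must either localize to the regime $\|Z_s\|\leq 1$ where the Lipschitz component dominates (exploiting the small sub-interval length $\eta$), or invoke a Bihari-type generalization of Gr\"{o}nwall, while still producing the clean multiplicative constant $\exp(M_J\eta)$ seen in the final bound. Treating the residual $\lambda$-term carefully and ensuring the moment estimate from Lemma \ref{Lemma:Gurbuzb} is valid over the full horizon $[0,K\eta]$ (not just a single sub-interval) are the remaining technical points.
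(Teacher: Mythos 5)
Your proposal follows essentially the same route as the paper: the same interval-wise integral representation, add-and-subtract of $\nabla J(\bbtheta_{k\eta})$, Bellman--Gr\"{o}nwall to extract the $\exp(M_J\eta)$ factor, a three-way event split at level $\xi/3$, Markov plus Lemma \ref{Lemma:Gurbuzb} for the drift moment, and Lemma \ref{Lemma:Levynoise} for the L\'{e}vy increment. The one step you left open --- running Gr\"{o}nwall under H\"{o}lder-only smoothness --- is closed in the paper by the elementary inequality $\|x\|^{\beta}\leq \|x\|+1$ for $\beta\leq 1$, which converts the $M_J\|\bbtheta_u-\bbtheta_{k\eta}\|^{\beta}$ integrand into $M_J\|\bbtheta_u-\bbtheta_{k\eta}\|+M_J$; the extra additive $M_J$ is exactly where the $(B+M_J)$ in the step-size condition comes from, so neither localization nor a Bihari-type inequality is needed.
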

\begin{proof}
In order to upper bound the error between the discrete and continuous equivalents, we start by analyzing the difference in dynamics for   $t \in [k\eta, (k+1)\eta]$ and $t=k\eta$. Using the continuous equivalent  \eqref{eqn:Levygeneric} and integrating the expression over $0$ to $t$ gives  
\begin{align}
& \bbtheta_t = \bbtheta_0 + \int_{0}^t \nabla J (\bbtheta_t^{\epsilon} (u)) du + \epsilon r  L^{\alpha}(t) 
\end{align}
For $t=k\eta$, ie., the beginning of the interval considered, we have 
\begin{align}
& \bbtheta_{k\eta} = \bbtheta_0 + \int_{0}^{k\eta} \nabla J (\bbtheta_t^{\epsilon} (u)) du + \epsilon r  L^{\alpha}(k\eta) 
\end{align}
Using the above two expressions, $\|\bbtheta_t-\bbtheta_{k\eta}\|$ takes the form
%
\begin{align}
& \|\bbtheta_t-\bbtheta_{k\eta} \| \leq \int_{k\eta}^{t}\|  \nabla J (\bbtheta_u) \| du + \epsilon \|r\| \| L^{\alpha}(t) - L^{\alpha}(k\eta)\| 
\end{align} 
Here $\bbtheta_u$ denotes $\bbtheta_t^{\epsilon}(u)$, $r$ is the unit vector defining the direction of perturbation.  We are interested in the dynamics when $t \in [k\eta, (k+1)\eta)$. 
Adding and subtracting $\nabla J (\bbtheta_{k\eta})$ inside the integral of the above expression yields
\begin{align}
& \|\bbtheta_t-\bbtheta_{k\eta} \| \leq \int_{k\eta}^{t} \| (\nabla J (\bbtheta_u)  -\nabla J(\bbtheta_{k\eta}) \| du + \eta \| \nabla J(\bbtheta_{k\eta})\| +  \epsilon \| L^{\alpha}(t) - L^{\alpha}(k\eta)\| 
\end{align} 
Next, Apply Lemma \ref{Lemma:PGL} and the  H\"{o}lder  continuity of the policy gradients inside the integral to write:
\begin{align} \label{eqn:interm5}
& \|\bbtheta_t-\bbtheta_{k\eta} \| \leq \int_{k\eta}^{t} M_J \| \bbtheta_u  -\bbtheta_{k\eta} \|^{\beta} du + \eta \| \nabla J(\bbtheta_{k\eta})\| +  \epsilon \| L^{\alpha}(t) - L^{\alpha}(k\eta)\| 
\end{align} 
By employing H\"{o}lder continuity and Cauchy Schwartz inequality for the second term, we obtain
\begin{align}
 \| \nabla J(\bbtheta_{k\eta}) \| - \|  \nabla J (\bbtheta_{0})\| \leq  \| \nabla J(\bbtheta_{k\eta}) -  \nabla J(\bbtheta_{0}) \| \leq M_J \|\bbtheta_{k\eta}\|^{\beta}
\end{align} 
Using $\|\nabla J(0)\| \leq B$ in the above expression, we get 
\begin{align}\label{eqn:bound}
 \| \nabla J(\bbtheta_{k\eta}) \|  \leq M_J \|\bbtheta_{k\eta}\|^{\beta} +B
\end{align} 
Substituting \eqref{eqn:bound} for the second term of  \eqref{eqn:interm5} results in 
  \begin{align}
& \|\bbtheta_t-\bbtheta_{k\eta} \| \leq \int_{k\eta}^{t} M_J \| \bbtheta_u  -\bbtheta_{k\eta} \|^{\beta} du + \eta\left(   M_J \| \bbtheta_{k\eta}\|^{\beta} +B  \right) +  \epsilon \| L^{\alpha}(t) - L^{\alpha}(k\eta)\| 
\end{align} 
For $\beta<1$, $\| \bbtheta_u  -\bbtheta_{k\eta} \|^{\beta}  < \| \bbtheta_u  -\bbtheta_{k\eta} \|+1$. This fact allows us to rewrite the above expression as 
\begin{align}
\| \theta_t-\bbtheta_{k\eta} \| &\leq \int_{k\eta}^{t} M_J ( \| \bbtheta_u  -\bbtheta_{k\eta} \|) du + \eta\left(   M_J \| \bbtheta_{k\eta}\|^{\beta} +B +M_J \right) +  \epsilon \| L^{\alpha}(t) - L^{\alpha}(k\eta)\| \nonumber \\
&\leq \int_{k\eta}^{t} M_J ( \| \bbtheta_u  -\bbtheta_{k\eta} \|) du + \eta\left(   M_J \| \bbtheta_{k\eta}\|^{\beta} +B +M_J \right) +  \epsilon \sup_{t \in [k\eta, (k+1) \eta]} \| L^{\alpha}(t) - L^{\alpha}(k\eta)\|.\nonumber 
\end{align} 
Now this is in the exact form of  Gronwall's inequality of Definition \ref{Def: Gronwall} with $\phi(\cdot)$ being $\| \theta_t-\bbtheta_{k\eta} \|$ and $B:= \eta\left(   M_J \| \bbtheta_{k\eta}\|^{\beta} +B +M_J \right) +  \epsilon \sup_{t \in [k\eta, (k+1) \eta]} \| L^{\alpha}(t) - L^{\alpha}(k\eta)\|$ and $C(\cdot) : = M_J$, therefore direct application of the inequality yields
\begin{align}
 \sup_{t \in [k\eta, (k+1) \eta]} \| \theta_t-\bbtheta_{k\eta} \| \leq  \exp{(M_J \eta)} \left(  \left( \eta\left(   M_J  \| \bbtheta_{k\eta}\|^{\beta} +B +M_J \right) \right) +  \epsilon \sup_{t \in [k\eta, (k+1) \eta]} \| L^{\alpha}(t) - L^{\alpha}(k\eta)\|\right).
\end{align} 
Evaluating the maximum of the above expression  in the interval  $k \in [0,\, K-1]$
\begin{align} \label{eqn:intrLemma9}
& \max_{0 \leq k \leq K-1}  \sup_{t \in [k\eta, (k+1) \eta]} \| \theta_t-\bbtheta_{k\eta} \| \leq  \exp{(M_J\eta)} \left(  \left( \eta\left(   M_J \max_{0 \leq k \leq K-1}   \| \bbtheta_{k\eta}\|^{\beta} +B +M_J \right) \right) \right. \nonumber \\
& \left.  +  \epsilon  \max_{0 \leq k \leq K-1} \sup_{t \in [k\eta, (k+1) \eta]} \| L^{\alpha}(t) - L^{\alpha}(k\eta)\|\right).
\end{align} 
%
As we are interested in evaluating the probability of $\|\bbtheta_t -\bbtheta_{k\eta} \| \in B^{C}$ (cf. \eqref{eqn:B}),  consider the cases where error between  continuous and discrete process exceeds $\|\xi\|$, ie., $ \max_{0 \leq k \leq K-1}  \sup_{t \in [k\eta, (k+1) \eta]} \| \theta_t-\bbtheta_{k\eta} \| \geq \xi$, 
\begin{align}
 \mathcal{P}^{\bbtheta_0}\left(\max_{0 \leq k \leq K-1}  \sup_{t \in [k\eta, (k+1) \eta]} \| \theta_t-\bbtheta_{k\eta} \|  \geq \xi \right)  .
\end{align}
Now with the assumption that each term on the right hand side of \eqref{eqn:intrLemma9} contributes equally, we have
\begin{align} 
\max_{0 \leq k \leq K-1}  \sup_{t \in [k\eta, (k+1) \eta]} \| \theta_t-\bbtheta_{k\eta} \|  \leq&  \left(   \exp{(M_J\eta)}  \eta  M_J \max_{0 \leq k \leq K-1}   \| \bbtheta_{k\eta}\|^{\beta} \geq \frac{\xi}{3} \right)  \\& + \left(   \exp{(M_J\eta)}  \eta (B +M_J)  \geq  \frac{\xi}{3} \right) \nonumber \\
 & \hspace{-1cm}
+  \left(\exp{(M_J\eta)}  \left(   \epsilon  \max_{0 \leq k \leq K-1} \sup_{t \in [k\eta, (k+1) \eta]} \| L^{\alpha}(t) - L^{\alpha}(k\eta)\|\right) \geq \frac{\xi}{3} \right). \nonumber
\end{align} 
Next we evaluate the probability that the above expression holds  given the process is initialized at $\bbtheta_0$ 
%
\begin{align}
& \mathcal{P}^{\bbtheta_0}\left(\max_{0 \leq k \leq K-1}  \sup_{t \in [k\eta, (k+1) \eta]} \| \theta_t-\bbtheta_{k\eta} \|  \geq \xi \right) 
  \nonumber \\
 & \leq  \mathcal{P}^{\bbtheta_0}\left( \exp{(M_J\eta)}   \eta M_J \max_{0 \leq k \leq K-1}   \| \bbtheta_{k\eta}\|^{\beta}  \geq \xi/3\right) +    \mathcal{P}^{\bbtheta_0}\left( \exp{(M_J \eta)}\eta(B +M_J ) \geq \xi/3 \right)  \nonumber \\
& \quad  +  \epsilon \exp{(M_J \eta)}  \mathcal{P}^{\bbtheta_0}\left( \max_{0 \leq k \leq K-1} \sup_{t \in [k\eta, (k+1) \eta]} \| L^{\alpha}(t) - L^{\alpha}(k\eta)\| \geq \xi/3 \right).  \label{eqn:intrmm}
\end{align} 
%
From   Markov's inequality, we can write
\begin{align}
\mathcal{P}(X  \geq u ) \leq \mathbb{E} \left[ X \right] / u .
\end{align}
%
Using  Markov's inequality for first term on the right hand side of \eqref{eqn:intrmm}  gives 
%
\begin{align} \label{eqn:t2}
& \mathcal{P}^{\bbtheta_0}  \left(  \max_{0 \leq k \leq K-1}   \sup_{t \in [k\eta, (k+1) \eta]} \| \theta_t-\bbtheta_{k\eta} \| \geq \xi \right) \leq  \exp{(M_J \eta)} M_J  \eta  \frac{  \mathbb{E} \left[  \max_{0 \leq k \leq K-1}   \| \bbtheta_{k\eta}\|^{\beta} \right]}{\xi/3} \nonumber \\
& +  \mathcal{P}^{\bbtheta_0} \left(  \exp{(M_J\eta)} \eta (B + M_J) \geq \xi/3  \right)+ \nonumber \\
&  \underbrace{ \mathcal{P}^{\bbtheta_0}  \left( \exp{(M_J\eta)}   \max_{0 \leq k \leq K-1}    \sup_{t \in [k\eta, (k+1) \eta]} \| L^{\alpha}(t) - L^{\alpha}(k\eta)\| \geq   \epsilon^{-1}   \xi/3 \right)}_{T_6}.
\end{align} 
Using the choice of $\eta$ such that
 $\left(  \exp{(M_J\eta)} \eta (B + M_J)   \right)$ is always less than $\xi/3$, second term of the right hand side inequality  equals to zero. Using properties of L\'{e}vy process from Lemma \ref{Lemma:Levynoise} (Lemma 3, \cite{thanh2019first}),   the last  term of the above inequality  simplifies to
\begin{align}
T_6  \leq 1- \left(1-C_{\alpha}d^{1+\frac{\alpha}{2}}\eta \exp{(\alpha M_J \eta)}\epsilon^{\alpha}\left(\frac{\xi}{3}\right)^{-\alpha}\right)^{K}.
\end{align}
Using the above expression in \eqref{eqn:t2}
\begin{align} 
 \mathcal{P}^{\bbtheta_0}  \left(  \max_{0 \leq k \leq K-1}   \sup_{t \in [k\eta, (k+1) \eta]} \| \theta_t-\bbtheta_{k\eta} \| \geq \xi \right) & \leq   \exp{(M_J \eta)}  M_J \eta  \frac{  \mathbb{E} \left[  \max_{0 \leq k \leq K-1}   \| \bbtheta_{k\eta}\|^{\beta} \right]}{\xi/3}   \nonumber \\
& +  1- \left(1-C_{\alpha}d^{1+\frac{\alpha}{2}}\eta \exp{(\alpha M_J \eta)}\epsilon^{\alpha}\left(\frac{\xi}{3}\right)^{-\alpha}\right)^{K}.
\end{align}
Now the expression for $\EE \left(  \max_{0 \leq k \leq K-1}  \|\bbtheta_{k\eta}\|)^{\beta} \right) $ can be obtained by the direct substitution of inequality from Lemma \ref{Lemma:Gurbuzb}.  Therefore,  we get 
\begin{align} 
& \mathcal{P}^{\bbtheta_0}  \left(  \max_{0 \leq k \leq K-1}   \sup_{t \in [k\eta, (k+1) \eta]} \| \theta_t-\bbtheta_{k\eta} \| \geq \xi \right) \nonumber \\
& \leq \exp{(M_J \eta)}  M_J \eta  \frac{   C_1 \left(1 +   C_2 \left(\frac{U_R}{(1-\gamma)^2}  (m +b)/2   + C \frac{d}{k_1^{1/\alpha}}   \right)K \eta \right)^{\beta}}{\xi/3}    \nonumber \\
&\quad+  1- \left(1-C_{\alpha}d^{1+\frac{\alpha}{2}}\eta \exp{(\alpha M_J\eta)}\epsilon^{\alpha}\left(\frac{\xi}{3}\right)^{-\alpha}\right)^{K},
\end{align}
where $ \beta \in (0, 1)$,  $U_R$ is the upper bound on the reward function  from Assumption \ref{Assum:Q0}, constants  $C_1$ and $C_2$ depends on $\beta$, dissipativity constants $(m, \, b)$, and  tail index $\alpha$,  $k_1: = 1/\eta^{\alpha-1}$.
\end{proof}




\section{Proof of Theorem \ref{Theorem:Exit_time}: Exit time Analysis for Heavy-tailed Policy Search}
\label{Proof:exittime}
Next, we present results on the first exit time for the proposed heavy-tailed policy gradient setting.
\begin{proof}

We start along the lines of \cite{simsekli2019tail, thanh2019first, tzen2018local}  and relate  the proposed setting of  \eqref{eqn:SGD_L} to the continuous SDE of \eqref{eqn:SGD_L1}.  
We define the set of $K$ points, $k=1 \ldots, K$  obtained  from  \eqref{eqn:SGD_L} which are at a maximum distance of $a$ from the local maxima  of interest, $\bar{\bbtheta}_i$ is defined as 
\begin{align}\label{eqn:aset}
A \triangleq \{  (\bbtheta^1, \ldots, \bbtheta^k): \max_{k \leq K} \| \bbtheta^k -\bar{\bbtheta}_i \| \leq a    \} 
\end{align}
Next we define a set $N_a$ as the neighborhood in Euclidean distance centered at $\bar{\bbtheta}_i$:
\begin{align}\label{eqn:Na}
N_a \triangleq \{ \bbtheta \in \mathbb{R}^d: \| \bbtheta - \bar{\bbtheta}_i \| \leq a \}.
\end{align}
As we initialized both discrete and continuous process at $\bbtheta_0$, processes defined by  \eqref{eqn:SGD_L} and \eqref{eqn:SGD_L1} are considered close enough if both of their   exit times from $N_a$ are close. 
For the moment, we assume  step-size $\eta_k =\eta$ is constant,  and consider the linearly  interpolated version of \eqref{eqn:SGD_L} given by 
\begin{align}\label{eqn:SGDint}
d \hat{\bbtheta}_t = b(\hat{\bbtheta}_t) dt  + \epsilon d \bbL_{\alpha}^t, 
\end{align}
Note that the unit vector direction of perturbation is absorbed into $d \bbL_{\alpha}^t$ (cf. \eqref{eqn:Levygeneric}).  Here, $\hat{\bbtheta} = \{\hat{\bbtheta}_t\}_{t \geq 0}$ denotes the whole process with drift term, $b$ defined as 
\begin{align}\label{eq:interpolated_gradient}
b(\hat{\bbtheta}_t) \triangleq  \sum_{k=0}^{\infty} \nabla J(\hat{\bbtheta}_{k \eta})  \mathbb{I}_{[k\eta , (k+1) \eta )} (t). 
\end{align}
{where $\mathbb{I}$ denotes the indicator function such that $\mathbb{I} = 1 $ if $t \in [k\eta, \, (k+1) \eta)$.  The above expression for the drift (deterministic) component of the continuous-time process can also be expressed as 
 $$  b(\hat{\bbtheta}_t) \triangleq  \frac{1}{1-\gamma} \sum_{k=0}^{\infty} \EE_{s, a \sim \rho_{\hat{\bbtheta}_{k\eta}} } [\nabla \log \pi_{\hat{\bbtheta}_{k\eta}} Q_{\pi_{\hat{\bbtheta}_{k\eta}}}(s,a)] \mathbb{I}_{[k\eta , (k+1) \eta )} (t).
 $$ 
}
First exit time of a continuous SDE \eqref{eqn:SGD_L1}   is summarized in Theorem \ref{th:21} \cite{imkeller2010first}.   To invoke this result, we first more rigorously establish the connection between  the exit times of the discrete [cf. \eqref{eqn:SGD_L}] and continuous-time [cf. \eqref{eqn:SGD_L1}] processes. Then, we may invoke this result to obtain the desired statement associated with the  heavy-tailed policy search scheme in discrete time given in \eqref{eqn:SGD_L}.

%
For the time being, we assume the distance (divergence) between the underlying distributions of processes  \eqref{eqn:SGD_L} and \eqref{eqn:SGD_L1}  sampled at discrete instants, $\eta,\, k\eta,\, \ldots..,\,K\eta$ is  bounded by $\delta$.  Since \eqref{eqn:SGDint} is the interpolated version of \eqref{eqn:SGD_L}, this sampling specification  also implies that  there exists an optimal transport plan or optimal coupling between $\{\bbtheta_s\}_{s \in [0,\ldots, K] }$ and $\{\hat{\bbtheta}_s\}_{s \in [0,\ldots, K]}$. Therefore, using optimal coupling argument, data processing inequality for the relative entropy,   and Pinsker's inequality, there exists a coupling between the random variables of both the processes such that coupling $M$ between  $\{\bbtheta_s\}_{s \in [0, K\eta]}$ and $\{ \hat{\bbtheta}_s \}_{s \in [0, K\eta]}$ satisfies \cite{tzen2018local}
\begin{align}
M (\{\bbtheta_s\}_{s \in [0, K\eta]} \neq \{ \hat{\bbtheta}_s \}_{s \in [0, K\eta]} )  \leq \delta. 
\end{align}
Now, probability that the interpolated process and the continuous-time process are not equivalent is upper bounded by   $\delta$, i.e.,
\begin{align} \label{eqn:thmintrm2}
\mathcal{P}\left( \{ \bbtheta_s \}_{s \in [0, K\eta] } \neq   \{ \hat{\bbtheta}_s\}_{s \in [0, K \eta]}  \right) \leq \delta 
\end{align}
Typically, this upper-bound depends on the choice of algorithm step-size, $\eta$ (See Assumption 6, \cite{simsekli2019tail}), and depends on the KL-divergence between the underlying  distributions.

Next we relate the expressions for the exit time of  continuous and discrete-time processes, \eqref{eqn:SGD_L} and \eqref{eqn:SGD_L1}, respectively.    Let $\{ \hat{\bbtheta}_s \in A, \, \textrm{for} \, k=1,\ldots,\, K\}$. 
 As $\hat{\bbtheta}_s$ is the linearly interpolated version of the discrete process defined by \eqref{eqn:SGD_L1} we get  $\bbtheta^1, \ldots, \bbtheta^K \in A$.  The statement   $\bbtheta^1, \ldots, \bbtheta^K \in A$ implies  exit time $\bar{\tau}$ of the discrete process from   a domain characterized by $a$ is greater than $K$, $ \mathcal{P}(\bar{\tau}_{0,a}(\epsilon) > K)$. Therefore, using \eqref{eqn:thmintrm2}, the following events can happen with a  probability dependent on $\delta$ if we have $ \bar{\tau}_{0,a}(\epsilon) > K$
\[
\bbtheta_{t} \begin{cases}
 \in A & \textrm{for } \, t = \eta, \ldots, K \eta \\
  \notin A:  & \textrm{with\, probability } \, <  \delta 
\end{cases}
\]
%
This fact allows us to write the probability of $\bar{\tau}_{0,a}(\epsilon) > K$ for some constant $K$ as follows: 
\begin{align}\label{thm:intrm1}
\mathcal{P}^{\bbtheta_0}\left(  \bar{\tau}_{0,a} (\epsilon) >K) \leq \mathcal{P}^{\bbtheta_0}\left( (\bbtheta_{\eta}, \ldots, \bbtheta_{K\eta} ) \in A    \right)   \right) + \delta
\end{align}
%
Note that  $\left( (\bbtheta_{\eta}, \ldots, \bbtheta_{K\eta} ) \in A    \right) $ implies maximum distance from local minimum is $a$. 

Now we have an  expression connecting the probability for continuous time processes, $\bbtheta_t$  at $t=k \eta$, $k=1,\ldots,K$ being  inside  $A$ to  the exit time of  discrete equivalents  given both of them are initialized with same value.  However,  the identity of   continuous  random variables $\bbtheta_t$ between $ [k\eta, (k+1)\eta]$ are still unknown. To understand their behavior within this range, we study their behavior to sets $N_a$ [cf. \eqref{eqn:Na}].   Once we augment that into  \eqref{thm:intrm1},  we are in the position to derive for exit time results for discrete process in terms of its continuous equivalent. These steps are formalized next.
\paragraph{Upper bound on $\mathcal{P}^{\bbtheta_0}\left( (\bbtheta_{\eta}, \ldots, \bbtheta_{K\eta} ) \in A    \right) $: }

In order   to analyze  $\bbtheta_t$ for $t \in [k\eta, (k+1)\eta]$ and to link it with $N_a$, we follow the approach used in \cite{thanh2019first} and  impose an upper bound on   $\|\bbtheta_{t}-\bbtheta_{k\eta}\|$, $\forall t \in [k\eta, (k+1)\eta]$.  Let us assume   $\|\bbtheta_t-\bbtheta_{k\eta} \| $ is bounded by $\xi$ for the time between $k\eta$ and $(k+1)\eta$. We define a set B  such that
\begin{align} \label{eqn:B}
B \triangleq \{   \max_{0 \leq k \leq K-1}  \sup_{t \in [k\eta, (k+1)\eta]} \| \bbtheta_t-\bbtheta_{k\eta} \| \leq \xi  \} 
\end{align} 
For a clear understanding of the process, we illustrate the  previously defined  sets in Fig. \ref{fig:illust}
\begin{figure}[h]
	{\includegraphics[scale=0.4]{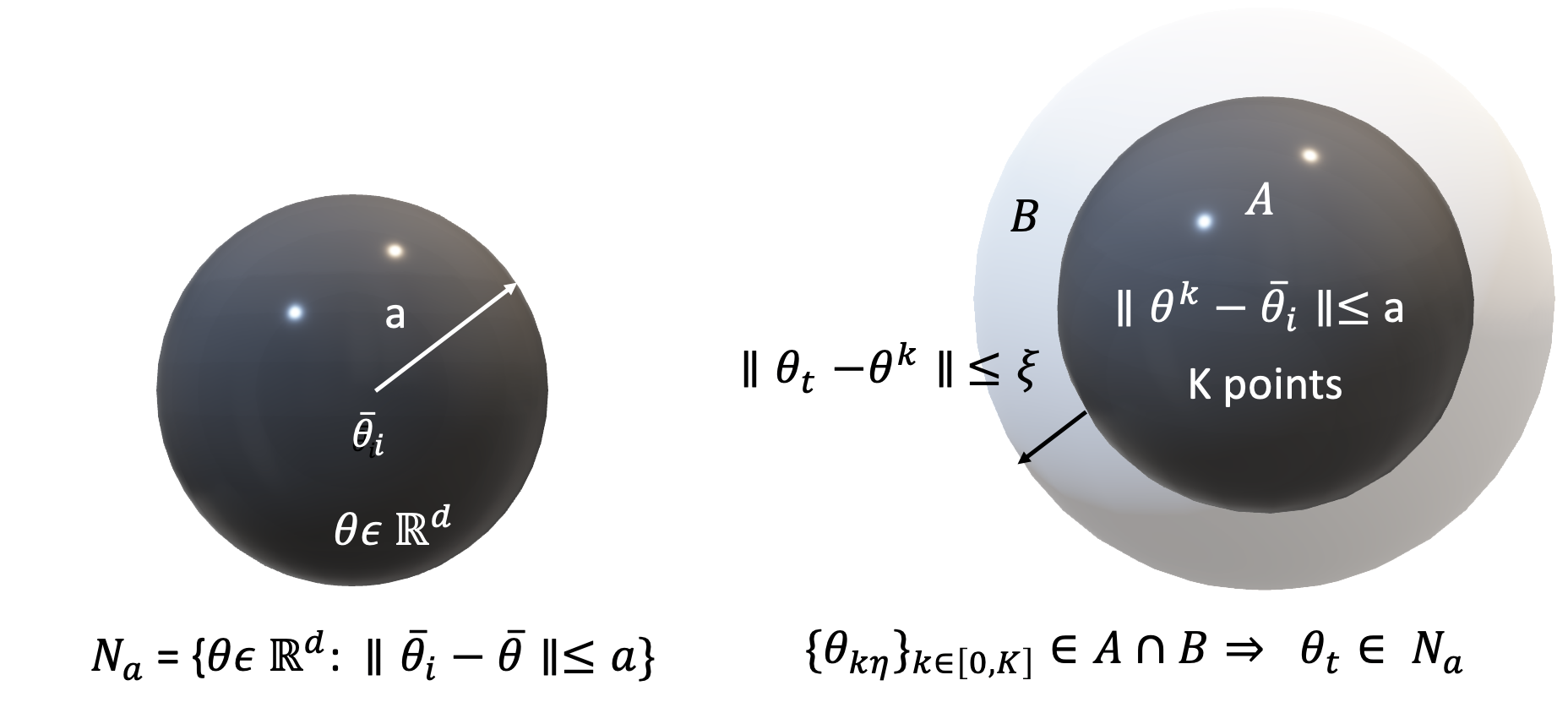}}
	\vspace{0mm}   
	{{\centering  
			\caption{Sketch of intuition for the analysis of the exit time behavior of the discrete time process. We study the behavior of discrete and continuous process initialized at same point.  On the left we depict $N_a$, the neighborhood of the local extrema for a generic element $\bbtheta \in \mathbb{R}^d$ with respect to which we study  the  exit time behavior.  In order to do that we first  analyze the behavior of the continuous-time process at discrete time indices which is piecewise constant within intervals $t=\eta,\cdots,K\eta$ within a neighborhood of local maximum $\bar{\bbtheta}_i$ defined in Assumption \ref{assmp:domaing}(3.). This interpolation process bridges the gap between the continuous-time process and the discrete time process. On the right we depict an equivalent neighborhood for the discrete-time process. We show that the probabilistic behavior of the continuous and discrete neighborhoods are comparable up to constant factors that depend on algorithm step-size and interpolation parameters  (cf. \eqref{thm:intrm1}). Characterizing this probabilistic behavior does not bound the behavior of the continuous process between the  instants $t \in [k\eta, (k+1)\eta]$. To do so, we introduce event  $B$ and study the behavior of continuous process between  $ [k\eta, (k+1)\eta]$  (Lemma \ref{lemma:bound}).  Now relating events $A \cap B$ and $B^c$, we obtain an upper bound on the exit time of discrete process. } \label{fig:illust}}}\vspace{-2mm} 
\end{figure} 

 Let $A$ defines  a  hyper sphere with    $K$ points $\bbtheta^k \in \mathbb{R}^d$  from discrete process \eqref{eqn:SGD_L}. The radius of $A$ is defined by $a$ and $\bar{\bbtheta}_i$ be its center. Now  we have another hyper sphere (shown in gray), $B$ of radius $a+\xi$ such that $B$ defines  the event $\bbtheta_t, \, t \in [k\eta, (k+1)\eta]$ such that  the  maximum error between $\|\bbtheta_t -\bbtheta^{k}\| \leq \xi$.    Now, the event $[(\bbtheta_{\eta}, \ldots, \bbtheta_{K\eta} ) \in A ] \cap B    )$ ensures that $\bbtheta_t$ is close to $N_a$ for $t= \eta,\ldots, K\eta$.  Now let us relate  $\mathcal{P}^{\bbtheta_0}\left( (\bbtheta_{\eta}, \ldots, \bbtheta_{K\eta} ) \in A    \right) $ from \eqref{eqn:thmintrm2} to  exit time of the continuous process.   
 
 For $ ( \bbtheta_{\eta}, \ldots, \bbtheta_{K\eta}) \in A$,  we can have two possibilities: $\|\bbtheta_t-\bbtheta_{k\eta}\|, \, t \in [k\eta, (k+1) \eta] $ can be either inside or outside $B$. Therefore, 
\begin{align}
\mathcal{P}^{\bbtheta_0} ( ( \bbtheta_{\eta}, \ldots, \bbtheta_{K\eta}) \in A) \leq \mathcal{P}^{\bbtheta_0} ( ( \bbtheta_{\eta}, \ldots, \bbtheta_{K\eta}) \in A \cap B ) + \mathcal{P}^{\bbtheta_0}  ( ( \bbtheta_{\eta}, \ldots, \bbtheta_{K\eta}) \in B^c )
\end{align}
Note that  $ \mathcal{P}^{\bbtheta_0} ( ( \bbtheta_{\eta}, \ldots, \bbtheta_{K\eta}) \in A \cap B ) 
\implies \bbtheta_t \in N_a$, implies exit time of the corresponding continuous process is greater than $K\eta$.  Therefore, first term on the right hand side of the above expression is equivalent to  probability that the exit time of the continuous process  from a domain characterized by $a+\xi$ is greater than $K\eta$. And we get, 
\begin{align}
\mathcal{P}^{\bbtheta_0} ( ( \bbtheta_{\eta}, \ldots, \bbtheta_{K\eta}) \in A) \leq  \mathcal{P}^{\bbtheta_0}(\tau_{\xi,a}(\epsilon) \geq K \eta)  + \mathcal{P}^{\bbtheta_0}  ( ( \bbtheta_{\eta}, \ldots, \bbtheta_{K\eta}) \in B^c )
\end{align}
Using the above expression in \eqref{thm:intrm1}
\begin{align}
& \mathcal{P}^{\bbtheta_0} \left(  \bar{\tau}_{0,a} (\epsilon) >K\right) 
\leq \mathcal{P}^{\bbtheta_0}(\tau_{\xi,a}(\epsilon) \geq K \eta) + \mathcal{P}^{\bbtheta_0} ( ( \bbtheta_{\eta}, \ldots, \bbtheta_{K\eta}) \in B^c ) + \delta 
\end{align}
 Now the second term on the right hand side of the expression  defines the probability that $\{ \bbtheta_{k\eta}\}_{k=1,\ldots,K}$ are not confined in the set $B$, $\mathcal{P}^{\bbtheta_0} ( ( \bbtheta_{\eta}, \ldots, \bbtheta_{K\eta}) \in B^c ) $.  Using upper bound on $\mathcal{P}^{\bbtheta_0} ( ( \bbtheta_{\eta}, \ldots, \bbtheta_{K\eta}) \in B^c ) $  from  Lemma \ref{lemma:bound}  in the above inequality,  we get
\begin{align}\label{eq:proof_key_step}
 \mathcal{P}^{\bbtheta_0} \left(  \bar{\tau}_{0,a} (\epsilon) >K\right) 
 \leq &\mathcal{P}(\hat{\tau}_{\xi,a}(\epsilon) \geq K \eta) \nonumber \\
 & + \exp{(M_J \eta)}  M_J \eta  \frac{   C_1  \left( 1 + C_2 \left(\frac{U_R}{(1-\gamma)^2}  (m +b)/2   + C \frac{d}{k_1^{1/\alpha}}   \right)K\eta \right)^{\beta}}{\xi/3}     \nonumber \\
 & +  1- \left(1-C_{\alpha}d^{1+\frac{\alpha}{2}}\eta \exp{(\alpha M\eta)}\epsilon^{\alpha}\left(\frac{\xi}{3}\right)^{-\alpha}\right)^{K} +\delta  .
\end{align}
%
%
%
From here, we invoke the results for exit time in multi-dimensional space \cite{imkeller2010first} formalized in Theorem \ref{th:21}. To do so, let $\bbtheta_0 \in \mathcal{G}_i$ ($\mathcal{G}_i$ be a domain containing the $i$-th  local minima $\bar{\bbtheta}_i$ such that $\mathcal{G}_i \subset \mathbb{R}^d$) and  the process escapes to the delta-tube, $\Omega^+_i(\bar{\delta})$  (cf. \eqref{eqn:omegatube})  from  $\mathcal{G}_i$ using jumps  initiated by  L\'{e}vy process of tail index, $\alpha$ (cf. \eqref{eqn:omegatube}).  

Therefore, the  first term of the inequality \eqref{eq:proof_key_step} corresponds to the time at which $\bbtheta$ exits $\mathcal{G}_i$, i.e., the time at which  $\bbtheta \in \Omega^{+}_i(\bar{\delta})$. Note that the probability defined in the  above expression is in terms of the time at which the process exits a given domain, i.e. the probability that the exit time is greater than $K\eta$. Using Assumption \ref{assmp:domaing},   there exists a $K$ such that $\hat{\tau}_{\xi,a}$ greater than $K \eta,\, K>0$ and $\bbtheta_{\hat{\tau}}^{\epsilon}  \in \Omega^+_i$ for $\hat{\tau}_{\xi,a}(\epsilon) \geq K \eta$ (meaning, probability that the exit time is greater than $K\eta$  is proportional to the probability of  the process entering the desired delta tube), specifically, that $\mathcal{P}(\hat{\tau}_{\xi,a}(\epsilon) \geq K \eta) \propto \mathcal{P}^{\bbtheta_0} \left( \bbtheta_{\hat{\tau}}^{\epsilon}  \in \Omega^+_i (\bar{\delta})  \right)$.

 Therefore, substituting $\mathcal{P}(\hat{\tau}_{\xi,a}(\epsilon) \geq K \eta) $ with $  \mathcal{P}^{\bbtheta_0} \left( \bbtheta_{\hat{\tau}}^{\epsilon}  \in \Omega^{+}_i (\bar{\delta})  \right)$ in \eqref{eq:proof_key_step} allows us to write
 \begin{align}
 \mathcal{P}^{\bbtheta_0} \left(  \bar{\tau}_{0,a} (\epsilon) >K\right) 
 \leq &  \mathcal{P}^{\bbtheta_0} \left( \bbtheta_{\hat{\tau}}^{\epsilon}  \in \Omega^{+}_i (\bar{\delta})  \right)  \nonumber \\
 & + \exp{(M_J \eta)}  M_J \eta  \frac{   C_1  \left( 1 + C_2 \left(\frac{U_R}{(1-\gamma)^2}  (m +b)/2   + C \frac{d}{k_1^{1/\alpha}}   \right)K\eta \right)^{\beta}}{\xi/3}     \nonumber \\
 &  +  1- \left(1-C_{\alpha}d^{1+\frac{\alpha}{2}}\eta \exp{(\alpha M\eta)}\epsilon^{\alpha}\left(\frac{\xi}{3}\right)^{-\alpha}\right)^{K} +\delta 
 \end{align}
 where, $\mathcal{P}^{\bbtheta_0} \left( \bbtheta_{\hat{\tau}}^{\epsilon}  \in \Omega^{+}_i \right) $ indicates the first exit time for a continuous SDE from $\mathcal{G}_i$.  Note that $ \Omega^+_i(\cdot)$ indicates delta-tube outside the boundary of $\mathcal{G}_i$  with perturbations along the positive basis vector. Now direct application of  Theorem \ref{th:21} for the first term on the right-hand-side of the inequality, we get
 \begin{align}
 \mathcal{P}^{\bbtheta_0} \left(  \bar{\tau}_{0,a} (\epsilon) >K\right) 
  &\leq  \frac{2}{\epsilon^{\rho \alpha}} \epsilon^{\alpha} (d^+)^{-\alpha}   \nonumber \\
  &\quad +  \exp{(M_J \eta)}  M_J \eta  \frac{   C_1  \left( 1 + C_2 \left(\frac{U_R}{(1-\gamma)^2}  (m +b)/2   + C \frac{d}{k_1^{1/\alpha}}   \right)K\eta \right)^{\beta}}{\xi/3}    \nonumber \\
  & \quad+  1- \left(1-C_{\alpha}d^{1+\frac{\alpha}{2}}\eta \exp{(\alpha M_J\eta)}\epsilon^{\alpha}\left(\frac{\xi}{3}\right)^{-\alpha}\right)^{K} +\delta   \\
 &\leq  \frac{2}{\epsilon^{\rho \alpha}} \epsilon^{\alpha} (d^+)^{-\alpha}     + \mathcal{O} \left( \frac{d}{k_1^{1/\alpha}} K \eta\right)^{\beta}  \nonumber \\
 & \quad + \mathcal{O} \left( 1- \left(1-C_{\alpha}d^{1+\frac{\alpha}{2}}\eta \exp{(\alpha M_J\eta)}\epsilon^{\alpha}\left(\frac{\xi}{3}\right)^{-\alpha}\right)^{K} +\delta  \right)
\end{align}
%
%
where, $d^+$  denotes distance function to the boundary along  the positive $r$ unit vector (cf.  \eqref{eqn:distm}),  $\rho$ is a positive constant such that $\rho \in (0,1)$,   $a+\xi  > \epsilon^{1-\rho}$,  $d$ is the dimension of $\bbtheta$, H\"{o}lder continuity constant, $\beta \in (0,1)$, $\delta >0$, $C_{\alpha}>0$, $\xi>0$, $\eta$ is the step-size, $U_R$, $\gamma$ are the parameters of proposed RL setting, positive constants, $C, \, C_1\, C_2$,   are functions of dissipativity constants of score function.  Note that first term of the above inequality denotes the probability that the  stochastic process exits from a given domain $\mathcal{G}_i$ when perturbed by a jump process along unit vector $\bbr$ with tail index $\alpha$. Observe that the exit time is only a function of distance between the point at which the process is initialized and the boundary of the domain around local minima (extrema),  but does not depend on  the function values (height of the local minima (extrema)). 
\end{proof}



\section{Proof for Theorem \ref{theorem: transitiontym}: Transition time for the Proposed Heavy-tailed setting} \label{sec:proofthm4}

In this section, we derive transition time results for the proposed heavy-tailed setting from one local maxima to another as defined in Theorem  \ref{Theorem:Exit_time}.   Suppose  the domain $\mathcal{G}_i$ satisfies the Assumption \ref{assmp:domaing}  defined in Section \ref{subsec:exit_time}.  Further there exists a unit vector, $+ \bbr$ in the direction connecting the domains $\mathcal{G}_i$ and $\mathcal{G}_{i+1}$ and we define  the distance function to the the intersection of boundaries $\mathcal{G}_{i}$ and $\mathcal{G}_{i+1} $ by:
\begin{align}\label{eqn:dista}
& d_{ij}^+(\bbtheta):= \inf \{ t >0: g_{{i}_{\bbtheta}}(t) \in \partial \mathcal{G}_i  \cap \partial \mathcal{G}_{i+1}\},  \\
&  d_{ij}^-(\bbtheta):= \sup \{ t <0: g_{{i}_{\bbtheta}}(t) \in \partial \mathcal{G}_i  \cap  \partial \mathcal{G}_{i-1}\}.
\end{align}

Let $\bbtheta_0 \in \mathcal{G}_{i}$.  As $\epsilon$ is a coefficient that multiplies the noise process in  \eqref{eqn:SGD_L1}, it is clear that the error between both the processes are lower bounded by $\epsilon$, ie. $  \left( \| \bbtheta^{k} - \bbtheta_{k\eta} \| > \epsilon    \right)$.  Using Markov's inequality, the above statement can be expressed in the language of probability as  
\begin{align} 
\mathcal{P}^{\bbtheta_0} \left( \| \bbtheta^{k} - \bbtheta_{k\eta} \| > \epsilon    \right) \leq \frac{\EE_{\bbtheta_0}\left[  \| \bbtheta^{k} - \bbtheta_{k\eta} \| \right] }{\epsilon} \frac{}{}   \leq \frac{\delta}{2}. 
\end{align} 
Let us say the probability of above event given $\bbtheta^0 = \bbtheta_0$ is  $\delta/2$. 
Given the continuous time instant $t=k\eta$, there exists a unit vector, $\bbr$ in the direction connecting the domains $\mathcal{G}_i$ and $\mathcal{G}_{i+1}$.   
 and let $\bbtheta_{k\eta} \in \Omega_i^+ (\bar{\delta}) \cap \partial \mathcal{G}_{i+1}, \, \forall \bar{\delta} \in (0,\bar{\delta}_0)$,  the following events can happen with probabilities dependent on $\delta$:
\[
\bbtheta^k \begin{cases}
 \in\Omega_i^+ (\bar{\delta}) \cap \partial \mathcal{G}_{i+1}  & \textrm{with} \, \mathcal{P} ( \bbtheta^{k} \in \Omega_i^+ (\bar{\delta}) \cap \partial \mathcal{G}_{i+1}) \\
  \notin \Omega_i^+ (\bar{\delta}) \cap \partial \mathcal{G}_{i+1}:  \|\bbtheta^k -\bbtheta_{k\eta}\| >\epsilon & \textrm{with\, probability } \, <  \delta/2 \\
   \notin \Omega_i^+ (\bar{\delta}) \cap \partial \mathcal{G}_{i+1}:  \max \|\bbtheta^k -N_a \| < \epsilon & \textrm{with\, probability } \, < \delta/2 \\
\end{cases}
\]
The above case-by-case study illuminates that either $\bbtheta^{k} \in \Omega_i^+ (\bar{\delta}) \cap \partial \mathcal{G}_{i+1}$ or $\bbtheta^k \notin \Omega_i^+ (\bar{\delta}) \cap \partial \mathcal{G}_{i+1}$ (for which the probability adds up to $\delta$). Note that $\Omega_i^+ (\bar{\delta}) $ denotes $\bar{\delta}$- tubes outside of $\mathcal{G}_i$ (cf. \eqref{eqn:omegatube}). This fact allows us to write the probability that $\bbtheta_{k\eta} \in \Omega_i^+ (\bar{\delta}) \cap \partial \mathcal{G}_{i+1}$ as follows: 
\begin{align}
\mathcal{P}^{\bbtheta_0}( \bbtheta_{k\eta} \in \Omega_i^+ (\bar{\delta}) \cap \partial \mathcal{G}_{i+1}) \leq \mathcal{P}^{\bbtheta_0}( \bbtheta^{k} \in \Omega_i^+ (\bar{\delta}) \cap \partial \mathcal{G}_{i+1}) +\delta
\end{align}
 Evaluating the  above inequality  in the limit of $\epsilon\rightarrow 0$ yields 
 \begin{align} \label{thm4:intrm1}
\lim_{\epsilon \to 0} \mathcal{P}^{\bbtheta_0}( \bbtheta_{k\eta} \in \Omega_i^+ (\bar{\delta}) \cap \partial \mathcal{G}_{i+1}) \leq \lim_{\epsilon \to 0}  \mathcal{P}^{\bbtheta_0}( \bbtheta^{k} \in \Omega_i^+ (\bar{\delta}) \cap \partial \mathcal{G}_{i+1}) +\delta
\end{align}

Observe that the probability that $\bbtheta^k \notin \Omega_i^+ (\bar{\delta}) \cap \partial \mathcal{G}_{i+1}$ is ${\delta}$.  Note that the value of ${\delta}$ can be obtained analyzing KL-divergence between underlying distirbutions of  \eqref{eqn:SGD_L} and \eqref{eqn:SGDint}, the interpolated version of \eqref{eqn:SGD_L}. The consequence of the aforementioned analysis is an upper bound on the step-size, $\eta$ (See Assumption 6, \cite{simsekli2019tail}). Now the expression on the left-hand side of the above inequality is the probability that the process reaches to the intersection of  $\bar{\delta}$ tube  and $\mathcal{G}_{i+1}$ in the limit $\epsilon \to 0$.

Using Theorem \ref{th:21},  for every $\bar{\delta} \in  (0,a +\xi )$  the probabilities to exit in direction $\pm r$ are given by
\begin{align}
& \lim_{\epsilon \to 0} \mathcal{P} \left(  \Omega_i^+ (\bar{\delta}) \cap \partial \mathcal{G}_{i+1}  \right) =\frac{p_i^+}{p_s} 
\end{align}
 for all $\bbtheta_t \in \mathcal{G}_i$ 
and 
\begin{align}
& p_s:=  ((d_{ij}^+)^{-\alpha} + (-d_{ij}^-)^{-\alpha})\\
& p_i^+ :=  (d_{ij}^+)^{-\alpha} \\
& p_i^- :=(-d_{ij}^-)^{-\alpha},
\end{align}
where  $d_{ij}^+$ and $d_{ij}^-$ define distance from $\bbtheta_0$ to $ \Omega_i^+ (\bar{\delta}) \cap \partial \mathcal{G}_{i+1}$ along $\bbr$. Now using the above expression in \eqref{thm4:intrm1} yields lower bound on the transition probability from $\mathcal{G}_i$ to $\mathcal{G}_{i+1}$, $\mathcal{P}^{\bbtheta_0}( \bbtheta^{k} \in \Omega_i^+ (\bar{\delta}) \cap \partial \mathcal{G}_{i+1}) $ in the limit $\epsilon \to 0$
 \begin{align} \label{thm4:intrm2}
\lim_{\epsilon \to 0} \mathcal{P}^{\bbtheta_0}( \bbtheta^{k} \in \Omega_i^+ (\bar{\delta}) \cap \partial \mathcal{G}_{i+1}) \geq \frac{d_{ij}^{-\alpha}}{((d_{ij}^+)^{-\alpha} + (-d_{ij}^-)^{-\alpha}}  - \delta.
\end{align}

\section{Supplemental Experiments}\label{experiment_Details}

\subsection{Additional Experimental Instantiations}\label{subsec:experiments_supplement}
In this section, we   evaluate the performance of the proposed algorithm on more complex environments   from OpenAI gym and Roboschool, namely Inverted Pendulum from Roboschool and sparse versions of Roboschool REACHER-V2 \&
 HALFCHEETAH-V2. 
  \subsubsection{Inverted Pendulum}

Next, we evaluate the performance of HPG on Inverted Pendulum-v1 from Roboschool in Fig. \ref{fig:J4}.  The objective is to keep the pendulum balanced while keeping the cart upon which it is mounted away from the borders. Environment consists of a observation space of dimension $5$ and an action space of dimension $1$. Here we use a neural network with single hidden layer consisting of $64$ neurons. All the other parameters are same as given in the previous experiments.

\begin{figure*}[t!]
	\centering
	\subfigure[Inverted Pendulum]
	{\includegraphics[width=.33\columnwidth, height=3.5cm]{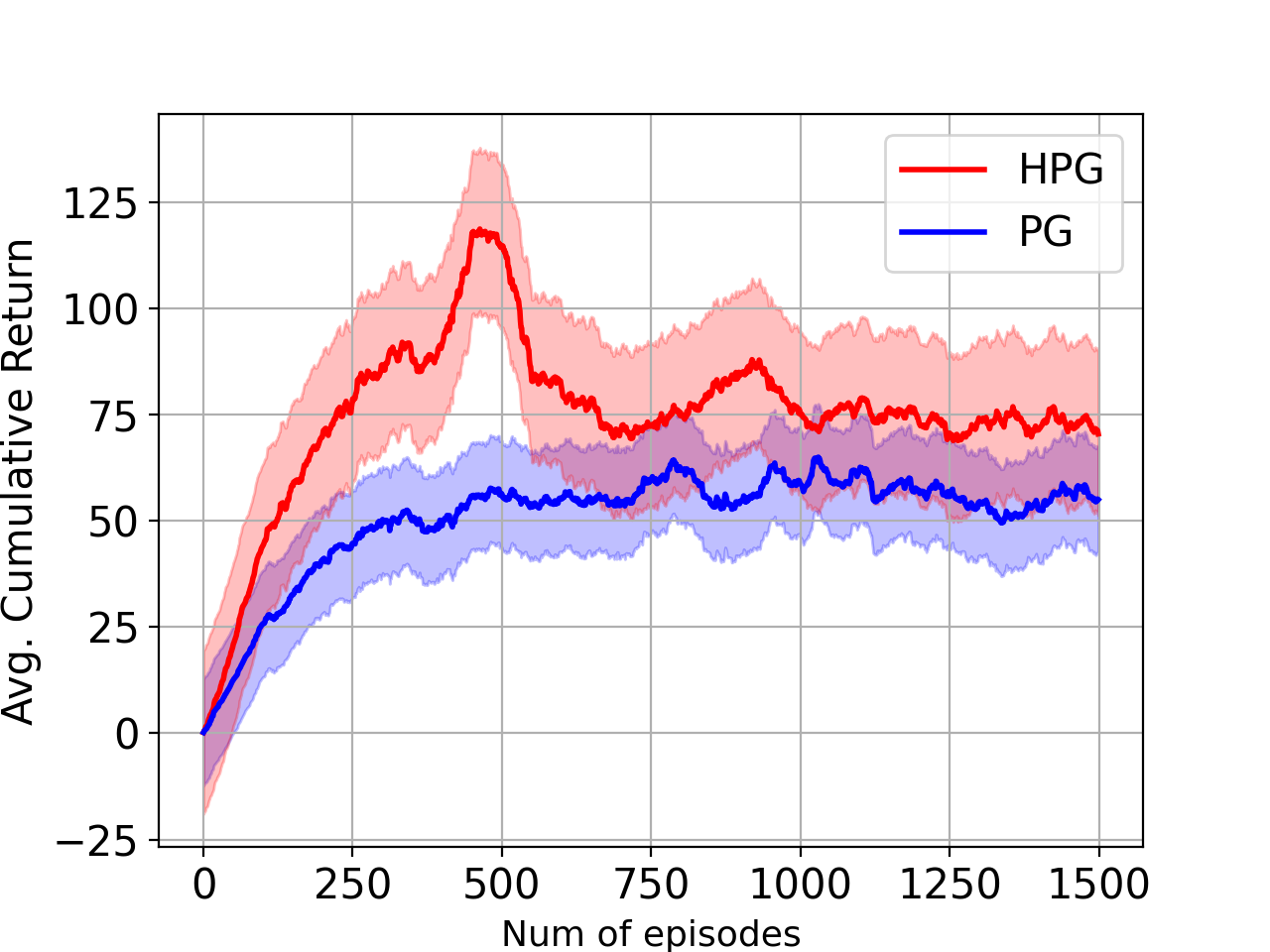} \label{fig:J4}}\hspace{-3mm}
		\subfigure[Sparse Half Chettah]
	{\includegraphics[width=.33\columnwidth, height=3.5cm]{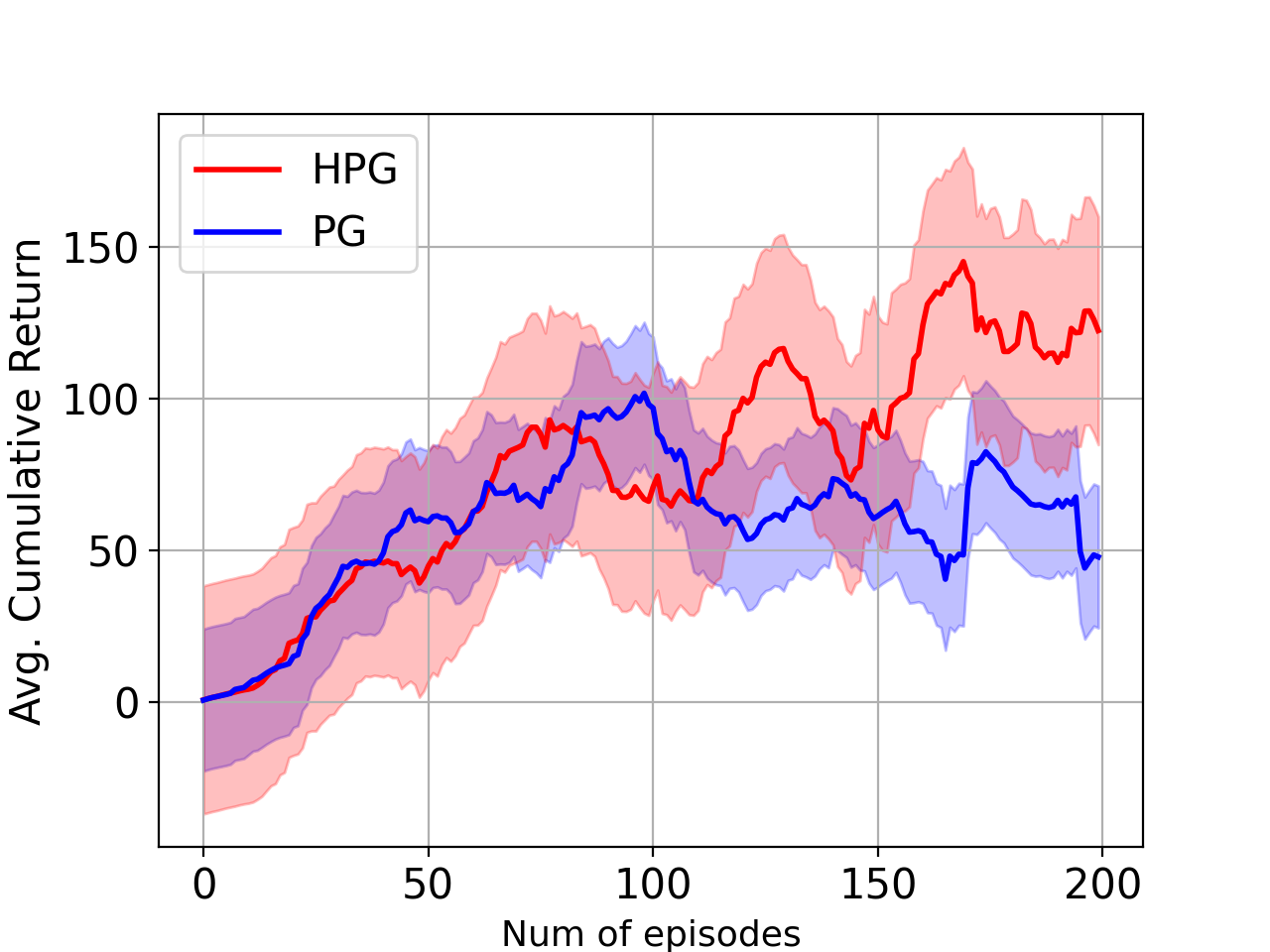} \label{fig:J5}}\hspace{-3mm}
	\subfigure[Sparse Reacher]
	{\includegraphics[width=.33\columnwidth, height=3.5cm]{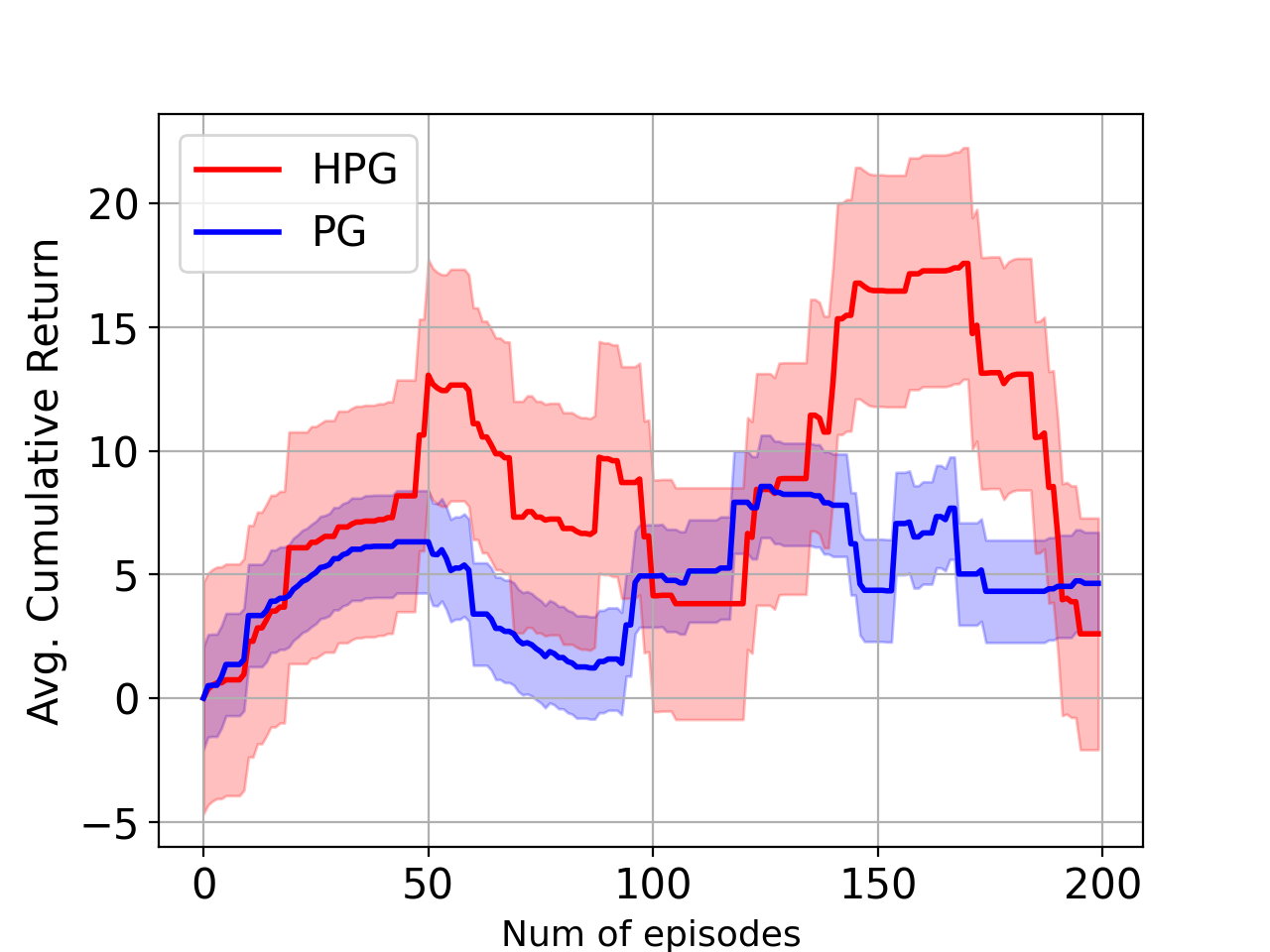}  \label{fig:J6}}
	\caption{ {\bf (a)} Average cumulative returns for Inverted Pendulum from Roboschool over latest $100$ episodes. HPG shows better ability to explore over light tailed policies. {\bf (b)} We plot the  initial average cumulative returns for sparse Chettah  over latest $25$ episodes. Observe that HPG better explores the environment and shows improved performance over PG. {\bf (c)}  We plot the initial average cumulative returns for sparse Reacher over latest $25$ episodes for HPG which shows  better ability to explore  over PG. For clarity of only distinguishing the performance differences associated with policy parameterization, we omit the additional comparators of Figure \ref{comparisons2}.
	}
	\label{comparisons6}
\end{figure*} 

\subsubsection{Sparse Variant of HalfCheetah}
The  environment (shown in Fig. \ref{cheetah}) consists  of planar biped robot in a  continuous environment with $\mathcal{S} \in \mathbb{R}^{26}$ and  $\mathcal{A} \in \mathbb{R}^6$.  \begin{figure}
	\centering
		\subfigure[HalfCheetah environment.]
	{\includegraphics[scale=0.4]{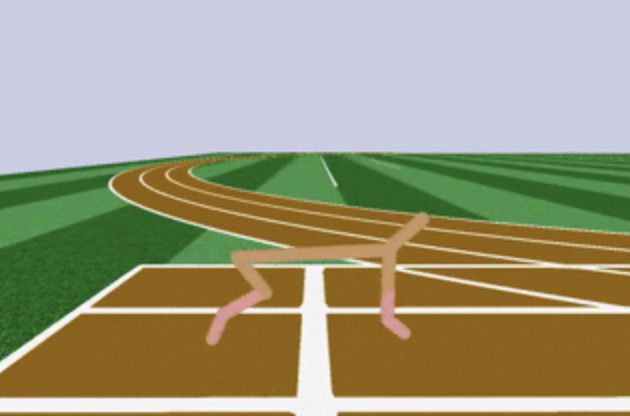}\label{cheetah}} \hspace{5mm}
			\subfigure[Reacher environment.]
	{\includegraphics[scale=0.4]{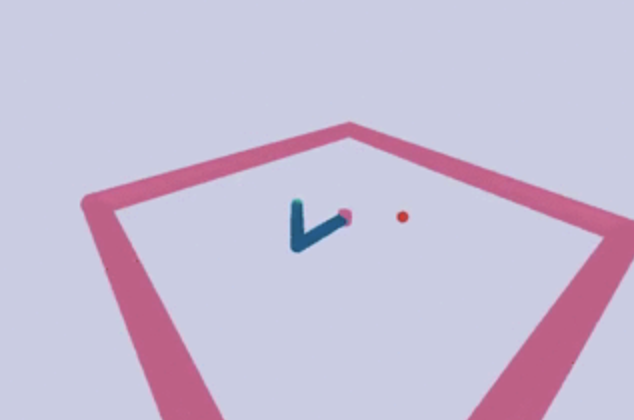} \label{racher}}	
	\caption{Additional environments}
\end{figure}   The task runs on the Roboschool simulator. In the task, the agent is a two-dimensional cheetah controlled by seven actuators.   
The state representation corresponds to the position and velocity of the cheetah, current angle and angular velocity of its joints as well as the torque on the actuators. The reward function in this environment is a mixture between the amount of contact the limbs have on the floor and the displacement of the agent from the starting position. The reward structure is made sparse by removing the control penalty to each actions. In addition, a step reward of 2 is provided when the x-component of  speed of the Cheetah is greater than 2 as in  \cite{matheron2019problem}. We plot the  initial average cumulative returns for sparse Chettah  over latest $25$ episodes in Fig. \ref{fig:J5}. Observe that HPG, by virtue of its extreme action selection according to a heavy-tailed distribution, causes more jumps in the state space, which results in better exploration of the environment. The result is faster policy improvement than vanilla PG.

 \subsubsection{Sparse Variant of Roboschool Reacher}
The  environment (shown in Fig. \ref{racher}) consists  of an robot arm in a 2D space which tries to  reach a  target.   Reacher is a continuous environment with $\mathcal{S} \in \mathbb{R}^9$ and  $\mathcal{A} \in \mathbb{R}^2$. A sparse variant of REACHER-V2 is  created by imposing a  step reward of 1 when the distance between the arm and the target is less than $0.06$, and $0$ otherwise \cite{matheron2019problem}. 
 The target is fixed to a position of $[0.19,  0]$ and  removed distance to the target from the observations.
In addition, the reward structure is  sparsified by removing the control penalty.  We implement the proposed algorithm using policies with $\alpha=1,2$. Here we choose the mode of the policy: $\mathcal{S} \to \mathcal{A}$ to be a neural network that has
two hidden layers. Each hidden layer consists of 10 neurons  with tanh being  the  activation function.  Other parameters of the Algorithm are  same as the previous experiments. Fig. \ref{fig:J6} shows that HPG, by virtue of its extreme action selection according to a heavy-tailed distribution, causes more jumps in the state space, which results in better exploration of the environment. The result is faster policy improvement than vanilla PG.

\end{document}